\newcommand{\cA}{\mathcal{A}}
\newcommand{\cS}{\mathcal{S}}
\newcommand{\cF}{\mathcal{F}}
\newcommand{\cM}{\mathcal{M}}
\newcommand{\cP}{\mathcal{P}}
\newcommand{\cX}{\mathcal{X}}
\newcommand{\expec}{\mathbb{E}}
\newcommand{\real}{\mathbb{R}}
\newcommand{\reals}{\mathbb{R}}
\newcommand{\norm}[1]{\left\|{#1}\right\|}
\newcommand{\abs}[1]{\left|{#1}\right|}
\newcommand*{\para}[1]{\left({#1}\right)}
\newcommand*{\infn}[1]{\left\|{#1}\right\|_{\infty}}
\DeclareMathOperator*{\argmin}{argmin}
\DeclareMathOperator*{\argmax}{argmax}
\newtheorem{theorem}{Theorem}
\newtheorem{lemma}{Lemma}
\newtheorem{fact}{Fact}
\title{Accelerating Value Iteration with Anchoring}
\author{
  Jongmin Lee${}^{1}$
  \qquad
  \qquad
  Ernest K. Ryu${}^{1,2}$
  % \\
  % {dlwhd2000@snu.ac.kr} 
  %   \qquad
  % \qquad
% {ernestryu@snu.ac.kr} 
  \\[0.2em]
    ${}^1$Department of Mathematical Science, Seoul National University\\
${}^2$Interdisciplinary Program in Artificial Intelligence, Seoul National University
  %\thanks{Use footnote for providing further information about author (webpage, alternative address)---\emph{not} for acknowledging funding agencies.} \
%   Seoul, Korea \\
  % {dlwhd2000@snu.ac.kr} 
  % examples of more authors
   %\thanks{Use footnote for providing further information about author (webpage, alternative address)---\emph{not} for acknowledging    funding agencies.} 
}
\begin{document}

\maketitle

\begin{abstract}
Value Iteration (VI) is foundational to the theory and practice of modern reinforcement learning, and it is known to converge at a $\mathcal{O}(\gamma^k)$-rate, where $\gamma$ is the discount factor. Surprisingly, however, the optimal rate in terms of Bellman error for the VI setup was not known, and finding a general acceleration mechanism has been an open problem. In this paper, we present the first accelerated VI for both the Bellman consistency and optimality operators. Our method, called Anc-VI, is based on an \emph{anchoring} mechanism (distinct from Nesterov's acceleration), and it reduces the Bellman error faster than standard VI. In particular, Anc-VI exhibits a $\mathcal{O}(1/k)$-rate for $\gamma\approx 1$ or even $\gamma=1$, while standard VI has rate $\mathcal{O}(1)$ for $\gamma\ge 1-1/k$, where $k$ is the iteration count. We also provide a complexity lower bound matching the upper bound up to a constant factor of $4$, thereby establishing optimality of the accelerated rate of Anc-VI. Finally, we show that the anchoring mechanism provides the same benefit in the approximate VI and Gauss--Seidel VI setups as well.
\end{abstract}

\section{Introduction}

Value Iteration (VI) is foundational to the theory and practice of modern dynamic programming (DP) and reinforcement learning (RL). It is well known that when a discount factor $\gamma<1$ is used, (exact) VI is a contractive iteration in the $\infn{\cdot}$-norm and therefore converges. The progress of VI is measured by the Bellman error in practice (as the distance to the fixed point is not computable), and much prior work has been dedicated to analyzing the rates of convergence of VI and its variants.

Surprisingly, however, the optimal rate in terms of Bellman error for the VI setup was not known, and finding a general acceleration mechanism has been an open problem. The classical $\mathcal{O}(\gamma^k)$-rate of VI is inadequate as many practical setups use $\gamma\approx1$ or $\gamma=1$ for the discount factor. (Not to mention that VI may not converge when $\gamma=1$.) Moreover, most prior works on accelerating VI focused on the Bellman consistency operator (policy evaluation) as its linearity allows eigenvalue analyses, but the Bellman optimality operator (control) is the more relevant object in modern RL.

\paragraph{Contribution.}
In this paper, we present the first accelerated VI for both the Bellman consistency and optimality operators. Our method, called  \ref{eq:anc-vi}, is based on an ``anchoring'' mechanism (distinct from Nesterov's acceleration), and it reduces the Bellman error faster than standard VI. In particular,  \ref{eq:anc-vi} exhibits a $\mathcal{O}(1/k)$-rate for $\gamma\approx 1$ or even $\gamma=1$, while standard VI has rate $\mathcal{O}(1)$ for $\gamma\ge 1-1/k$, where $k$ is the iteration count. We also provide a complexity lower bound matching the upper bound up to a constant factor of $4$, thereby establishing optimality of the accelerated rate of \ref{eq:anc-vi}. Finally, we show that the anchoring mechanism provides the same benefit in the approximate VI and Gauss--Seidel VI setups as well.

% where such choice is occasionally necessary in reinforcement learning setup.under the mild condition, Anc-VI also converges to a fixed point for the case $\gamma=1$.

%Theoretically, Anchored VI is optiaml method up to constant.

\subsection{Notations and preliminaries}

We quickly review basic definitions and concepts of Markov decision processes 
 (MDP) and reinforcement learning (RL). For further details, refer to standard references such as \cite{10.5555/528623, szepesvarialgorithms, sutton2018reinforcement}. 

\paragraph{Markov Decision Process.}
Let $\cM(\cX)$ be the space of probability distributions over $\cX$. Write $(\cS, \cA, P, r, \gamma)$ to denote the MDP with state space $\cS$, action space $\cA$, transition probability $P\colon \cS \times \cA \rightarrow \cM(\cS)$, reward $r\colon  \cS \times \cA \rightarrow \real$, and discount factor $\gamma \in (0,1]$. Denote $\pi\colon \cS \rightarrow \cM(\cA)$ for a policy, $V^{\pi}(s)=\expec_{\pi}[\sum^{\infty}_{t=0} \gamma^t r(s_t, a_t) \,|\, s_0=s]$ and $Q^{\pi}(s, a)=\expec_{\pi}[\sum^{\infty}_{t=0} \gamma^t r(s_t, a_t) \,|\, s_0=s, a_0=a]$ for $V$- and $Q$-value functions,  where $\expec_{\pi}$ denotes the expected value over all trajectories $(s_0, a_0, s_1, a_1, \dots)$ induced by $P$ and $\pi$. We say $V^{\star}$ and $Q^{\star}$ are optimal $V$- and $Q$- value functions if $V^{\star}=\sup_{\pi}V^{\pi}$and $Q^{\star}=\sup_{\pi}Q^{\pi}$. We say $\pi_V^{\star}$ and $ \pi_Q^{\star}$ are optimal policies if $\pi_V^{\star}=\argmax_{\pi}{V^{\pi}}$ and $ \pi_Q^{\star}= \argmax_{\pi}{Q^{\pi}}$. (If argmax is not unique, break ties arbitrarily.) 
 
 \paragraph{Value Iteration.}
  Let $\cF(\cX)$ denote the space of bounded measurable real-valued functions over $\cX$. With the given MDP $(\cS, \cA, P, r, \gamma)$, for $V \in \cF(\cS)$ and $Q \in \cF(\cS \times \cA)$, define the Bellman consistency operators $T^{\pi}$ as
\begin{align*}
T^{\pi}V(s)&=\mathbb{E}_{a \sim \pi(\cdot\,|\,s), s'\sim P(\cdot\,|\,s,a) }\left[r(s,a)+\gamma V(s')\right],\\
T^{\pi}Q(s,a)&=r(s,a)+\gamma\mathbb{E}_{s'\sim P(\cdot\,|\,s,a),a' \sim \pi(\cdot\,|\,s')}\left[Q(s',a')\right]
\end{align*}
for all $s \in \cS, a \in \cA$, and the Bellman optimality operators $T^{\star}$ as
\begin{align*}
T^{\star}V(s)&=\sup_{a \in \cA} \left\{r(s,a)+\gamma\mathbb{E}_{s'\sim P(\cdot\,|\,s,a) }\left[V(s')\right]\right\},\\
T^{\star}Q(s,a)&=r(s,a)+\gamma\mathbb{E}_{s'\sim P(\cdot\,|\,s,a)}\left[\sup_{a' \in \cA} Q(s',a')\right]
\end{align*}
for all $s \in \cS, a \in \cA$. For notational conciseness, we write $T^{\pi}V=r^{\pi}+\gamma\cP^{\pi}V$ and $T^{\pi}Q=r+\gamma\cP^{\pi}Q$, where $r^{\pi}(s)=\mathbb{E}_{a \sim \pi(\cdot\,|\,s) }\left[r(s,a)\right]$ is the reward induced by policy $\pi$ and $\cP^\pi(s)$ and $\cP^\pi(s,a)$ defined as
\begin{align*}
    \cP^{\pi}(s\rightarrow s')&=
\mathrm{Prob}(s\rightarrow s'\,|\,
a \sim \pi(\cdot\,|\,s), s'\sim P(\cdot\,|\,s,a))
\\
\cP^{\pi}((s,a)\rightarrow (s',a'))&=
\mathrm{Prob}((s,a)\rightarrow (s',a')\,|\,
s'\sim P(\cdot\,|\,s,a),a' \sim \pi(\cdot\,|\,s')),
\end{align*}
are the transition probabilities induced by policy $\pi$. We define VI for Bellman consistency and optimality operators as 
\[V^{k+1}=T^{\pi}V^{k}, \quad Q^{k+1}=T^{\pi}Q^{k}, \quad V^{k+1}=T^{\star}V^{k}, \quad Q^{k+1}=T^{\star}Q^{k} \qquad\text{ for } k=0,1,\dots,\]
 where $V^0, Q^0$ are initial points.
VI for control, after executing $K$ iterations, returns the near-optimal policy $\pi_K$ as a greedy policy satisfying \[
T^{\pi_K}V^K= T^{\star}V^K,\quad T^{\pi_K}Q^K= T^{\star}Q^K.
\]
 % We also denote VI for Bellman consistency and optimality operators as VI for policy evaluation and control, by convention.
 For $\gamma<1$, both Bellman consistency and optimality operators are contractions, and, by Banach's fixed-point theorem \cite{banach1922operations}, the VIs converge to the unique fixed points $V^{\pi}$, $Q^{\pi}, V^{\star}$, and $Q^{\star}$ with $\mathcal{O}(\gamma^k)$-rate. For notational unity, we use the symbol $U$ when both $V$ and $Q$ can be used.
 Since $\infn{TU^k-U^k} \le \infn{TU^k -U^\star}+\infn{U^k -U^\star} \le (1+\gamma)\infn{U^k-U^{\star}}$,  VI exhibits the rate on the Bellman error:
\begin{align*}
    \infn{TU^k-U^k} &\le (1+\gamma)\gamma^k\infn{U^0-U^{\star}} \qquad\text{ for } k=0,1,\dots,  \tag{$1$} \label{eq:rate-vi-1}
\end{align*} 
% and if $U^0 \le TU^0$ or $U^0 \ge TU^0$, by Lemma X in Appendix, VI exhibits the rate on the Bellman error:
% \begin{align*}
%     \infn{TU^k-U^k} &\le \gamma^k\infn{U^0-U^{\star}} \qquad\text{ for } k=0,1,\dots,  \tag{$2$} \label{eq:rate-vi-2}
% \end{align*} 
where $T$ is Bellman consistency or optimality operator, $U^0$ is a starting point, and $U^{\star}$ is fixed point of $T$. We say $V \le V'$ or $Q\le Q'$ if $V(s)\le V'(s)$ or $Q (s,a)\le Q'(s,a)$ for all $s \in \cS$ and $a \in \cA$, respectively.

\paragraph{Fixed-point iterations.}
Given an operator $T$, we say $x^{\star}$ is fixed point if $Tx^{\star}=x^{\star}$. 
Since Banach \cite{banach1922operations}, the standard fixed-point iteration 
\[x^{k+1}=Tx^{k}\qquad\text{ for } k=0,1,\dots\]
has been commonly used to find fixed points.
Note that VI for policy evaluation and control are fixed-point iterations with Bellman consistency and optimality operators.
In this work, we also consider the Halpern iteration
\[ x^{k+1}=\beta_{k+1} x^0+ (1-\beta_{k+1}) Tx^{k}
\qquad\text{ for } k=0,1,\dots,\]
where $x^0$ is an initial point and $\{\beta_k\}_{k \in \mathbf{N}} \in (0,1)$.

\subsection{Prior works}

 \paragraph{Value Iteration.} Value iteration (VI) was first introduced in the DP literature \cite{bellman1957markovian} for finding optimal value function, and its variant approximate VI \cite{bertsekas1995neuro, ernst2005approximate, munos2005error, farahmand2010error, de2000existence, van2006performance, sutton2018reinforcement} considers approximate evaluations of the Bellman optimality operator. In RL, VI and approximate VI have served as the basis of RL algorithms such as fitted value iteration \cite{ernst2005tree, munos2008finite, massoud2009regularized, tosatto2017boosted, Lutter2021ValueII, gordon1995stable} and temporal difference learning \cite{sutton1988learning, van2016deep, hessel2018rainbow, watkins1992q, mnih2015human}. There is a line of research that emulates VI by learning a model of the MDP dynamics \cite{tamar2016value, sykora2020multi, niu2018generalized} and applying a modified Bellman operator \cite{bellemare2016increasing, fellows2021bayesian}. Asynchronous VI, another variation of VI updating the coordinate of value function in asynchronous manner, has also been studied in both RL and DP literature \cite{bertsekas1995neuro, bertsekas2015parallel, tsitsiklis1994asynchronous, zeng2020asyncqvi}.

\paragraph{Fixed-point iterations.}
The Banach fixed-point theorem \cite{banach1922operations} establishes the convergence of the standard fixed-point iteration with a contractive operator. 
%As generalization of Picard iteration, Kransnosel'skii-Mann iteration (KM) \cite{mann1953mean, krasnosel1955two} was introduced and its convergence with general nonexpansive operators was shown by \cite{Martinet1970ppm}. 
The Halpern iteration \cite{halpern1967fixed} converges for \textit{nonexpansive} operators on Hilbert spaces \cite{wittmann1992approximation} and uniformly smooth Banach spaces \cite{reich1980strong, xu2002iterative}.
(To clarify, the $\|\cdot\|_\infty$-norm in $\mathbb{R}^n$ is not uniformly smooth.)

The fixed-point residual $\norm{Tx_k-x_k}$ is a commonly used error measure for fixed-point problems. In general normed spaces, 
%KM iteration with nonexpansive opeator was proven to exhibit $\mathcal{O}(1/\sqrt{k})$-rate \cite{baillon1992optimal, cominetti2014rate, bravo2018sharp}. For 
the Halpern iteration was shown to exhibit $\mathcal{O}(1/\log(k))$-rate for (nonlinear) nonexpansive operators \cite{leustean2007rates} and $\mathcal{O}(1/k)$-rate for linear nonexpansive operators \cite{contreras2022optimal} on the fixed-point residual.
In Hilbert spaces,
%KM iteration with nonexpansive operator was shown to exhibit $o(1/\sqrt{k})$-rate by \cite{ matsushita2017convergence}. %$\mathcal{O}(1/k)$-rate of Halpern iteration was proven by \cite{kohlenbach2011quantitative}.
\cite{sabach2017first} first established a $\mathcal{O}(1/k)$-rate for the Halpern iteration and the constant was later improved by \cite{Lieder2021halpern, kim2021accelerated}. For contractive operators, \cite{park2022exact} proved exact optimality of Halpern iteration through an exact matching complexity lower bound. 
 
% We discuss the relationship of our work with \cite{park2022exact} and
% \cite{contreras2022optimal}. \cite{park2022exact} presented \emph{OC-Halpern}, exact optimal method in $\real^n$ with Euclidean norm. Anc-VI and OC-Halpern share same parameters but Anc-VI iterates in $\real^n$ with $\infn{\cdot}$. Therefore, convergence analyses for OC-Halpern could not be applied for Anc-VI. \cite{contreras2022optimal}
% proved $\mathcal{O}(1/k)$ convergence rate of Halpern iteration with nonexpansive linear operator and proposed complexity lower bound in general normed space. Some result of \cite{contreras2022optimal} holds for nonexpansive Bellman consistency operator, but in this work, we mainly focus on $\gamma$-contractive operator and our results also hold for Bellman optimality operator which is non linear operator.

\paragraph{Acceleration.}
Since Nesterov's seminal work \cite{nesterov1983method}, there has been a large body of research on acceleration in convex minimization. Gradient descent \cite{cauchy1847methode} can be accelerated to efficiently reduce function value and squared gradient magnitude for smooth convex minimization problems \cite{nesterov1983method,kim2016optimized, kim2021optimizing, lee2021geometric, zhou2022practical, doi:10.1137/21M1395302, nesterov2021primal} and smooth strongly convex minimization problems \cite{nesterov2003Introductory, van2017fastest, park2021factor, taylor2021optimal, salim2021optimal}. Motivated by Nesterov acceleration, inertial fixed-point iterations \cite{mainge2008convergence, dong2018modified, shehu2018convergence, reich1980strong, iutzeler2019generic} have also
been suggested to accelerate fixed-point iterations. Anderson acceleration \cite{anderson1965}, another acceleration scheme
for fixed-point iterations, has recently been studied with interest \cite{barre2020convergence, scieur2020regularized, walker2011anderson, zhang2020globally}. 

In DP and RL, prioritized sweeping \cite{moore1993prioritized} is a well-known method that changes the order of updates to accelerate convergence, and several variants \cite{peng1993efficient, mcmahan2005fast,wingate2005prioritization, andre1997generalized, dai2011topological} have been proposed. Speedy Q-learning \cite{azar2011speedy} modifies the update rule of Q-learning and uses aggressive learning rates for acceleration. 
% \cite{shlakhter2010acceleration} suggested variants of VI using class of operators satisfying $Zv \le v$ for all $v$, and experimentally show acceleration. 
Recently, there has been a line of research that applies acceleration techniques of other areas to VI: \cite{geist2018anderson, sun2021damped, ermis2020a3dqn, park2022anderson, ermis2021anderson1, shi2019regularized} uses Anderson acceleration of fixed-point iterations, \cite{vieillard2020momentum, goyal2022first, grand2021convex, pmlr-v161-bowen21a, doi:10.1137/20M1367192} uses Nesterov acceleration of convex optimization, and \cite{farahmand2021pid} uses ideas inspired by PID controllers in control theory. Among those works, \cite{goyal2022first, grand2021convex, doi:10.1137/20M1367192} applied Nesterov acceleration to obtain theoretically accelerated convergence rates, but those analyses require certain reversibility conditions or restrictions on eigenvalues of the transition probability induced by the policy.    

The \emph{anchor acceleration}, a new acceleration mechanism distinct from Nesterov's, lately gained attention in convex optimization and fixed-point theory. The anchoring mechanism, which retracts iterates towards the initial point, has been used to accelerate algorithms for minimax optimization and fixed-point problems \cite{ryu2019ode, lee2021fast, yoon2021accelerated, park2022exact, kim2021accelerated, diakonikolas2020halpern, yoon2022accelerated, suh2023continuous}, and we focus on it in this paper.

\paragraph{Complexity lower bound.}
With the information-based
complexity analysis \cite{nemirovski1992information}, complexity lower bound on first-order methods for convex minimization problem has been thoroughly studied \cite{nesterov2003Introductory, drori2017exact, drori2022oracle, carmon2020stationary1, carmon2021stationary2, drori2020stoc-complexity}. If a complexity lower bound
matches an algorithm's convergence rate, it establishes optimality
of the algorithm \cite{nemirovski1992information, kim2016optimized,salim2021optimal, taylor2021optimal, drori2016optimal-kelley, park2022exact}. In fixed-point problems, \cite{colao2021rate} established $\Omega(1/k^{1-\sqrt{2/q}})$ lower bound on distance to solution for Halpern iteration with a nonexpansive operator in $q$-uniformly smooth Banach spaces. In \cite{contreras2022optimal}, a  $\Omega(1/k)$ lower bound on the fixed-point residual for the general Mann iteration with a nonexpansive linear operator, which includes standard fixed-point iteration and Halpern iterations, in the  $\ell^{\infty}$-space was provided.  In Hilbert spaces, 
%\cite{doi:10.1137/21M1395302} proved $\Omega\para{1/\sqrt{k}}$ lower bound for the KM iteration among 1-SCLI algorithms \cite{arjevani2016lower}. 
\cite{park2022exact} showed exact complexity lower bound on fixed-point residual for deterministic fixed-point iterations with $\gamma$-contractive and nonexpansive operators. Finally, \cite{goyal2022first} provided lower bound on distance to optimal value function for fixed-point iterations satisfying span condition with Bellman consistency and optimality operators and we discussed this lower bound in section \ref{sec::complexity}. 

\section{Anchored Value Iteration}\label{sec::Anc-VI}

Let $T$ be a $\gamma$-contractive (in the $\|\cdot\|_\infty$-norm) Bellman consistency or optimality operator. The \emph{Anchored Value Iteration} \eqref{eq:anc-vi} is
\begin{align}
   U^k=\beta_kU^0+(1-\beta_k)TU^{k-1}\qquad \tag{Anc-VI}
   \label{eq:anc-vi}
\end{align} 
for $k=1,2,\dots,$ where $\beta_k=1/(\sum_{i=0}^k \gamma^{-2i})$ and $U^0$ is an initial point.  In this section, we present accelerated convergence rates of \ref{eq:anc-vi} for \emph{both} Bellman consistency and optimality operators for both $V$- and $Q$-value iterations.
For the control setup, where the Bellman optimality operator is used, \ref{eq:anc-vi} returns the near-optimal policy $\pi_K$ as a greedy policy satisfying $T^{\pi_K}U^K= T^{\star}U^K$ after executing $K$ iterations.

% Of course, the standard VI has the form $U^k=TU^{k-1} $, 

Notably, \ref{eq:anc-vi} obtains the next iterate as a convex combination between the output of $T$ and the starting point $U^0$.
We call the $\beta_k U_0$ term the \emph{anchor term} since, loosely speaking,  it serves to pull the iterates toward the starting point $U_0$. The strength of the anchor mechanism diminishes as the iteration progresses since $\beta_k$ is a decreasing sequence.

% The anchor mechanism was introduced in prior \cite{halpern1967fixed,sabach2017first,Lieder2021halpern,park2022exact, contreras2022optimal, leustean2007rates} for $\|\cdot\|_2$-nonexpansive and contractive operators, where $\|\cdot\|_2$ denotes the Euclidean norm, and $\|\cdot\|_{\infty}$-nonexpansive operators. While our anchor mechanism does bear a formal resemblance to those of prior works, our convergence rates are neither a direct application nor a direct adaptation of the prior convergence analyses. The prior analyses apply to generic $\|\cdot\|_2$-nonexpansive or contractive operators and $\|\cdot\|_\infty$-nonexpansive operators. Our analyses do not apply to generic non-Bellman $\|\cdot\|_\infty$-nonexpansive or contractive operators. Rather, we specifically utilize the structure of Bellman operators to obtain the rates specifically for Bellman operators.
The anchor mechanism was introduced \cite{halpern1967fixed,sabach2017first,Lieder2021halpern,park2022exact, contreras2022optimal, leustean2007rates} for general nonexpansive operators and $\|\cdot\|_2$-nonexpansive and contractive operators. The optimal method for $\|\cdot\|_2$-nonexpansive and contractive operators in \cite{park2022exact} shares the same coefficients with  \ref{eq:anc-vi}, and convergence results for general nonexapnsive operators in \cite{contreras2022optimal, leustean2007rates} are applicable to  \ref{eq:anc-vi} for nonexpansive Bellman optimality and consistency operators. While our anchor mechanism does bear a formal resemblance to those of prior works, our convergence rates and point convergence are neither a direct application nor a direct adaptation of the prior convergence analyses. The prior analyses for $\|\cdot\|_2$-nonexpansive and contractive operators do not apply to Bellman operators, and prior analyses for general nonexpansive operators have slower rates and do not provide point convergence while our Theorem~\ref{thm::nonexp_finite} does. Our analyses specifically utilize the structure of Bellman operators to obtain the faster rates and point convergence. 

The accelerated rate of \ref{eq:anc-vi} for the Bellman \emph{optimality} operator is more technically challenging and is, in our view, the stronger contribution. However, we start by presenting the result for the Bellman \emph{consistency} operator because it is commonly studied in the prior RL theory literature on accelerating value iteration \cite{goyal2022first, grand2021convex, doi:10.1137/20M1367192, farahmand2021pid} and because the analysis in the Bellman consistency setup will serve as a good conceptual stepping stone towards the analysis in the Bellman optimality setup.

\subsection{Accelerated rate for Bellman consistency operator}
First, for general state-action spaces, we present the accelerated convergence rate of \ref{eq:anc-vi} for the Bellman consistency operator.

 \begin{theorem}\label{thm::Anc-VIE}
Let $0<\gamma<1$ be the discount factor and $\pi$ be a policy.
Let $T^{\pi}$ be the Bellman consistency operator for $V$ or $Q$. Then,
  \ref{eq:anc-vi} exhibits the rate 
\begin{align*}
     \infn{T^{\pi}U^k-U^k} &\le \frac{\para{\gamma^{-1}-\gamma}\para{1+2\gamma-\gamma^{k+1}}}{\para{\gamma^{k+1}}^{-1}-\gamma^{k+1}}\infn{U^0-U^{\pi}}\\&= \para{\frac{2}{k+1}+\frac{k-1}{k+1}\epsilon+O(\epsilon^2)} \infn{U^0-U^{\pi}}
     \qquad\text{ for }k=0,1,\dots,
%  \\
% \|U^{\star}-x^n\| &\le \frac{1}{\sum_{k=0}^n \gamma^{k}}\left(1+\frac{1}{\gamma}\right)\left(1+\frac{2}{\gamma}\right)\left(1-\frac{1}{\gamma}\right)^{-1}\|x^0-U^{\star}\|.
 \end{align*}
where $\epsilon = 1-\gamma$ and the big-$\mathcal{O}$ notation considers the limit $\epsilon\rightarrow 0$.
If, furthermore, $U^0 \le T^{\pi}U^0$ or $U^0\ge T^{\pi}U^0$, then \ref{eq:anc-vi} exhibits the rate 
\begin{align*}
     \infn{T^{\pi}U^k-U^k} &\le \frac{\para{\gamma^{-1}-\gamma}\para{1+\gamma-\gamma^{k+1}}}{\para{\gamma^{k+1}}^{-1}-\gamma^{k+1}}\infn{U^0-U^{\pi}}\\&= \para{\frac{1}{k+1}+\frac{k}{k+1}\epsilon+O(\epsilon^2)} \infn{U^0-U^{\pi}}
     \qquad\text{ for }k=0,1,\dots.
     %\quad \text{if}\,\,  T^{\pi}U^0\ge U^0 \,\,\text{or}\,\,T^{\pi}U^0\le U^0 .
%  \\
% \|U^{\star}-x^n\| &\le \frac{1}{\sum_{k=0}^n \gamma^{k}}\left(1+\frac{1}{\gamma}\right)\left(1+\frac{2}{\gamma}\right)\left(1-\frac{1}{\gamma}\right)^{-1}\|x^0-U^{\star}\|.
 \end{align*}
\end{theorem}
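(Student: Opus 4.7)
The plan is to exploit the affinity of the Bellman consistency operator $T^{\pi}V = r^{\pi} + \gamma \cP^{\pi} V$. Writing $A := \gamma \cP^\pi$ (so $\|A^i\|_\infty \le \gamma^i$) and $R^k := T^\pi U^k - U^k$, the affinity of $T^\pi$ gives $T^\pi U^k = \beta_k T^\pi U^0 + (1-\beta_k) T^\pi(T^\pi U^{k-1})$, and subtracting $U^k$ and simplifying (using that $T^\pi W - W = A R^{k-1}$ when $W = T^\pi U^{k-1}$) yields the linear recursion
\begin{align*}
R^k = \beta_k R^0 + (1-\beta_k) A R^{k-1}.
\end{align*}
Unrolling produces $R^k = \sum_{i=0}^k \lambda_i^{(k)} A^i R^0$; the specific schedule $\beta_k = 1/S_k$ with $S_k := \sum_{i=0}^k \gamma^{-2i}$ is tuned so that the identity $1 - \beta_k = \gamma^{-2} S_{k-1}/S_k$ forces every coefficient to collapse to the uniform value $\lambda_i^{(k)} = \gamma^{-2i}/S_k$. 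Substituting $R^0 = (A-I)E$ with $E := U^0 - U^\pi$ and telescoping via $\gamma^{-2(i-1)} - \gamma^{-2i} = -(1-\gamma^2)\gamma^{-2i}$ gives the working expression
\begin{align*}
R^k = \frac{1}{S_k}\Bigl[\gamma^{-2k} A^{k+1} E - E - (1-\gamma^2)\sum_{i=1}^k \gamma^{-2i} A^i E\Bigr].
\end{align*}

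For the general bound, I apply the triangle inequality and $\|A^i\|_\infty \le \gamma^i$ term by term, evaluate the geometric sum $(1-\gamma^2)\sum_{i=1}^k \gamma^{-i} = (1+\gamma)(\gamma^{-k}-1)$, and simplify using $1/S_k = (1-\gamma^2)/(\gamma^{-2k}-\gamma^2)$; clearing a $\gamma^k$ from numerator and denominator reproduces the stated closed form, and the asymptotic $\tfrac{2}{k+1} + \tfrac{k-1}{k+1}\epsilon + O(\epsilon^2)$ then follows from a routine Taylor expansion around $\gamma = 1$.

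For the sharper one-sided bound, I first note that $U^0 \le T^\pi U^0$ together with monotonicity and contraction of $T^\pi$ forces the orbit $(T^\pi)^n U^0$ to increase to $U^\pi$, so $E \le 0$; the case $U^0 \ge T^\pi U^0$ is symmetric. Because $A$ preserves the nonnegative cone entrywise, in the telescoped expression the term $\gamma^{-2k} A^{k+1} E$ has the \emph{opposite} sign to the other two, and the recursion $R^k = \beta_k R^0 + (1-\beta_k)A R^{k-1}$ propagates single-signedness so that $R^k$ itself is single-signed. Taking the $\infty$-norm then lets me drop the wrong-signed term rather than absorb it via the triangle inequality; this removes exactly a $\gamma^{-k+1}\|E\|_\infty/S_k$ contribution, which is precisely the gap between $(1+2\gamma - \gamma^{k+1})$ and $(1+\gamma-\gamma^{k+1})$ in the two stated bounds. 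The step I anticipate as the main obstacle is the coefficient-collapse identity $\lambda_i^{(k)} = \gamma^{-2i}/S_k$: this exact tuning is what makes the later telescoping land on a clean geometric series, and any miscalibration of $\beta_k$ would spoil both the algebraic cleanness and the sharp constants in the final rate.
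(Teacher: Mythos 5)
Your proposal is correct and follows essentially the same route as the paper's proof: the recursion $R^k=\beta_k R^0+(1-\beta_k)\gamma\cP^{\pi}R^{k-1}$ unrolled into powers of $\gamma\cP^{\pi}$ acting on $U^0-U^{\pi}$ is exactly the identity the paper establishes by induction (your collapsed coefficients $\gamma^{-2i}/S_k$ coincide with its $\para{\beta_i-\beta_{i-1}(1-\beta_i)}\Pi_{j}(1-\beta_j)$ bookkeeping), and the sharper rate is obtained in both cases by using $U^0\le U^{\pi}$ (resp.\ $\ge$), the sign of $T^{\pi}U^k-U^k$ propagated through the iteration, and monotonicity of $\cP^{\pi}$ to drop the single wrong-signed $(\gamma\cP^{\pi})^{k+1}$ term before taking the $\|\cdot\|_\infty$-norm. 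Your streamlined derivation of the coefficients is a cleaner presentation of the same argument, not a different one.
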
  
  %If $n \ge \frac{\log{\para{2\gamma-1}}}{\log{\gamma}}$, the theoretical convergence rate of Anc-VI, is faster than rate of VI for $\gamma > \frac{1}{2}$.
  If $\gamma \ge \frac{1}{2}$, both rates of Theorem~\ref{thm::Anc-VIE} are strictly faster than the standard rate \eqref{eq:rate-vi-1} of VI, since 
  \begin{align*}
      \frac{\para{\gamma^{-1}-\gamma}\para{1+2\gamma-\gamma^{k+1}}}{\para{\gamma^{k+1}}^{-1}-\gamma^{k+1}}=\gamma^k \frac{\para{1-\gamma^2}\para{1+2\gamma-\gamma^{k+1}}}{\para{1-\gamma^{2k+2}}} < \gamma^k (1+\gamma)
  .
  \end{align*}
  %However, tight upper bound of Anc-VI, $ \frac{\gamma^{k}\para{1+\gamma}\para{1+2\gamma-\gamma^{k+1}}}{\para{\sum_{i=0}^k \gamma^{i}}\para{1+\gamma^{k+1}}}\infn{U^0-U^{\pi}}$, is faster than VI as long as $\gamma \ge 1/2$. 
  %If $\gamma \ge (\sqrt{5}-1)/2$ and $k \ge \frac{\log{\para{\gamma^2+\gamma-1}}}{2\log{\gamma}}-\frac{1}{2}$, 
  The second rate of Theorem~\ref{thm::Anc-VIE}, which has the additional requirement, is faster than the standard rate \eqref{eq:rate-vi-1} of VI for all $0<\gamma<1$. 
  Interestingly, in the  $\gamma\approx1$ regime, \ref{eq:anc-vi} achieves $\mathcal{O}(1/k)$-rate while VI has a $\mathcal{O}(1)$-rate. We briefly note that the condition $U^0 \le TU^0$ and $U^0 \ge TU^0$ have been used in analyses of variants of VI \protect{\cite[Theorem~6.3.11]{10.5555/528623}, \cite[p.3]{shlakhter2010acceleration}}.

In the following, we briefly outline the proof of Theorem~\ref{thm::Anc-VIE} while deferring the full description to Appendix~\ref{s::omitted-Anc-VI-proofs}. In the outline, we highlight a particular step, labeled $\blacktriangle$, that crucially relies on the linearity of the Bellman consistency operator. In the analysis for the Bellman optimality operator of Theorem~\ref{thm::Anc-VIC}, resolving the $\blacktriangle$ step despite the nonlinearity is the key technical challenge.

\begin{proof}[Proof outline of Theorem \ref{thm::Anc-VIE}]
Recall that we can write Bellman consistency operator as $T^{\pi}V=r^{\pi}+\gamma\cP^{\pi}V$ and $T^{\pi}Q=r+\gamma\cP^{\pi}Q$.
Since $T^{\pi}$ is a linear operator\footnote{Arguably, $T^\pi$ is affine, not linear, but we follow the convention of \cite{10.5555/528623} say $T^\pi$ is linear.}, we get
\begin{align*}
    T^{\pi}U^k-U^k&=T^{\pi}U^{k}-(1-\beta_k)T^{\pi}U^{k-1}-\beta_kT^{\pi}U^{\pi}-\beta_k(U^0-U^{\pi})\\
    &\stackrel{\blacktriangle}{=}\gamma\cP^{\pi}(U^{k}-(1-\beta_k)U^{k-1}-\beta_kU^{\pi})-\beta_k(U^0-U^{\pi})\\
    &=\gamma\cP^{\pi}(\beta_k(U^0-U^{\pi})+(1-\beta_k)(T^{\pi}U^{k-1}-U^{k-1}))-\beta_k(U^0-U^{\pi})\\
    &=\sum_{i=1}^k\left[\para{\beta_i-\beta_{i-1}(1-\beta_{i})}\para{\Pi^k_{j=i+1}(1-\beta_j)}\para{\gamma\cP^{\pi}}^{k-i+1}(U^0-U^{\pi})\right]\\
    &\quad -\beta_k(U^0-U^{\pi})+\para{\Pi^k_{j=1}(1-\beta_j)} \para{\gamma\cP^{\pi}}^{k+1}(U^0-U^{\pi}),
\end{align*}
%\[U^k=\sum_{i=0}^k \beta_i \para{\Pi^k_{j=i+1}(1-\beta_j)} \para{T^{\pi}}^{k-i}U^0,\quad T^{\pi}U^k=\sum_{i=0}^k \beta_i \para{\Pi^k_{j=i+1}(1-\beta_j)} \para{T^{\pi}}^{k-i+1}U^0.\]
where the first equality follows from the definition of \ref{eq:anc-vi} and the property of fixed point, while the last equality follows from induction.
% Using property of fixed point, we have    
% \begin{align}
%     T^{\pi}U^k-U^k&=\sum_{k=0}^n\beta_i\para{\Pi^k_{j=i+1}(1-\beta_j)} \left((\gamma\cP^{\pi})^{k-i+1}(U^0-U^{\star})-(\gamma\cP^{\pi})^{k-i}(U^0-U^{\star})\right)\label{Bellman-consitency}
%     \\&=\sum_{k=1}^n[\beta_k-\beta_{k-1}(1-\beta_{k})]\Pi^n_{i=k+1}(1-\beta_i)\para{\gamma\cP^{\pi}}^{n-k+1}(U^0-U^{\star})\nonumber\\&\quad -\beta_n(U^0-U^{\star})+\Pi^n_{i=1}(1-\beta_i) \para{\gamma\cP^{\pi}}^{n+1}(U^0-U^{\star})\nonumber.
%     \end{align} 
%     where second equality is from rearrangement. 
    Taking the $\infn{\cdot}$-norm of both sides,
    we conclude
\begin{align*}
       \infn{T^{\pi}U^k-U^k} \le \frac{\para{\gamma^{-1}-\gamma}\para{1+2\gamma-\gamma^{k+1}}}{\para{\gamma^{k+1}}^{-1}-\gamma^{k+1}}\infn{U^0-U^{\pi}}.
\end{align*}
\end{proof}

\subsection{Accelerated rate for Bellman optimality operator}
We now present the accelerated convergence rate of \ref{eq:anc-vi} for the Bellman optimality operator.

Our analysis uses what we call the \emph{Bellman anti-optimality operator}, defined as
\begin{align*}
 \hat{T}^{\star}V(s)&=\inf_{a \in \cA} \left\{r(s,a)+\gamma\mathbb{E}_{s'\sim P(\cdot\,|\,s,a) }\left[V(s')\right]\right\}\\\hat{T}^{\star}Q(s,a)&=r(s,a)+\gamma\mathbb{E}_{s'\sim P(\cdot\,|\,s,a)}\left[\inf_{a' \in \cA} Q(s',a')\right],
\end{align*}
for all $s \in \cS$ and $ a \in \cA$. (The sup is replaced with a inf.) When $0<\gamma<1$, the Bellman anti-optimality operator is $\gamma$-contractive and has a unique fixed point $\hat{U}^{\star}$ by the exact same arguments that establish $\gamma$-contractiveness of the standard Bellman optimality operator.

% We now present our convergence rates of Anc-VI for Bellman optimality operator. 
\begin{theorem}\label{thm::Anc-VIC}
Let $0<\gamma<1$ be the discount factor.
Let $T^{\star}$ and $\hat{T}^{\star}$ respectively be the Bellman optimality and anti-optimality operators for $V$ or $Q$.
Let $U^\star$ and $\hat{U}^\star$ respectively be the fixed points of $T^{\star}$ and $\hat{T}^{\star}$.
Then, \ref{eq:anc-vi} exhibits the rate 
\begin{align*}
    \infn{T^{\star}U^k-U^k} &\le \frac{\para{\gamma^{-1}-\gamma}\para{1+2\gamma-\gamma^{k+1}}}{\para{\gamma^{k+1}}^{-1}-\gamma^{k+1}}\max{\left\{\infn{U^0-U^{\star}},\infn{U^0-\hat{U}^\star}\right\}}
    \end{align*}
% \begin{align*}
%     \infn{T^{\star}U^k-U^k} &\le \frac{\para{1+\gamma}\para{1+2\gamma-\gamma^{k+1}}}{1+\gamma^{k+1}}\frac{\gamma^{k}}{\sum_{i=0}^k \gamma^{i}}\max{\left\{\infn{U^0-U^{\star}},\infn{U^0-\hat{U}^\star}\right\}}
%     \end{align*}
for $k=0,1,\dots$.
If, furthermore, $U^0 \le T^{\star}U^0$ or $U^0\ge T^{\star}U^0$, then \ref{eq:anc-vi} exhibits the rate 
\begin{align*}
     \infn{T^{\star}U^k-U^k} &\le \frac{\para{\gamma^{-1}-\gamma}\para{1+\gamma-\gamma^{k+1}}}{\para{\gamma^{k+1}}^{-1}-\gamma^{k+1}}\infn{U^0-U^{\star}} \quad \,\, \text{if}\,\, U^0 \le T^{\star}U^0\\
     \infn{T^{\star}U^k-U^k} &\le \frac{\para{\gamma^{-1}-\gamma}\para{1+\gamma-\gamma^{k+1}}}{\para{\gamma^{k+1}}^{-1}-\gamma^{k+1}} \infn{U^0-\hat{U}^\star} \quad\,\, \text{if}\,\, U^0 \ge T^{\star}U^0
 \end{align*}
% \begin{align*}
%      \infn{T^{\star}U^k-U^k} &\le \frac{\para{1+\gamma}\para{1+\gamma-\gamma^{k+1}}}{1+\gamma^{k+1}}\frac{\gamma^{k}}{\sum_{i=0}^k \gamma^{i}} \infn{U^0-U^{\star}} \quad \,\, \text{if}\,\, U^0 \le T^{\star}U^0\\
%       \infn{T^{\star}U^k-U^k} &\le \frac{\para{1+\gamma}\para{1+\gamma-\gamma^{k+1}}}{1+\gamma^{k+1}}\frac{\gamma^{k}}{\sum_{i=0}^k \gamma^{i}} \infn{U^0-\hat{U}^\star} \quad\,\, \text{if}\,\, U^0 \ge T^{\star}U^0
%  \end{align*}   
 for $k=0,1,\dots$.
\end{theorem}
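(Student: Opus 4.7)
The proof must overcome the nonlinearity of $T^\star$, which breaks the $\blacktriangle$ step of the consistency-case proof. My plan is to replace the single linear identity used there by two one-sided inequalities for the Bellman residual $E^k:=T^\star U^k-U^k$: an upper bound expressed in terms of $U^0-U^\star$, and a matching lower bound expressed in terms of $U^0-\hat U^\star$. Once these are in place, the rest of the argument mimics the iteration and telescoping performed in the proof of Theorem~\ref{thm::Anc-VIE}.

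For the upper bound I take $\pi_k$ to be a greedy policy for $U^k$, so that $T^\star U^k=T^{\pi_k}U^k$ while $T^\star V\ge T^{\pi_k}V$ for every $V$. Substituting $U^k=\beta_k U^0+(1-\beta_k)T^\star U^{k-1}$ and using the affinity of $T^{\pi_k}$ together with $T^{\pi_k}U^\star\le T^\star U^\star=U^\star$, the same rearrangement that the authors use for the consistency case yields
\begin{align*}
E^k \le \gamma\cP^{\pi_k}\para{\beta_k(U^0-U^\star)+(1-\beta_k)E^{k-1}}-\beta_k(U^0-U^\star).
\end{align*}
For the lower bound I take $\pi'_{k-1}$ greedy for $U^{k-1}$, so $T^\star U^{k-1}=T^{\pi'_{k-1}}U^{k-1}$ and $T^\star U^k\ge T^{\pi'_{k-1}}U^k$. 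Repeating the same algebra but substituting $\hat U^\star$ in place of $U^\star$ and using the key bound $T^{\pi'_{k-1}}\hat U^\star\ge \hat T^\star\hat U^\star=\hat U^\star$ gives the dual recursion
\begin{align*}
E^k \ge \gamma\cP^{\pi'_{k-1}}\para{\beta_k(U^0-\hat U^\star)+(1-\beta_k)E^{k-1}}-\beta_k(U^0-\hat U^\star).
\end{align*}
I would then iterate both recursions down to $E^0$ exactly as in Theorem~\ref{thm::Anc-VIE}. The only structural change is that the powers $(\gamma\cP^\pi)^j$ are replaced by products $\gamma^j\cP^{\pi_k}\cP^{\pi_{k-1}}\cdots\cP^{\pi_{k-j+1}}$ of time-varying stochastic operators; because each factor is nonnegative and $\infn{\cdot}$-nonexpansive, upper bounds propagate correctly when substituted into $\cP^{\pi_j}E^{j-1}$, and the resulting $\infn{\cdot}$ constants agree with the consistency computation verbatim.

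Taking $\infn{\cdot}$ on both sides of the iterated upper recursion yields $\sup_s E^k(s)\le D_k\infn{U^0-U^\star}$, and on the iterated lower recursion yields $-\inf_s E^k(s)\le D_k\infn{U^0-\hat U^\star}$, where $D_k$ equals the coefficient from Theorem~\ref{thm::Anc-VIE}; combining via $\infn{E^k}=\max\{\sup_s E^k(s),-\inf_s E^k(s)\}$ then gives the first, general bound of the theorem. For the monotone cases I would first show by induction, using the monotonicity of $T^\star$ together with the fact that $\{\beta_k\}$ is decreasing, that $U^0\le T^\star U^0$ forces $U^0\le U^k\le U^{k+1}\le U^\star$ and hence $E^k\ge 0$, while $U^0\ge T^\star U^0$ symmetrically forces $E^k\le 0$; the first case then needs only the upper recursion (hence only $\infn{U^0-U^\star}$) and the second only the lower recursion (hence only $\infn{U^0-\hat U^\star}$). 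A tighter accounting that exploits the known sign of $E^{k-1}$ to drop a favorable-sign contribution inside $\beta_k f+(1-\beta_k)E^{k-1}$ then upgrades the constant $1+2\gamma-\gamma^{k+1}$ to $1+\gamma-\gamma^{k+1}$. The main obstacle is engineering the lower bound: because $T^\star$ is a supremum operator, a direct one-sided linearization from below has no natural reference point, and the introduction of $\hat T^\star$ and its fixed point $\hat U^\star$ is exactly what makes the sign of the leftover residual $\beta_k T^{\pi'_{k-1}}\hat U^\star-\beta_k U^0\ge -\beta_k(U^0-\hat U^\star)$ point in the correct direction; this is precisely why the non-monotone bound must involve both $U^\star$ and $\hat U^\star$, unlike in the consistency setting.
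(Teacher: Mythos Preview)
Your argument is correct (for finite action spaces) and follows the paper's template on the upper side, but your lower-bound linearization is genuinely different from the paper's. The paper does not linearize through the greedy policy $\pi'_{k-1}$ of $U^{k-1}$; instead it proves a one-sided inequality of the form
\[
T^\star U-(1-\alpha)T^\star\tilde U-\alpha\hat T^\star\hat U^\star\;\ge\;\gamma\hat{\cP}_H\bar U
\]
where $\hat{\cP}_H$ is built either from the \emph{anti}-greedy policy $\hat\pi=\argmin_a\mathbb{E}[\,U-(1-\alpha)\tilde U-\alpha\hat U^\star\,]$ (finite actions) or via a Hahn--Banach extension of the functional $\bar U\mapsto\inf\bar U$ (infinite actions). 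Your trick---use $\pi'_{k-1}$ so that $T^\star U^{k-1}=T^{\pi'_{k-1}}U^{k-1}$ exactly, $T^\star U^k\ge T^{\pi'_{k-1}}U^k$, and $T^{\pi'_{k-1}}\hat U^\star\ge\hat T^\star\hat U^\star=\hat U^\star$---is cleaner: it avoids introducing a second, artificial policy and keeps the whole proof inside the standard greedy machinery. The price is scope: your argument needs a maximizing policy for each $U^{k-1}$, hence assumes a finite (or suitably compact) action space, whereas the paper's Hahn--Banach route covers general state--action spaces; you should flag this gap.

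Two small remarks on the monotone refinement. First, the sharpening from $1+2\gamma-\gamma^{k+1}$ to $1+\gamma-\gamma^{k+1}$ is not really obtained ``inside $\beta_k f+(1-\beta_k)E^{k-1}$'': after you unroll the recursion you get an additional term $\bigl(\Pi_{j=1}^k(1-\beta_j)\bigr)\bigl(\Pi_{l}\gamma\cP^{\pi_l}\bigr)(U^0-U^\star)$, and it is this term that has a favorable sign (it is $\le0$ when $U^0\le U^\star$) and can be dropped. Equivalently, the base case improves to $E^0\le-(U^0-U^\star)$ because $T^\star U^0\le U^\star$. Second, for $U^0\ge T^\star U^0$ you need $T^\star U^0\ge U^\star\ge\hat U^\star$ (the latter from $\hat U^\star\le U^\star$), which gives the dual base case $E^0\ge-(U^0-\hat U^\star)$; make this explicit.
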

\ref{eq:anc-vi} with the Bellman optimality operator exhibits the same accelerated convergence rate as \ref{eq:anc-vi} with the Bellman consistency operator. As in Theorem~\ref{thm::Anc-VIE}, the rate of Theorem~\ref{thm::Anc-VIC} also becomes $\mathcal{O}(1/k)$ when $\gamma \approx 1$, while VI has a $\mathcal{O}(1)$-rate.

% Therefore, \ref{eq:anc-vi} for Bellman optimality operator has theoretically faster convergence rate than VI and exhibits 
 % We note that these results are independent of whether domain of operator is state action or state-action space or finite dimension or infinite dimension.

\begin{proof}[Proof outline of Theorem \ref{thm::Anc-VIC}]
The key technical challenge of the proof comes from the fact that the Bellman optimality operator is non-linear.
% \begin{lemma} There exist $(\pi_n,\dots, \pi_0)$ and $(\hat{\pi}_n,\dots, \hat{\pi}_0)$ such that
%     \begin{align*}
%    T^{\star}U^k-U^k
%     &\le \sum_{k=0}^n\beta_k\Pi^{n}_{i=k+1}(1-\beta_i) \left(\Pi_{j=k}^{n}(\gamma\cP^{\pi_j})(U^0-U^{\star})-\Pi^{n}_{j=k+1}(\gamma\cP^{\pi_j})(U^0-U^{\star})\right),\\
%    T^{\star}U^k-U^k
%     &\ge \sum_{k=0}^n\beta_k\Pi^{n}_{i=k+1}(1-\beta_i) \left(\Pi_{j=k}^{n}(\gamma\cP^{\hat{\pi}_j})(U^0-\hat{U}^{\star})-\Pi^{n}_{j=k+1}(\gamma\cP^{\hat{\pi}_j})(U^0-\hat{U}^{\star})\right). 
% \end{align*} 
% \end{lemma}
% Note that if $\pi_j$ or $\hat{\pi}_j$ are all same independent of $j$, we have same form of (\ref{Bellman-consitency}). Thus, with this Lemma, through similar argument of Bellman consistency operator case, we could obtain accelerated rate (deferring the full proof to Appendix). The most key part of proof is proof of Lemma.
Similar to the Bellman consistency operator case, we have
\begin{align*}
    T^{\star}U^k-U^k
    &=T^{\star}U^k-(1-\beta_k)T^{\star}U^{k-1}-\beta_kT^{\star}U^{\star}-\beta_k(U^0-U^{\star})\\
    &\stackrel{\blacktriangle}{\le} \gamma\cP^{\pi_k}\para{U^k-(1-\beta_k)U^{k-1}-\beta_k U^{\star}}-\beta_k(U^0-U^{\star})\\
    &= \gamma\cP^{\pi_k}(\beta_k \para{U^{0}-U^{\star}}+(1-\beta_k)(T^{\star}U^{k-1}-U^{k-1}))-\beta_k(U^0-U^{\star})\\
    & \le \sum_{i=1}^k \left[(\beta_i-\beta_{i-1}(1-\beta_{i}))\para{\Pi^k_{j=i+1}(1-\beta_j)}\para{\Pi^i_{l=k}\gamma\cP^{\pi_l}}(U^0-U^{\star})\right]
    \\&\quad -\beta_k(U^0-U^{\star})+\para{\Pi^k_{j=1}(1-\beta_j)} \para{\Pi^0_{l=k}\gamma\cP^{\pi_l}}(U^0-U^{\star}),
\end{align*}
where $\pi_k$ is the greedy policy satisfying $T^{\pi_k}U^k=T^{\star}U^k$, we define $\Pi^i_{l=k}\gamma\cP^{\pi_l}=\gamma\cP^{\pi_k}\gamma\cP^{\pi_{k-1} }\cdots\gamma\cP^{\pi_i}$, and last inequality follows by induction and monotonicity of Bellman optimality operator. The key step $\blacktriangle$ uses greedy policies $\{\pi_l\}_{l=0,1,\dots,k}$, which are well defined when the action space is finite. When the action space is infinite, greedy policies may not exist, so we use the Hahn--Banach extension theorem to overcome this technicality. The full argument is provided in Appendix~\ref{s::omitted-Anc-VI-proofs}.
% \begin{lemma}\label{lem::min_max_policy}
% If $0\le \alpha$, for V or Q, there exist policy $\hat{\pi}, \pi$ such that  
%     \[T^{\hat{\pi}}U-\alpha T^{\hat{\pi}}U' \le T^{\star}U-\alpha T^{\star}U' \le  T^{\pi}U-\alpha T^{\pi}U'.\]
% \end{lemma}
% \begin{lemma}\label{lem::min_max_Bellman} 
% For any policy $\pi$ and V or Q, we have
%     \begin{align*}
%       \hat{T}^{\star}U \le T^{\pi}U \le T^{\star}U.
%     \end{align*}
% \end{lemma}

To lower bound $T^{\star}U^k-U^k$, we use a similar line of reasoning with the Bellman anti-optimality operator. Combining the upper and lower bounds of $T^{\star}U^k-U^k$, we conclude the accelerated rate of Theorem \ref{thm::Anc-VIC}.
\end{proof}

For $\gamma<1$, the rates of Theorems~\ref{thm::Anc-VIE} and \ref{thm::Anc-VIC} can be translated to a bound on the distance to solution:
\begin{align*}
    \infn{U^k-U^{\star}}  \le \gamma^k \frac{\para{1+\gamma}\para{1+2\gamma-\gamma^{k+1}}}{\para{1-\gamma^{2k+2}}} \infn{U^0-U^{\star}}
\end{align*}
for $k=1,2,\dots$. This $O(\gamma^k)$ rate is worse than the rate of (classical) VI by a constant factor. Therefore, \ref{eq:anc-vi} is better than VI in terms of the Bellman error, but it is not better than VI in terms of distance to solution.

% As Theorem \ref{thm::Anc-VIE}, \ref{thm::Anc-VIC} shows, Anc-VI indeed accelerates VI for both Bellman consistency and optimality operators. As a final remark, the analysis of VI for Bellman optimality operator is also valid for Bellman anti-optimality operator in symmetric manner and could be obtained same accelerated convergence rate.

%\paragraph{Continuous state and action spaces.} To clarify, Theorem~\ref{thm::Anc-VIE} and \ref{thm::Anc-VIC} do not require the state and action spaces to be finite, so long as $T^\pi$ and $T^\star$ are well defined. Even if $\pi_k, \hat{\pi_k}$ in the core inequality of proof outline may not exist  in continuous action spaces, we can establish the inequality which lead us to same accelerated convergence rate as we showed in proof of Appendix.

%As an aside, the assumption that the state and action spaces are finite is used only to ensure $T^\pi$ and $T^\star$ are well defined. The proofs of Theorems~\ref{thm::Anc-VIE} and \ref{thm::Anc-VIC} do not otherwise rely on any finiteness condition, and Theorems~\ref{thm::Anc-VIE} and \ref{thm::Anc-VIC} apply to the setup of continuous state and action spaces when $T^\pi$ and $T^\star$ are well defined. Even though in continuous state and action spaces, $\pi, \hat{\pi}$ in proof outline may not exist. But, we can solve this problem in Appendix. 

\section{Convergence when $\gamma=1$}\label{sec::nonexp}
Undiscounted MDPs are not commonly studied in the DP and RL theory literature due to the following difficulties: Bellman consistency and optimality operators may not have fixed points, VI is a nonexpansive (not contractive) fixed-point iteration and may not convergence to a fixed point even if one exist, and the interpretation of a fixed point as the (optimal) value function becomes unclear when the fixed point is not unique. However, many modern deep RL setups actually do not use discounting,%
\footnote{
As a specific example, the classical policy gradient theorem  \cite{sutton1999}
calls for the use of
$\nabla J(\theta)=\mathbb{E}
\left[
\sum^\infty_{t=0}\gamma^t
\nabla_\theta\log \pi_\theta(a_t\,|\,s_t)Q^\phi_\gamma(s_t,a_t)
\right]$, but many modern deep policy gradient methods use $\gamma=1$ in the first instance of $\gamma$ (so $\gamma^t=1$) while using $\gamma<1$ in $Q^\phi_\gamma(s_t,a_t)$ \cite{nota2020}.}
and this empirical practice makes the theoretical analysis with $\gamma=1$ relevant.

In this section, we show that \ref{eq:anc-vi} converges to fixed points of the Bellman consistency and optimality operators of undiscounted MDPs. While a full treatment of undiscounted MDPs is beyond the scope of this paper, we show that fixed points, if one exists, can be found, and we therefore argue that the inability to find fixed points should not be considered an obstacle in studying the $\gamma=1$ setup.

 % In this section, we consider undiscounted MDP with assumption on fixed points of Bellman consistency and optimality operators. First, we present convergence result of \hyperlink{Anc-VI}{Anc-VI} for $\gamma=1$, well-defined by its definition, in finite dimensional domain.
 We first state our convergence result for finite state-action spaces.
\begin{theorem}\label{thm::nonexp_finite}
   Let $\gamma =1$.
    Let $T \colon \real^n \rightarrow \real^n$ be the nonexpansive Bellman consistency or optimality operator for $V$ or $Q$. Assume a fixed point exists (not necessarily unique). 
    % Then, \ref{eq:anc-vi} exhibits the rate
 %    \begin{align*}
 %     \infn{TU^k-U^k} \le \frac{2}{k+1}\infn{U^0-U^{\star}}\qquad\text{ for }k=0,1,\dots.
 % \end{align*}
  If,  $U^0 \le TU^0$, then \ref{eq:anc-vi} exhibits the rate
    \begin{align*}
     \infn{TU^k-U^k} \le \frac{1}{k+1}\infn{U^0-U^{\star}}\qquad\text{ for }k=0,1,\dots.
 \end{align*}
  for any fixed point $U^{\star}$ satisfying $ U^0 \le U^{\star}$. Furthermore, $U^k \rightarrow U^{\infty}$ for some fixed point $U^{\infty}$.
\end{theorem}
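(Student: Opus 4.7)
My plan is to prove the rate and the point-convergence claims together by combining a monotonicity argument with a telescoping recursion adapted from the proof of Theorem~\ref{thm::Anc-VIC}. Throughout, let $\pi_l$ denote the given policy $\pi$ in the consistency case and a greedy policy with respect to $U^l$ in the optimality case; greedy policies exist since the state and action spaces are finite.

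First, using the order-preserving property of Bellman operators, I would show by induction that $U^0 \le U^{k-1} \le U^k \le TU^{k-1} \le TU^k \le U^\star$ for every $k\ge 1$ and every fixed point $U^\star\ge U^0$. The base case $k=1$ places $U^1=\tfrac{1}{2}(U^0+TU^0)\in[U^0,TU^0]$, and $TU^0\le TU^\star=U^\star$ gives $U^1\le U^\star$; monotonicity of $T$ propagates all inequalities to the next index. Two consequences follow at once: $TU^k-U^k\ge 0$ coordinatewise, and the sequence $\{U^k\}$ is coordinatewise increasing and bounded above in $\mathbb{R}^n$, hence converges to some $U^\infty$. Passing to the limit in $U^k=\beta_k U^0+(1-\beta_k)TU^{k-1}$, using $\beta_k\to 0$ and continuity (nonexpansiveness) of $T$, yields $U^\infty=TU^\infty$, settling the point-convergence claim.

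For the rate, I would carry the $\blacktriangle$-type bound from the proof of Theorem~\ref{thm::Anc-VIC} over to $\gamma=1$,
\begin{align*}
TU^k-U^k \;\le\; \mathcal{P}^{\pi_k}\!\bigl[\beta_k(U^0-U^\star)+(1-\beta_k)(TU^{k-1}-U^{k-1})\bigr]-\beta_k(U^0-U^\star),
\end{align*}
and proceed by induction on the hypothesis $TU^k-U^k\le \beta_k(I-\mathcal{P}^{\pi_k}\mathcal{P}^{\pi_{k-1}}\cdots\mathcal{P}^{\pi_0})(U^\star-U^0)$. The key algebraic simplification is that $\beta_k=1/(k+1)$ satisfies $(1-\beta_k)\beta_{k-1}=\beta_k$, so substituting the hypothesis into the recursion the coefficient telescopes via
\begin{align*}
\beta_k(I-\mathcal{P}^{\pi_k}) + \beta_k\,\mathcal{P}^{\pi_k}(I-\mathcal{P}^{\pi_{k-1}}\cdots\mathcal{P}^{\pi_0}) \;=\; \beta_k\bigl(I-\mathcal{P}^{\pi_k}\mathcal{P}^{\pi_{k-1}}\cdots\mathcal{P}^{\pi_0}\bigr).
\end{align*}
Because any product of stochastic matrices is stochastic and $U^\star-U^0\ge 0$, each coordinate of $(I-\mathcal{P}^{\pi_k}\cdots\mathcal{P}^{\pi_0})(U^\star-U^0)$ lies in $[-\|U^0-U^\star\|_\infty,\|U^0-U^\star\|_\infty]$; combining with $TU^k-U^k\ge 0$ coordinatewise yields the desired bound $\|TU^k-U^k\|_\infty\le\tfrac{1}{k+1}\|U^0-U^\star\|_\infty$.

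The main obstacle I anticipate is executing the $\blacktriangle$ step in the optimality case, where $T$ is nonlinear and the greedy policy $\pi_k$ depends on $U^k$; the inequality relies on $T^\star U \ge T^{\pi_k}U$ for $U\in\{U^{k-1},U^\star\}$, which is immediate from the definition of the greedy policy once the state-action space is finite so that $\pi_k$ exists. Everything else in the induction is routine because $\mathcal{P}^{\pi_l}$ is a nonnegative linear operator that preserves componentwise inequalities, so the inductive bound on $TU^{k-1}-U^{k-1}$ transports cleanly through the recursion.
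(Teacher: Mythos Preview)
Your argument is correct and follows essentially the paper's route---monotonicity under $U^0\le TU^0$, the $\blacktriangle$-recursion specialized to $\gamma=1$ (where $(1-\beta_k)\beta_{k-1}=\beta_k$ collapses the expansion), and monotone-bounded convergence to a fixed point---and your direct extraction of the $1/(k+1)$ constant via the coordinatewise bound $(I-\mathcal{P}^{\pi_k}\cdots\mathcal{P}^{\pi_0})(U^\star-U^0)\le U^\star-U^0$ is in fact slightly cleaner than the paper's two-step path (first $2/(k+1)$ from Lemma~\ref{lem::Anc-VIC-1-1}, then $1/(k+1)$ from Lemma~\ref{lem::Anc-VIC-2-1} after identifying the limit). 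One small caveat: your boundedness step for point convergence relies on $U^k\le U^\star$ for a fixed point $U^\star\ge U^0$, but the theorem only assumes \emph{some} fixed point exists, and the existence of one with $U^\star\ge U^0$ is part of the conclusion; the paper avoids this circularity by invoking nonexpansiveness (Lemma~\ref{lem::boundedness_Anc-VI}) to get $\|U^k-U^\star\|_\infty\le\|U^0-U^\star\|_\infty$ for \emph{any} fixed point, deduces the limit $U^\infty$, and only then uses $U^0\le U^\infty$.
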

% \begin{proof}
%     Convergence rate comes from Theorem. If $\{x_{n_i}\}$'s are convergence subsequence, $x_{n_i} \rightarrow U^{\star}$ since $T^{\pi}-I$ is continuous and this implies that $\para{T^{\pi}-I}x_{n_i} \rightarrow 0$. By induction, $\|U^k-U^0\| \le \|U^{\star}-U^0\|.$ Since $U^k$ is bounded, by Bolzano Weierstrass theorem, ${U^k} $ has at least one accumulation points. Thus $U^k \rightarrow U^{\star}$.
% \end{proof}
% need following lemma. Recall that $\hat{T}^{\star}$ is Bellman anti-optimality operators. 
% \begin{lemma}
% If $Fix(T^{\star}) \neq \emptyset$, $Fix(\hat{T}^{\star}) \neq \emptyset$. 
% \end{lemma}
% \begin{proof}
% If $U^{\star} \in Fix(T)$, $U^{\star} \ge \hat{T}^{\star}U^{\star}$ by definition. Then, by induction, if $U^0=U^{\star}$, $\para{\hat{T}^{\star}}^kU^{\star}$ is decreasing sequence by monotonicity and converge to some $\hat{x}_{\star}$. By previous theorem, $\hat{x}_{\star}$ is fixed point of $\hat{T}^{\star}$.  
% \end{proof}
% Then we have following Theorem. 
% \begin{theorem}
%     Let $T^{\star}: \real^n \rightarrow \real^n$ be non-expensive Bellman optimality operators and$Fix(T^{\star})\neq \emptyset$. If $T^{\star}U^0 \ge U^0$, Anc-VI exhibits the rate
%     \begin{align*}
%     \infn{T^{\star}U^k-U^k} &\le \frac{1}{n+1}\infn{U^0-U^{\star}} \quad \text{if}\,\, U^0 \le T^{\star}U^0
%  \end{align*}
%   and $U^k \rightarrow U^{\star}$ for some $U^{\star} \in Fix(T^{\pi})$. 
% \end{theorem}

If rewards are nonnegative, then the condition $U^0 \le TU^0$ is satisfied with $U^0=0$. So, under this mild condition, \ref{eq:anc-vi} with $\gamma=1$ converges with $\mathcal{O}(1/k)$-rate on the Bellman error. To clarify, the convergence $U^k\rightarrow U^\infty$ has no rate, i.e., $\|U^k-U^\infty\|_\infty=o(1)$, while $\infn{TU^k-U^k} =\mathcal{O}(1/k)$. In contrast, standard VI does not guarantee convergence in this setup.

We also point out that the convergence of Bellman error does not immediately imply point convergence, i.e., $TU^k-U^k\rightarrow 0$ does not immediately imply $U^k \rightarrow U^{\star}$, when $\gamma=1$. Rather, we show (i) $U^k$ is a bounded sequence, (ii) any convergent subsequence $U^{k_j}$ converges to a fixed point $U^{\infty}$, and (iii) $U^k$ is elementwise monotonically nondecreasing and therefore has a single limit.

% Under mild condtions, Anc-VI guarantee convergence with $\mathcal{O}(1/k)$ rate even though VI does not guarantee convergence under same condition. 

%We clarify that our analysis depends on finite-dimensionality only through the well-definedness of $T$ and through the compactness of the $\|\cdot\|_\infty$-ball. 

Next, we state our convergence result for general state-action spaces.
\begin{theorem}\label{thm::nonexp_infinite}
         Let $\gamma=1$. Let the state and action spaces be general (possibly infinite) sets.
         Let $T$ be the nonexpansive Bellman consistency or optimality operator for $V$ or $Q$, and assume $T$ is well defined.%
         \footnote{Well-definedness of $T$ requires a $\sigma$-algebra on state and action spaces, expectation with respect to transition probability and policy to be well defined, boundedness and measurability of the output of Bellman operators, etc.}
         Assume a fixed point exists (not necessarily unique). If $U^0 \le TU^0$, then \ref{eq:anc-vi} exhibits the rate
    \begin{align*}
     \infn{TU^k-U^k} \le \frac{1}{k+1}\infn{U^0-U^{\star}} \qquad\text{ for }k=0,1,\dots
 \end{align*}
   for any fixed point $U^{\star}$ satisfying $ U^0 \le U^{\star}$. Furthermore, $U^k\rightarrow U^{\infty}$ pointwise monotonically for some fixed point $U^{\infty}$.
 %If $\{U^k\}_{k=0,1,\dots}$ has a convergent subsequence,  then $U^k \rightarrow U^{\star}$ such that $U^{\star}=TU^{\star}$.
\end{theorem}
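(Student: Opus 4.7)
The plan is to adapt the proof of Theorem~\ref{thm::nonexp_finite} to general state--action spaces. Specializing $\gamma=1$ in \ref{eq:anc-vi} gives $\beta_k=1/(k+1)$, and the three ingredients I need are: (i)~a pointwise monotonicity chain for $\{U^k\}$; (ii)~the Bellman error bound via the same unrolling used in the proof outline of Theorem~\ref{thm::Anc-VIC}, now with the telescoping simplification $\beta_i-\beta_{i-1}(1-\beta_i)=0$ that $\gamma=1$ produces; and (iii)~identification of the pointwise limit as a fixed point via monotonicity and the monotone convergence theorem. The single new difficulty relative to the finite-state case is the construction of a ``greedy'' linear surrogate $\cP^{\pi_k}$ when the action space is infinite; for this I would import the Hahn--Banach device already invoked in the proof of Theorem~\ref{thm::Anc-VIC}.

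\textbf{Monotone pointwise convergence.} I would first prove by induction on $k$ that $U^0\le U^k\le U^{k+1}\le TU^k$ and $U^k\le U^\star$, where $U^\star$ is any fixed point with $U^0\le U^\star$. The base case is the hypothesis $U^0\le TU^0$, and the induction step uses that $U^{k+1}=\beta_{k+1}U^0+(1-\beta_{k+1})TU^k$ is a convex combination of $U^0$ and $TU^k$, both $\le TU^k$ under the inductive hypothesis, so $U^{k+1}\le TU^k$; monotonicity of $T$ then yields $TU^{k+1}\ge TU^k\ge U^{k+1}$. An analogous comparison with $U^\star$ gives $U^{k+1}\le U^\star$. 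The sequence $\{U^k(s)\}_k$ is therefore nondecreasing and bounded above by $U^\star(s)$ at every $s$ (or $(s,a)$), so pointwise monotone convergence yields $U^k\uparrow U^\infty$ for some $U^\infty\in[U^0,U^\star]$.

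\textbf{Bellman error rate and fixed-point identification.} As in the proof outline of Theorem~\ref{thm::Anc-VIC}, I would rewrite
\begin{align*}
TU^k-U^k=TU^k-(1-\beta_k)TU^{k-1}-\beta_k TU^\star-\beta_k(U^0-U^\star),
\end{align*}
upper-bound the first three terms by $\cP^{\pi_k}(U^k-(1-\beta_k)U^{k-1}-\beta_k U^\star)$ along a surrogate policy $\pi_k$ (in the consistency case $\pi_k=\pi$ and no selection is needed; in the optimality case $\pi_k$ is the Hahn--Banach selector described below), and then iterate. Because $\beta_i-\beta_{i-1}(1-\beta_i)=0$ for $\beta_i=1/(i+1)$, the inner telescoping sum collapses, leaving
\begin{align*}
TU^k-U^k\le \beta_k\bigl[(\cP^{\pi_k}\cdots\cP^{\pi_0})(U^0-U^\star)-(U^0-U^\star)\bigr].
\end{align*}
Since $U^0-U^\star\le 0$ and each $\cP^{\pi_l}$ preserves non-positivity, the first bracketed term is $\le 0$, so $TU^k-U^k\le\beta_k(U^\star-U^0)$; combined with $TU^k-U^k\ge 0$ from the monotonicity step, this gives $\|TU^k-U^k\|_\infty\le\frac{1}{k+1}\|U^0-U^\star\|_\infty$. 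To identify the limit, from $U^k\uparrow U^\infty$ pointwise and monotonicity of $T$ we get $TU^k\le TU^\infty$, and monotone convergence applied inside each transition expectation (legitimate since $\{U^k\}$ is order-bounded in $[U^0,U^\star]$) upgrades this to $TU^k\uparrow TU^\infty$ pointwise; but $TU^k=U^k+(TU^k-U^k)\to U^\infty$ pointwise using the uniform Bellman-error bound, so $TU^\infty=U^\infty$.

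\textbf{Main obstacle.} The delicate step is the choice of the surrogate transition operator $\cP^{\pi_k}$ in the optimality case when no greedy policy exists. I would transplant the Hahn--Banach argument from Theorem~\ref{thm::Anc-VIC}: for each state $s$, choose a bounded linear functional on a suitable space of action-indexed functions that realizes the supremum defining $(TU^k)(s)$ exactly (Hahn--Banach produces it as an extension of the evaluation functional on the one-dimensional span of $U^k$); assembling these state-wise functionals gives a linear operator $\cP^{\pi_k}$ satisfying the pointwise majorization used in the unrolling above, at which point every subsequent inequality goes through unchanged. The remaining arguments are elementary monotone real-analysis and do not pose further difficulty.
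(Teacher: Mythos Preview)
Your overall structure---monotone chain, unrolled recursion, pointwise limit identified as a fixed point via monotone convergence---matches the paper's proof closely, and for the Bellman \emph{consistency} operator (or for the optimality operator with a finite action space) your argument goes through as written, since then $\cP^{\pi_l}$ is an actual transition kernel and does preserve order.

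The gap is in the Bellman \emph{optimality} case with infinite action space. You say you will ``transplant the Hahn--Banach argument from Theorem~\ref{thm::Anc-VIC}'' to produce the surrogate operators $\cP^{\pi_l}$, and then you claim ``each $\cP^{\pi_l}$ preserves non-positivity'' in order to drop the term $(\cP^{\pi_k}\cdots\cP^{\pi_0})(U^0-U^\star)$. But the Hahn--Banach construction in the paper (Lemma~\ref{lem::Anc-VIC-1-1-H1}) only guarantees that $\cP^{\pi_l}$ is a linear operator with $\|\cP^{\pi_l}\|_\infty\le 1$; a norm-one linear functional on an $\ell^\infty$-type space need not be positive, so $\cP^{\pi_l}$ need not send nonpositive functions to nonpositive functions. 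Concretely, $\cP^{\pi_1}$ is fixed only by its value on one particular $\bar U$, and on the constant function $\cP^{\pi_0}(U^0-U^\star)$ it can take either sign. Thus the step that turns your displayed inequality into $TU^k-U^k\le\beta_k(U^\star-U^0)$ is unjustified.

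The paper avoids this issue entirely (Lemma~\ref{lem::Anc-VIC-2-1}): instead of unrolling all the way down and then discarding the product term, it starts the induction from
\[
T^{\star}U^0-U^0 \;=\; (T^{\star}U^0-U^\star)-(U^0-U^\star)\;\le\;-(U^0-U^\star),
\]
which is immediate from $U^0\le U^\star$ and monotonicity of $T^\star$ (so $T^\star U^0\le T^\star U^\star=U^\star$). Carrying this base case through the same recursion, and using that $\beta_i-\beta_{i-1}(1-\beta_i)=0$ when $\gamma=1$, yields directly $T^{\star}U^k-U^k\le -\beta_k(U^0-U^\star)$ with no product term ever appearing; combined with $T^\star U^k-U^k\ge 0$ from your monotonicity step, this gives the $1/(k+1)$ rate without any positivity assumption on the Hahn--Banach surrogates. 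Swapping in this base case fixes your argument; the rest of your proposal (pointwise monotone limit, monotone convergence inside the expectations, and the sup--limit interchange for the optimality operator) is correct and is exactly what the paper does.
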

% To clarify, $\|\cdot\|_\infty$ denotes the uniform-norm, boundedness of $\{U^k\}_{k=0,1,\dots}$ means uniform boundedness, and $U^k \rightarrow U^{\star}$ denotes uniform convergence. The final convergence result can be equivalently phrased as: If $\{U^k\}_{k=0,1,\dots}$ has a convergent subsequence, then, in fact, the full sequence converges and the limit is a fixed point of $T$. Note that if $\cS$ and $\cA$ are finite sets equipped with the discrete topology, then $\{U_k\}$ is an equicontinuous family, and we can use Arzel\`a--Ascoli to show the existence of a convergent subsequence. In this sense, Theorem~\ref{thm::nonexp_infinite} is generalization of Theorem~\ref{thm::nonexp_finite}.
The convergence $U^k\rightarrow U^{\infty}$ pointwise in infinite state-action spaces is, in our view, a non-trivial contribution.
When the state-action space is finite, pointwise convergence directly implies convergence in $\|\cdot\|_\infty$, and in this sense, Theorem~\ref{thm::nonexp_infinite} is generalization of Theorem~\ref{thm::nonexp_finite}.
However, when the state-action space is infinite, pointwise convergence does not necessarily imply uniform convergence, i.e., $U^k\rightarrow U^\infty$ pointwise does not necessarily imply $U^k\rightarrow U^\infty$ in $\|\cdot\|_\infty$.

% Similarly, for Bellman optimality operators, we have following theorem. 
% \begin{theorem}
%     Let $T^{\star}$ be non-expensive Bellman optimality operators. If $Fix(T^{\star}) \neq \emptyset$ and $U^0 \le T^{\star} U^0$ or $U^0 \ge T^{\star} U^0$, Anc-VI exhibits the rate
%     \begin{align*}
%      \infn{T^{\star}U^k-U^k} &\le \frac{1}{n+1}\infn{U^0-U^{\star}}, \quad \text{if}\,\, U^0 \le T^{\star} U^0\\
%      \infn{T^{\star}U^k-U^k} &\le \frac{1}{n+1}\infn{U^0-\hat{U}^\star}, \quad \text{if}\,\, U^0 \ge T^{\star} U^0
%  \end{align*}
%   and $U^k \rightarrow U^{\star}$ for some $U^{\star} \in Fix(T)$. 
% \end{theorem}

\section{Complexity lower bound
}\label{sec::complexity}
We now present a complexity lower bound establishing optimality of \ref{eq:anc-vi}.

% \subsection{linear operator}

% \begin{theorem}[linear operator]
%     For $n \ge N +1$ and any initial point $U^0 \in \real^n$,
% there exists an $\gamma$-contractive operator $T\colon \real^n \rightarrow \real^n$ with a
% fixed point $x^{\star} \in Fix T$ such that
% \[\infn{Tx_N-x_N} \ge \frac{1+\gamma^{N+1}}{\sum_{k=0}^N\gamma^k}\gamma^N\infn{U^0-U^{\star}}\]
% for any iterates $\{x_k\}^{N}_{k=0}$ satisfying 
% \[x_k \in U^0+span\{U^0-TU^0, U^1-TU^1, \dots, x_{k-1}-Tx_{k-1} \} \]
% for $k=1,\dots,N.$
% \end{theorem}

\begin{theorem}\label{thm::complexity_contractive}
Let $k\ge 0$, $n \ge k+2$, $0<\gamma\le 1$, and $U^0 \in \real^n$. Then there exists an MDP with $|\mathcal{S}|=n$ and $|\mathcal{A}|=1$ (which implies the Bellman consistency and optimality operator for V and Q all coincide as $T\colon \real^n \rightarrow \real^n$) such that $T$ has a fixed point $U^\star$ satisfying $U^0 \le U^{\star}$ and
\[\infn{TU^k-U^k} \ge \frac{\gamma^k}{\sum_{i=0}^k\gamma^i}\infn{U^0-U^{\star}}\]
for any iterates $\{U^i\}^{k}_{i=0}$ satisfying 
\[U^i \in U^0+\mathrm{span}\{TU^0-U^0, TU^1-U^1, \dots, TU^{i-1}-U^{i-1} \}
\qquad\text{ for }i=1,\dots,k.
\]
\end{theorem}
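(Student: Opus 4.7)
The plan is to exhibit an explicit hard MDP with $|\cA|=1$ and then run a Krylov-subspace argument. Because $|\cA|=1$, the Bellman consistency and optimality operators all coincide with a single affine operator $TU = r + \gamma \cP U$ for some row-stochastic $\cP \in \real^{n\times n}$. Translating by $U^0$, the span condition forces $U^i - U^0 \in \mathrm{span}\{\tilde r, (\gamma\cP)\tilde r, \ldots, (\gamma\cP)^{i-1}\tilde r\}$ where $\tilde r = (I-\gamma\cP)(U^\star - U^0)$. My strategy is to choose $\cP$ and $r$ so that this Krylov subspace is low-dimensional and misses the coordinate where $U^\star - U^0$ attains its $\infn{\cdot}$-norm, and then to explicitly minimize $\infn{TU^k - U^k}$ over all $U^k$ in the resulting affine subspace.

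Concretely, I would use the chain MDP on $n \ge k+2$ states with deterministic transitions $1 \to 2 \to \cdots \to k+1 \to k+2$, states $k+2,\ldots,n$ absorbing with zero reward, and rewards at states $1,\ldots,k+1$ tuned so that the fixed point satisfies $U^\star = U^0 + v$ with $v = (\gamma^k, \gamma^{k-1}, \ldots, \gamma, 1, 0, \ldots, 0)^\top$. A direct computation gives $\tilde r = (I-\gamma\cP)v = e_{k+1}$ and $(\gamma\cP)^m e_{k+1} = \gamma^m e_{k+1-m}$ for $0 \le m \le k$, so the Krylov subspace is exactly $\mathrm{span}\{e_2,\ldots,e_{k+1}\}$. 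Hence $U^k$ agrees with $U^0$ on coordinate $1$ and on coordinates $\ge k+2$; moreover $\infn{U^\star - U^0} = \infn{v} = 1$, and $v \ge 0$ gives $U^0 \le U^\star$ as required.

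Writing $u_j = U^k_{j+1} - U^0_{j+1}$ for $j=1,\ldots,k$, the identity $TU^k - U^k = (I-\gamma\cP)(U^\star - U^k)$ reduces the Bellman residual to $\gamma u_1$ at coordinate $1$, $\gamma u_j - u_{j-1}$ at coordinate $j$ for $2 \le j \le k$, $1 - u_k$ at coordinate $k+1$, and $0$ at coordinates $\ge k+2$. Minimizing the max of the absolute values over $u_1,\ldots,u_k$ is an elementary minimax: forcing all $k+1$ nonzero residuals equal to $M$ telescopes to $u_k = M\sum_{i=1}^k \gamma^{-i}$, and the boundary equation $1 - u_k = M$ then solves to $M = \gamma^k(1-\gamma)/(1-\gamma^{k+1}) = \gamma^k/\sum_{i=0}^k\gamma^i$, exactly the claimed bound.

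The hard part will be choosing the right MDP. The more natural chain with a self-loop at the reward state gives $\infn{U^\star - U^0} = 1/(1-\gamma)$ but only $M = \gamma^k$, losing the crucial $1/(1-\gamma^{k+1})$ factor. Inserting an additional zero-reward absorbing state past the reward-producing state bounds $\infn{U^\star - U^0}$ by $1$ and replaces the ``boundary'' residual $1 - (1-\gamma)u_k$ with $1 - u_k$, which is precisely what makes the minimax tight and matches the Anc-VI upper bound up to the promised constant. For $\gamma = 1$ the same construction works verbatim and yields $M = 1/(k+1)$.
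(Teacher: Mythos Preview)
Your construction and argument are essentially the paper's: the same deterministic chain MDP (oriented in the opposite direction with indices shifted), the same Krylov reduction forcing $U^k-U^0$ to be supported on the middle $k$ coordinates, and the same telescoping relations among the residuals. The paper also handles general $U^0$ by the same reward shift you describe.

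One point to tighten: the equalizer computation you sketch only shows $\min_u\max_j|\mathrm{res}_j|\le M$, but the theorem requires the \emph{lower} bound $\max_j|\mathrm{res}_j|\ge M$ for every admissible $U^k$. The paper obtains this directly from the linear identity that your own telescoping already encodes: for \emph{every} choice of $u_1,\dots,u_k$,
\[
\mathrm{res}_{k+1}+\gamma^{-1}\mathrm{res}_k+\cdots+\gamma^{-k}\mathrm{res}_1
=(1-u_k)+\sum_{j=2}^{k}\gamma^{-(k+1-j)}(\gamma u_j-u_{j-1})+\gamma^{-k}\cdot\gamma u_1=1,
\]
whence $\bigl(\sum_{i=0}^k\gamma^{-i}\bigr)\infn{TU^k-U^k}\ge 1$ and the claimed bound follows. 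So your plan has all the right pieces; just be sure to read the telescoping as a linear functional that certifies the lower bound rather than only as a system you solve to find the equalizer.
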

\begin{proof}[Proof outline of Theorem \ref{thm::complexity_contractive}]
Without loss of generality, assume $n=k+2$ and $U^0=0$. Consider the MDP $(\cS, \cA, P, r, \gamma)$ such that  
\begin{align*}
        \cS=\{s_1,\dots, s_{k+2}\}, \quad \cA=\{a_1\}, \quad P(s_i\,|\,s_j,a_1)=\mathds{1}_{\{i=j=1, \,j=i+1\}}, \quad r(s_i, a_1)=\mathds{1}_{\{i=2\}}.
\end{align*}
Then, $T= \gamma\cP^{\pi}U+[0,1,0, \dots, 0]^{\intercal}, U^{\star}=[0,1,\gamma,\dots, \gamma^k]^{\intercal}$, and $\infn{U^0-U^{\star}}=1$. Under the span condition, we can show that $\para{U^k}_{1}=\para{U^k}_{k+2}=0$. Then, we get
\begin{align*}
    TU^k-U^k =\left(0, 1-\para{U^k}_2,\gamma\para{U^k}_2 -\para{U^k}_3, \dots, \gamma\para{U^k}_{k}-\para{U^k}_{k+1}, \gamma\para{U^k}_{k+1}\right)
\end{align*}
and this implies 
\[\para{TU^k-U^k}_1+ \para{TU^k-U^k}_2+\gamma^{-1}\para{TU^k-U^k}_3+\cdots+ \gamma^{-k}\para{TU^k-U^k}_{k+2}=1.\] 
Taking the absolute value on both sides,
\begin{align*}
     (1+\cdots+\gamma^{-k})\max_{1\le i\le k+2} {\{|TU^k-U^k|_i\}} \ge 1.
\end{align*}
Therefore, we conclude
\begin{align*}\|TU^{k}-U^{k}\|_{\infty} \ge \frac{\gamma^k}{\sum_{i=0}^k \gamma^{i}} \infn{U^0-U^{\star}}.\\[-0.4in]
\end{align*}
\end{proof}
% \vspace{-0.2in}

Note that the case $\gamma=1$ is included in Theorem~\ref{thm::complexity_contractive}. When $\gamma=1$, the lower bound of Theorem~\ref{thm::complexity_contractive} \emph{exactly} matches the upper bound of Theorem~\ref{thm::nonexp_finite}.

Since 
\[
\frac{\gamma^k}{\sum_{i=0}^k\gamma^i}\le
\frac{\para{\gamma^{-1}-\gamma}\para{1+\gamma-\gamma^{k+1}}}{\para{\gamma^{k+1}}^{-1}-\gamma^{k+1}} 
\le \frac{4\gamma^k}{\sum_{i=0}^k\gamma^i}\qquad\text{ for all }0<\gamma< 1,
\]
the lower bound establishes optimality of the second rates Theorems~\ref{thm::Anc-VIE} and \ref{thm::Anc-VIC} up to a constant of factor $4$. Theorem~\ref{thm::complexity_contractive} improves upon the prior state-of-the-art complexity lower bound established in the proof of \cite[Theorem 3]{goyal2022first} by a factor $1-\gamma^{k+1}$.
(In \cite[Theorem 3]{goyal2022first}, a lower bound on the distance to optimal value function is provided. Their result has an implicit dependence on the initial distance to optimal value function $\|U^0-U^{\star}\|_\infty$, so we make the dependence explicit, and we translate their result to a lower bound on the Bellman error. Once this is done, the difference between our lower bound of Theorem~\ref{thm::complexity_contractive} and of \cite[Theorem 3]{goyal2022first} is a factor of $1-\gamma^{k+1}$. The worst-case MDP of \cite[Theorem 3]{goyal2022first} and our worst-case MDP primarily differ in the rewards, while the states and the transition probabilities are almost the same.)

The so-called ``span condition'' of Theorem~\ref{thm::complexity_contractive} is arguably very natural and  is satisfied by standard VI and \ref{eq:anc-vi}. The span condition is commonly used in the construction of complexity lower bounds on first-order optimization methods \cite{nesterov2003Introductory,drori2017exact,drori2022oracle,carmon2020stationary1, carmon2021stationary2,park2022exact} and has been used in the prior state-of-the-art lower bound for standard VI \cite[Theorem 3]{goyal2022first}. However, designing an algorithm that breaks the lower bound of Theorem~\ref{thm::complexity_contractive} by violating the span condition remains a possibility.  In optimization theory, there is precedence of lower bounds being broken by violating seemingly natural and minute conditions \cite{hannah2018,golowich2020last,yoon2021accelerated}.

\section{Approximate Anchored Value Iteration}\label{sec::Apx-Anc-VI}

In this section, we show that the anchoring mechanism is robust against evaluation errors of the Bellman operator, just as much as the standard approximate VI.

Let $0<\gamma<1$ and let $T^{\star}$ be the Bellman optimality operator. The \emph{Approximate Anchored Value Iteration} \eqref{eq:Apx-Anc-VI} is
\begin{align}
   \begin{aligned}
U_{\epsilon}^{k}&=T^{\star}U^{k-1}+\epsilon^{k-1}\\
   U^k&=\beta_kU^0+ (1-\beta_k)U_{\epsilon}^k 
   \end{aligned}
   \tag{Apx-Anc-VI}
   \label{eq:Apx-Anc-VI}
\end{align} 
for $k=1,2,\dots,$ where $\beta_k=1/(\sum_{i=0}^k \gamma^{-2i})$, $U^0$ is an initial point, and the $\{\epsilon^k\}_{k=0}^\infty$ is the error sequence modeling approximate evaluations of $T^\star$.

Of course, the classical Approximate Value Iteration \eqref{eq:Apx-VI} is
\begin{align*}
U^{k}&=T^{\star}U^{k-1}+\epsilon^{k-1}
\tag{Apx-VI}
\label{eq:Apx-VI}
\end{align*} 
for $k=1,2,\dots,$ where $U^0$ is an initial point.

% Apx-Anc-VI approximates Bellman optimality operator for every iteration. In general, approximation error comes from the situation where we can only partially access Bellman optimality operator or function space $U^k_{\epsilon} \in \cF$ we have is not representative enough. 

     \begin{fact}[Classical result, \protect{\cite[p.333]{bertsekas1995neuro}}]
 Let $0<\gamma<1$ be the discount factor.
Let $T^{\star}$ be the Bellman optimality for $V$ or $Q$.
Let $U^\star$ be the fixed point of $T^{\star}$.
Then \ref{eq:Apx-VI} exhibits the rate
\begin{align*}
    \infn{T^{\star}U^k-U^k} &\le (1+\gamma)\gamma^k\infn{U^0-U^{\star}}+ \para{1+\gamma}\frac{1-\gamma^{k}}{1-\gamma}\max_{0\le i\le k-1 } \infn{\epsilon^i}
     \,\,\,\text{ for }k=1,2,\dots.
\end{align*} 
 \end{fact}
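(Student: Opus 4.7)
The plan is to reduce the Bellman-error bound to a distance-to-fixed-point bound via the standard triangle-inequality trick, and then bound the distance to the fixed point by unrolling the approximate recursion and using $\gamma$-contractivity. This is a classical argument, and I expect no serious technical obstacle — the only care needed is to keep track of how the per-step errors $\epsilon^i$ accumulate as a geometric series and how a final factor of $(1+\gamma)$ appears when converting distance to Bellman error.

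First, I would derive the distance bound. Since $T^\star U^\star = U^\star$ and $T^\star$ is $\gamma$-contractive in $\|\cdot\|_\infty$, the \ref{eq:Apx-VI} recursion gives
\begin{align*}
\infn{U^k-U^\star} \;=\; \infn{T^\star U^{k-1}+\epsilon^{k-1}-T^\star U^\star} \;\le\; \gamma\infn{U^{k-1}-U^\star}+\infn{\epsilon^{k-1}}.
\end{align*}
Unrolling this one-step inequality by induction yields
\begin{align*}
\infn{U^k-U^\star} \;\le\; \gamma^k\infn{U^0-U^\star} + \sum_{i=0}^{k-1}\gamma^{k-1-i}\infn{\epsilon^i} \;\le\; \gamma^k\infn{U^0-U^\star}+\frac{1-\gamma^k}{1-\gamma}\max_{0\le i\le k-1}\infn{\epsilon^i},
\end{align*}
where the last step upper-bounds each $\infn{\epsilon^i}$ by the maximum and sums the geometric series $\sum_{i=0}^{k-1}\gamma^{k-1-i} = (1-\gamma^k)/(1-\gamma)$.

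Second, I would convert the distance bound into a Bellman-error bound. Using $T^\star U^\star = U^\star$, the triangle inequality, and $\gamma$-contractivity of $T^\star$,
\begin{align*}
\infn{T^\star U^k-U^k} \;\le\; \infn{T^\star U^k-T^\star U^\star}+\infn{U^\star-U^k} \;\le\; (1+\gamma)\infn{U^k-U^\star}.
\end{align*}
Plugging in the distance bound from the previous step gives precisely the stated inequality. The argument does not use any structure beyond $\gamma$-contractivity and the fixed-point identity, so it works uniformly for both the $V$ and $Q$ versions of $T^\star$; no separate case analysis is needed. The only place requiring any attention is keeping the two multiplicative $(1+\gamma)$ factors consistent with the claimed form of the bound, which is an entirely mechanical check.
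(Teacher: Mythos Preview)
Your argument is correct and is precisely the classical derivation one would expect: unroll the $\gamma$-contractive recursion to bound $\infn{U^k-U^\star}$, then apply the $(1+\gamma)$ triangle-inequality conversion to the Bellman error. The paper does not supply its own proof of this Fact --- it is stated as a classical result with a citation to \cite[p.333]{bertsekas1995neuro} --- so there is nothing further to compare against.
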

\begin{theorem}\label{thm::Apx-Anc-VI}
 Let $0<\gamma<1$ be the discount factor.
 Let $T^{\star}$ and $\hat{T}^{\star}$ respectively be the Bellman optimality and anti-optimality operators for $V$ or $Q$.
Let $U^\star$ and $\hat{U}^\star$ respectively be the fixed points of $T^{\star}$ and $\hat{T}^{\star}$.
Then \ref{eq:Apx-Anc-VI} exhibits the rate
\begin{align*}
  \infn{T^{\star}U^k-U^k} &\le \frac{\para{\gamma^{-1}-\gamma}\para{1+2\gamma-\gamma^{k+1}}}{\para{\gamma^{k+1}}^{-1}-\gamma^{k+1}} \max{\left\{\infn{U^0-U^{\star}},\infn{U^0-\hat{U}^\star}\right\}}\\&\quad+ \frac{1+\gamma}{1+\gamma^{k+1}}\frac{1-\gamma^{k}}{1-\gamma}\max_{0\le i\le k-1 } \infn{\epsilon^i}     \qquad\text{ for }k=1,2,\dots.
\end{align*}
If, furthermore, $U^0 \ge T^{\star}U^0$, then \eqref{eq:Apx-Anc-VI} exhibits the rate \begin{align*}
      \infn{T^{\star}U^k-U^k} &\le \frac{\para{\gamma^{-1}-\gamma}\para{1+\gamma-\gamma^{k+1}}}{\para{\gamma^{k+1}}^{-1}-\gamma^{k+1}}  \infn{U^0-\hat{U}^{\star}}+\frac{1+\gamma}{1+\gamma^{k+1}}\frac{1-\gamma^{k}}{1-\gamma}\max_{0\le i\le k-1 } \infn{\epsilon^i}
\end{align*}
for $k=1,2,\dots$.
% Also, Anc-VI exhibits the rate
% \begin{align*}
%      \infn{T^{\star}U^k-U^k} &\le \frac{\para{1+\gamma}^2}{\sum_{i=0}^k \gamma^{i}}\gamma^k \infn{U^0-U^{\star}}, \quad \,\, \text{if}\,\, U^0 \le T^{\star}U^0\\
%       \infn{T^{\star}U^k-U^k} &\le \frac{\para{1+\gamma}^2}{\sum_{i=0}^k \gamma^{i}}\gamma^k \infn{U^0-\hat{U}^\star}, \quad \text{if}\,\, U^0 \ge T^{\star}U^0.
%  \end{align*}   
\end{theorem}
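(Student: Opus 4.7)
The plan is to adapt the proof of Theorem~\ref{thm::Anc-VIC} by carrying the perturbation $\epsilon^{k-1}$ along with the usual recursion. Rewriting \ref{eq:Apx-Anc-VI} gives the identity
$$U^k - (1-\beta_k)U^{k-1} - \beta_k U^\star = \beta_k(U^0 - U^\star) + (1-\beta_k)(T^\star U^{k-1} - U^{k-1}) + (1-\beta_k)\epsilon^{k-1},$$
which is the same identity driving the proof outline of Theorem~\ref{thm::Anc-VIC} up to the extra term $(1-\beta_k)\epsilon^{k-1}$ on the right. Thus the structural backbone of the argument is preserved and only the error contribution needs to be tracked through the induction.

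Next I would apply the $\blacktriangle$ step with a greedy policy $\pi_k$ satisfying $T^{\pi_k}U^k = T^\star U^k$, together with $T^{\pi_k}U^{k-1}\le T^\star U^{k-1}$ and $T^{\pi_k}U^\star \le T^\star U^\star$, to obtain
$$T^\star U^k - U^k \le \gamma\cP^{\pi_k}\Bigl(\beta_k(U^0 - U^\star) + (1-\beta_k)(T^\star U^{k-1} - U^{k-1}) + (1-\beta_k)\epsilon^{k-1}\Bigr) - \beta_k(U^0 - U^\star) - (1-\beta_k)\epsilon^{k-1}.$$
Unrolling by induction exactly as in Theorem~\ref{thm::Anc-VIC} produces three blocks: (i) the $(U^0 - U^\star)$ combination that already appears in the pure \ref{eq:anc-vi} analysis and yields the accelerated rate; (ii) a new error sum of the form $\sum_{i=1}^k c_i^k\bigl(\Pi^i_{l=k}\gamma\cP^{\pi_l}\bigr)\epsilon^{i-1}$ with coefficients $c_i^k$ built out of $\{\beta_j\}$; and (iii) a trailing $-(1-\beta_k)\epsilon^{k-1}$. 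Taking $\|\cdot\|_\infty$ on the error block, using $\|\gamma\cP^{\pi_l}\|_\infty = \gamma$ to convert each composition into a factor $\gamma^{k-i+1}$, and replacing each $\|\epsilon^{i-1}\|_\infty$ by $\max_{0\le i\le k-1}\|\epsilon^i\|_\infty$ collapses the sum to a geometric series, which a routine computation using $\beta_k = \gamma^{2k}(1-\gamma^2)/(1-\gamma^{2(k+1)})$ evaluates to $\tfrac{1+\gamma}{1+\gamma^{k+1}}\tfrac{1-\gamma^k}{1-\gamma}\max_{0\le i\le k-1}\|\epsilon^i\|_\infty$.

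A matching lower bound on $T^\star U^k - U^k$ is then obtained by repeating the argument with the Bellman anti-optimality operator: pick $\hat\pi_k$ with $T^{\hat\pi_k}U^k = \hat T^\star U^k$ so that $T^{\hat\pi_k}U^{k-1} \ge \hat T^\star U^{k-1}$, and unroll against $\hat U^\star$. Combining the upper and lower componentwise bounds and taking $\|\cdot\|_\infty$ yields the first stated rate, with the $\max\{\|U^0-U^\star\|_\infty,\|U^0-\hat U^\star\|_\infty\}$ factor arising from the two sides. For the monotone case $U^0\ge T^\star U^0$, the anti-optimality side of the argument is already tight against $\hat U^\star$ alone: the same mechanism that replaces $1+2\gamma$ by $1+\gamma$ in the second rate of Theorem~\ref{thm::Anc-VIC} transfers verbatim to \ref{eq:Apx-Anc-VI} since the $\epsilon^{k-1}$ perturbation enters only through the separately-tracked error block and does not interact with the monotonicity-driven pruning of the main rate.

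The main technical obstacle I anticipate is not any single inequality but the combinatorial bookkeeping required to propagate $\epsilon^{k-1}$ through the unrolled recursion without loss. Specifically, verifying that the coefficient on $\max_i\|\epsilon^i\|_\infty$ is exactly $\tfrac{1+\gamma}{1+\gamma^{k+1}}\tfrac{1-\gamma^k}{1-\gamma}$ rather than a looser bound requires invoking the same telescoping identities for $\{\beta_k\}$ that power the acceleration itself, but now applied to the error block; the $(1+\gamma^{k+1})^{-1}$ improvement comes specifically from the anchoring weight and must be retained at every level of the induction rather than absorbed into a crude geometric envelope.
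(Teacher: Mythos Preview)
Your plan tracks the paper's argument closely: the identity you write for $U^k-(1-\beta_k)U^{k-1}-\beta_kU^\star$ with the extra $(1-\beta_k)\epsilon^{k-1}$ is exactly what drives the paper's Lemma~\ref{lem::Apx-Anc-VI}, the upper bound via the greedy $\pi_k$ is the finite-action version of the paper's Lemma~\ref{lem::Anc-VIC-1-1-H1}, and your error-term computation lands on the correct constant $\tfrac{1+\gamma}{1+\gamma^{k+1}}\tfrac{1-\gamma^k}{1-\gamma}$ via the telescoping $\Pi^k_{j=i}(1-\beta_j)=(1-\gamma^{2i})/(1-\gamma^{2k+2})$. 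Your reading of the monotone case is also right: the pruning that replaces $1+2\gamma$ by $1+\gamma$ uses only $U^0\ge U^\star$ (hence $U^0\ge\hat U^\star$), which follows from the initial condition $U^0\ge T^\star U^0$ alone and does not require monotonicity to propagate through the noisy iterates.

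The one genuine gap is the lower-bound linearization. Picking $\hat\pi_k$ with $T^{\hat\pi_k}U^k=\hat T^\star U^k$ does not give what you need: the quantity to bound from below is
\[
T^\star U^k-(1-\beta_k)T^\star U^{k-1}-\beta_k\hat T^\star\hat U^\star,
\]
and the middle term carries $T^\star U^{k-1}$ with a minus sign. Since $T^\star U^{k-1}\ge T^{\pi}U^{k-1}$ for every $\pi$, no single-policy substitution can push this term in the right direction. The paper (Lemma~\ref{lem::Anc-VIC-1-1-H2}) instead uses the functional inequalities $\sup f-\sup g\ge\inf(f-g)$ and $\inf f-\inf g\ge\inf(f-g)$ to collapse the three terms first, and only then introduces a policy $\hat\pi$ achieving the infimum of the \emph{combined} expression $\mathbb{E}_{s'}[U^k(s')-(1-\beta_k)U^{k-1}(s')-\beta_k\hat U^\star(s')]$ (or the Hahn--Banach surrogate in the infinite-action case). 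Once you replace your $\hat\pi_k$ with this construction, the rest of your induction and bookkeeping goes through exactly as you outline and matches the paper's proof.
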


% As one would expect, the rates are adversaly affected by $\max \infn{\epsilon_i}$. 
The dependence on $\max \infn{\epsilon_i}$ of \ref{eq:Apx-Anc-VI} is no worse than that of \ref{eq:Apx-VI}. In this sense, \ref{eq:Apx-Anc-VI} is robust against evaluation errors of the Bellman operator, just as much as the standard \ref{eq:Apx-VI}. Finally, we note that a similar analysis can be done for \ref{eq:Apx-Anc-VI} with the Bellman consistency operator.

\section{Gauss--Seidel Anchored Value Iteration}\label{sec::GS-Anc-VI}
In this section, we show that the anchoring mechanism can be combined with Gauss--Seidel-type updates in finite state-action spaces.
Let $0<\gamma<1$ and let $T^{\star}\colon \reals^n \rightarrow \reals^n$ be the Bellman optimality operator. Define $T_{GS}^{\star}\colon \reals^n\rightarrow \reals^n$ as
\[T_{GS}^{\star}=T^{\star}_{n}\cdots T^{\star}_{2}T^{\star}_{1},\]
where $T_j^{\star}: \reals^n \rightarrow \reals^n$ is defined as
\begin{align*}
T^{\star}_j(U)=(U_1,\dots,U_{j-1},\para{T^{\star}(U)}_j,U_{j+1},\dots,U_n)
    \end{align*}
for $j=1,\dots,n$.
\begin{fact}\label{lem::GS}[Classical result, \protect{\cite[Theorem~6.3.4]{10.5555/528623}}]
$T_{GS}^{\star}$ is a $\gamma$-contractive operator and has the same fixed point as $T^{\star}$.
\end{fact}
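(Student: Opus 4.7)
The plan is to prove the two claims separately, starting with $\gamma$-contractiveness (which makes the fixed-point identification a one-line consequence of Banach's theorem).

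For contractiveness, I would fix arbitrary $U, V \in \reals^n$, let $c = \infn{U-V}$, and track the intermediate vectors of the composition. Define $U^{(0)}=U$, $V^{(0)}=V$, and $U^{(j)} = T^{\star}_j(U^{(j-1)})$, $V^{(j)} = T^{\star}_j(V^{(j-1)})$ for $j=1,\dots,n$, so that $U^{(n)} = T^{\star}_{GS}(U)$ and $V^{(n)} = T^{\star}_{GS}(V)$. The goal is to show by induction on $j$ that $\infn{U^{(j)}-V^{(j)}} \le c$ and, moreover, that for every coordinate $i \le j$ we have $|(U^{(j)})_i - (V^{(j)})_i| \le \gamma c$. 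The inductive step is transparent: the update $T^{\star}_j$ leaves coordinates $i \ne j$ untouched (so they inherit their prior bound), while the new $j$-th coordinate is $\sup_a\{r(s_j,a)+\gamma \e_{s'\sim P(\cdot\,|\,s_j,a)} [\cdot](s')\}$ applied to the previous iterate, so standard $\sup$/expectation estimates give $|(U^{(j)})_j-(V^{(j)})_j| \le \gamma \infn{U^{(j-1)}-V^{(j-1)}} \le \gamma c$. For $i<j$, the coordinate was last updated at step $i$ and so already satisfies the $\gamma c$ bound, while for $i>j$ it equals the original $U_i - V_i$, bounded by $c$. Taking $j=n$ yields $\infn{T^{\star}_{GS}(U)-T^{\star}_{GS}(V)} \le \gamma c$.

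For the fixed-point statement, the forward direction is immediate: if $T^{\star}U^{\star}=U^{\star}$, then $(T^{\star}U^{\star})_j = U^{\star}_j$ for each $j$, so each componentwise update $T^{\star}_j$ fixes $U^{\star}$, hence so does their composition $T^{\star}_{GS}$. For the converse, contractiveness of $T^{\star}_{GS}$ from the previous step, together with Banach's fixed-point theorem, gives that $T^{\star}_{GS}$ has a unique fixed point; since $U^{\star}$ (the unique fixed point of the contractive $T^{\star}$) is already a fixed point of $T^{\star}_{GS}$, the two fixed points coincide.

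I do not expect a substantive obstacle — this is a classical result — but there is one subtlety worth flagging. Each individual $T^{\star}_j\colon \reals^n \to \reals^n$ is only nonexpansive (it fixes $n-1$ coordinates), so a naive ``composition of contractions'' argument fails; one cannot extract a $\gamma$ factor from each $T^{\star}_j$ separately. The right accounting is to maintain the invariant $\infn{U^{(j)}-V^{(j)}} \le c$ throughout the sweep while recording that every coordinate picks up exactly one factor of $\gamma$ at the moment it is updated, so that after a full Gauss--Seidel sweep of $n$ steps, every coordinate carries the $\gamma c$ bound simultaneously.
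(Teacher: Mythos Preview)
Your argument is correct, including the subtlety you flag: each $T^{\star}_j$ is only nonexpansive, and the $\gamma$-factor must be tracked coordinate by coordinate across the sweep, which your invariant does cleanly. The fixed-point identification via Banach is also fine.

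The paper does not give its own proof of this statement; it is recorded as a classical fact with a citation to \cite[Theorem~6.3.4]{10.5555/528623}, so there is nothing to compare against. Your write-up is the standard textbook argument and would serve as a self-contained proof were one needed.
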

The \emph{Gauss--Seidel Anchored Value Iteration} \eqref{eq:GS-Anc-VI} is
\vspace{0.011in}
\begin{align}
    U^{k}&=\beta_kU^{0}+(1-\beta_k)T_{GS}^{\star}U^{k-1}
    \tag{GS-Anc-VI}
    \label{eq:GS-Anc-VI}
\end{align}
for $k=1,2,\dots,$ where $\beta_k=1/(\sum_{i=0}^k \gamma^{-2i})$ and $U^0$ is an initial point.
\begin{theorem}\label{thm::GS-Anc-VI}
 Let the state and action spaces be finite sets. Let $0<\gamma<1$ be the discount factor.
 Let $T^{\star}$ and $\hat{T}^{\star}$ respectively be the Bellman optimality and anti-optimality operators for $V$ or $Q$.
Let $U^\star$ and $\hat{U}^\star$ respectively be the fixed points of $T^{\star}$ and $\hat{T}^{\star}$.
Then \ref{eq:GS-Anc-VI} exhibits the rate
\vspace{0.05in}
\begin{align*}
    \infn{T_{GS}^{\star}U^k-U^k} &\le \frac{\para{\gamma^{-1}-\gamma}\para{1+2\gamma-\gamma^{k+1}}}{\para{\gamma^{k+1}}^{-1}-\gamma^{k+1}} \max{\left\{\infn{U^0-U^{\star}},\infn{U^0-\hat{U}^\star}\right\}}
\end{align*}
\vspace{0.05in}
for $k=0,1,\dots$. If, furthermore, $U^0 \le T_{GS}^{\star}U^0$ or $U^0 \ge T_{GS}^{\star}U^0$, then \ref{eq:GS-Anc-VI} exhibits the rate
\vspace{0.02in}
\begin{align*}
     \infn{T_{GS}^{\star}U^k-U^k} &\le \frac{\para{\gamma^{-1}-\gamma}\para{1+\gamma-\gamma^{k+1}}}{\para{\gamma^{k+1}}^{-1}-\gamma^{k+1}} \infn{U^0-U^{\star}}\quad\,\, \text{if}\,\, U^0 \le T_{GS}^{\star}U^0\\
      \infn{T_{GS}^{\star}U^k-U^k} &\le \frac{\para{\gamma^{-1}-\gamma}\para{1+\gamma-\gamma^{k+1}}}{\para{\gamma^{k+1}}^{-1}-\gamma^{k+1}} \infn{U^0-\hat{U}^{\star}} \quad\,\, \text{if}\,\, U^0 \ge T_{GS}^{\star}U^0
 \end{align*}   
 for $k=0,1,\dots$.
\end{theorem}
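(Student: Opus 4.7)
The plan is to treat \ref{eq:GS-Anc-VI} as the anchored iteration applied to $T_{GS}^{\star}$ and to adapt the proof of Theorem~\ref{thm::Anc-VIC}, with the stochastic matrix $\gamma\cP^{\pi_k}$ replaced by a product of coordinate-wise Gauss--Seidel matrices. The needed structural properties of $T_{GS}^{\star}$ are in hand: by Fact~\ref{lem::GS} it is $\gamma$-contractive in $\|\cdot\|_{\infty}$ with fixed point $U^{\star}$; each single-coordinate update $T_j^{\star}$ is monotone and fixes $U^{\star}$, so $T_{GS}^{\star}$ inherits both properties; and the Gauss--Seidel anti-optimality operator $\hat T_{GS}^{\star}:=\hat T_n^{\star}\circ\cdots\circ\hat T_1^{\star}$ is $\gamma$-contractive with fixed point $\hat U^{\star}$ by the identical argument.

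The heart of the proof is the Gauss--Seidel analogue of step $\blacktriangle$. For each iteration $k$, let $\pi_{k,j}$ be a greedy action at state $s_j$ for the intermediate iterate $T_{j-1}^{\star}\cdots T_1^{\star}U^k$, so these actions together realize $T_{GS}^{\star}U^k$. Because $(T^{\star}V)_j\ge(T^{\pi_{k,j}}V)_j$ for every $V$, the same linearization used for $\blacktriangle$ in Theorem~\ref{thm::Anc-VIC} yields, at each coordinate update,
\[
T_j^{\star}V-(1-\beta_k)T_j^{\star}V'-\beta_k T_j^{\star}V''\le M_{k,j}\bigl(V-(1-\beta_k)V'-\beta_k V''\bigr),
\]
where $M_{k,j}=I-e_je_j^{\intercal}+\gamma\,e_j p_{k,j}^{\intercal}$ encodes the $\pi_{k,j}$-induced transition at $s_j$. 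Applying these componentwise bounds successively to the three parallel intermediate-iterate chains starting from $U^k$, $U^{k-1}$, and $U^{\star}$ (using that $T_j^{\star}U^{\star}=U^{\star}$ for every $j$) gives
\[
T_{GS}^{\star}U^k-(1-\beta_k)T_{GS}^{\star}U^{k-1}-\beta_k T_{GS}^{\star}U^{\star}\le M_k\bigl(U^k-(1-\beta_k)U^{k-1}-\beta_k U^{\star}\bigr),
\]
where $M_k:=M_{k,n}\cdots M_{k,1}$. The crucial fact is $\|M_k\|_{\infty}\le\gamma$, which is precisely the $\|\cdot\|_{\infty}$-contractiveness asserted by Fact~\ref{lem::GS}; thus $M_k$ takes the role of $\gamma\cP^{\pi_k}$ in the original proof.

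With this substitution in place, the inductive unrolling in the proof of Theorem~\ref{thm::Anc-VIC} transfers directly: $T_{GS}^{\star}U^k-U^k$ is written as a telescoping sum of products of the $M_i$'s acting on $U^0-U^{\star}$, and taking $\|\cdot\|_{\infty}$ with $\|M_i\|_{\infty}\le\gamma$ reproduces the same rational expression in $\gamma$. The lower bound on $T_{GS}^{\star}U^k-U^k$ comes from the symmetric argument using the anti-greedy coordinate actions and $\hat U^{\star}$ in place of $U^{\star}$, which produces the $\max\{\infn{U^0-U^{\star}},\infn{U^0-\hat U^{\star}}\}$ term in the first statement. Under the sign hypothesis $U^0\le T_{GS}^{\star}U^0$ (resp.\ $U^0\ge T_{GS}^{\star}U^0$), monotonicity of $T_{GS}^{\star}$ combined with the anchoring forces $U^k$ to remain on one side of $U^{\star}$ (resp.\ $\hat U^{\star}$) throughout, eliminating one of the two terms inside the max and yielding the sharper $(1+\gamma-\gamma^{k+1})$ constant, exactly as in Theorem~\ref{thm::Anc-VIE}.

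The main obstacle is the careful verification of the chained bound in the second paragraph. The greedy actions $\pi_{k,j}$ are selected from the $U^k$-chain, but the coordinate-wise inequalities must remain valid when $M_{k,j}$ is applied along the $U^{k-1}$- and $U^{\star}$-chains, whose intermediate iterates are different vectors producing different intermediate states. What rescues the argument is the pointwise bound $(T^{\star}V)_j\ge(T^{\pi_{k,j}}V)_j$ for \emph{every} $V$, not just for the one at which $\pi_{k,j}$ is greedy; one must also verify that composing the matrices in the order $M_{k,n}\cdots M_{k,1}$ yields a product whose $\|\cdot\|_{\infty}$-norm is bounded by $\gamma$ via Fact~\ref{lem::GS}, rather than some other ordering with a larger norm.
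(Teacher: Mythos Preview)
Your proposal is correct and matches the paper's approach: the paper constructs the same coordinate matrices (denoted $\cP_i$) and their product $\cP_{GS}=M_k$ in Lemmas~\ref{lem::GS-Anc-VIC-H1} and~\ref{lem::GS-Anc-VIC-H2}, uses their nonnegativity for the chaining, and then runs the induction of Lemma~\ref{lem::GS-Anc-VIC} exactly as in Theorem~\ref{thm::Anc-VIC}. The only refinement is that $\|M_k\|_\infty\le\gamma$ is established by a direct row-sum computation rather than by invoking Fact~\ref{lem::GS} (which concerns the nonlinear $T_{GS}^{\star}$, not the linear $M_k$), but this is precisely the verification you already flag in your last paragraph.
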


We point out that \ref{eq:GS-Anc-VI} cannot be directly extended to infinite action spaces since Hahn--Banach extension theorem is not applicable in the Gauss--Seidel setup. 
Furthermore, we note that a similar analysis can be carried out for \ref{eq:GS-Anc-VI} with the Bellman consistency operator.

% But under the assumption of existence of greedy policies, \ref{eq:GS-Anc-VI} can be extended to a \emph{block} Gauss-Seidel version in infinite state-action space, if the state and action spaces are decomposed into $n<\infty$ disjoint blocks.
%, with same argument in proof of Theorem $\ref{thm::GS-Anc-VI}$. 

%First, let  partition the domain of Anc-VI into $m$ blocks, $B_1,\dots, B_m$ and $\sigma: \{1,\dots,m\} \rightarrow \{1,\dots,m\}$ be any fixed permutation. 

\section{Conclusion}
We show that the classical value iteration (VI) is, in fact, suboptimal and that the anchoring mechanism accelerates VI to be optimal in the sense that the accelerated rate matches a complexity lower bound up to a constant factor of $4$. We also show that the accelerated iteration provably converges to a fixed point even when $\gamma=1$, if a fixed point exists. Being able to provide a substantive improvement upon the classical VI is, in our view, a surprising contribution.

One direction of future work is to study the empirical effectiveness of Anc-VI. Another direction is to analyze Anc-VI in a model-free setting and, more broadly, to investigate the effectiveness of the anchor mechanism in more practical RL methods.

Our results lead us to believe that many of the classical foundations of dynamic programming and reinforcement learning may be improved with a careful examination based on an optimization complexity theory perspective. The theory of optimal optimization algorithms has recently enjoyed significant developments \cite{kim2016optimized,kim2021accelerated,kim2021optimizing,yoon2021accelerated,park2023}, the anchoring mechanism being one such example \cite{Lieder2021halpern,park2022exact}, and the classical DP and RL theory may benefit from a similar line of investigation on iteration complexity.

% %%%%%%%%%%%%%%%%%%%%%%%%%%%%%%%%%
% %%% Acknowledgement
% %%%%%%%%%%%%%%%%%%%%%%%%%%%%%%%%%
\begin{ack}
This work was supported by the the Information \& communications Technology Planning \& Evaluation (IITP) grant funded by the Korea government(MSIT) [NO.2021-0-01343, Artificial Intelligence Graduate School Program (Seoul National University)] and the Samsung Science and Technology Foundation (Project Number SSTF-BA2101-02). We thank Jisun Park for providing valuable feedback.
\end{ack}

\bibliographystyle{abbrv}
\bibliography{AnchoredVI}

\newpage
\appendix

\section{Preliminaries}\label{s::property-Bellman}
% \begin{lemma}
%     For $V$ or $Q$, and any policy $\pi$, $T^{\star}$ and $T^{\pi}$ are $\gamma$-contractive and has unique fixed point $U^{\pi}$ and $U^{\star}$, respectively.
% \end{lemma}
 For notational unity, we use the symbol $U$ when both $V$ and $Q$ can be used.
\begin{lemma}\label{lem::monotonicity}{\protect{\cite[Lemma~1.1.1]{bertsekas2015dynamic}}}
    Let $0<\gamma\le 1$. If $U \le \tilde{U}$, then $T^{\pi} U\le T^{\pi}\tilde{U}, T^{\star}U \le T^{\star}\tilde{U}$.
\end{lemma}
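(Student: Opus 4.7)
The claim is a completely standard monotonicity property, so the plan is essentially to unwind the definitions and rely on two elementary facts: the expectation $U \mapsto \mathbb{E}_{s' \sim \mu}[U(s')]$ with respect to any probability measure $\mu$ is a monotone operation on $\cF(\cS)$ (and similarly on $\cF(\cS\times\cA)$), and the supremum $\sup_a$ is monotone in its argument. Combined with the fact that $\gamma \ge 0$ and that adding the same reward term to both sides preserves the inequality, every ingredient of the Bellman operators is monotone, and their composition will therefore be monotone as well.

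Concretely, I would handle $T^\pi$ first. Fix $s \in \cS$. From $U \le \tilde U$ pointwise and monotonicity of expectation (applied inside the double expectation $\mathbb{E}_{a\sim \pi(\cdot\,|\,s),\, s'\sim P(\cdot\,|\,s,a)}$), I would conclude $\mathbb{E}[U(s')] \le \mathbb{E}[\tilde U(s')]$; multiplying by $\gamma \ge 0$ and adding the common term $\mathbb{E}_{a\sim \pi(\cdot\,|\,s)}[r(s,a)]$ gives $T^\pi U(s) \le T^\pi \tilde U(s)$. Since $s$ was arbitrary, $T^\pi U \le T^\pi \tilde U$. The $Q$-function case is identical, with the roles of the expectations and the added reward $r(s,a)$ adjusted according to the definitions given in the preliminaries.

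For $T^\star$, I would again fix $s \in \cS$ and observe that for every \emph{fixed} $a \in \cA$, the inner quantity
\[
r(s,a) + \gamma\,\mathbb{E}_{s'\sim P(\cdot\,|\,s,a)}[U(s')]
\]
is monotone in $U$ by the same expectation argument. Taking $\sup_{a \in \cA}$ of both sides then preserves the inequality, giving $T^\star U(s) \le T^\star \tilde U(s)$; pointwise this yields $T^\star U \le T^\star \tilde U$. The $Q$-version is entirely analogous: the $\sup_{a'}$ sits inside the expectation over $s'$, but both operations are monotone, so the composition is too.

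There really is no obstacle; the only thing to be a little careful about is that the statement is meant to hold in the generality (possibly infinite state and action spaces) assumed throughout the paper, so one should invoke the well-definedness assumption on $T$ in the appendix so that the expectations and suprema all make sense. Given that, the proof is a one-line observation that the Bellman operators are built out of monotone primitives.
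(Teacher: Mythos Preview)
Your proposal is correct and is the standard argument. The paper itself does not supply a proof for this lemma; it is stated as a citation to \cite[Lemma~1.1.1]{bertsekas2015dynamic} and used as a known fact, so your write-up is more than what the paper provides.
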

\begin{lemma}\label{lem::nonexp_prob_kernel}
  Let $0<\gamma\le 1$. For any policy $\pi$, $\cP^{\pi}$ is a nonexpansive linear operator such that if $U \le \tilde{U}$, $\cP^{\pi} U\le \cP^{\pi}\tilde{U}$.
\end{lemma}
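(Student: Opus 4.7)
The plan is to verify the three claimed properties of $\cP^\pi$ directly from its definition as an expectation operator induced by a Markov kernel. Recall that for the $V$-case, $(\cP^\pi U)(s) = \mathbb{E}_{a\sim\pi(\cdot\,|\,s),\,s'\sim P(\cdot\,|\,s,a)}[U(s')]$, and similarly in the $Q$-case $(\cP^\pi U)(s,a) = \mathbb{E}_{s'\sim P(\cdot\,|\,s,a),\,a'\sim \pi(\cdot\,|\,s')}[U(s',a')]$. All three properties (linearity, monotonicity, $\|\cdot\|_\infty$-nonexpansiveness) then reduce to standard facts about expectations with respect to a probability measure.

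First, I would establish linearity. Since expectation is linear in its integrand, for any $\alpha,\beta\in\mathbb{R}$ and $U,\tilde U$ we immediately get $\cP^\pi(\alpha U + \beta \tilde U) = \alpha \cP^\pi U + \beta \cP^\pi \tilde U$ pointwise, which is linearity. Next, for monotonicity, if $U\le \tilde U$ pointwise, then $U(s')\le \tilde U(s')$ for every $s'$ (resp.\ every $(s',a')$), so taking expectation with respect to the nonnegative probability measure induced by $\pi$ and $P$ preserves the inequality, yielding $\cP^\pi U\le \cP^\pi \tilde U$.

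Finally, for $\|\cdot\|_\infty$-nonexpansiveness, I would apply Jensen's inequality (or equivalently, the triangle inequality for integrals) to obtain, for each $s$,
\begin{align*}
|(\cP^\pi U)(s)-(\cP^\pi \tilde U)(s)|
&= \bigl|\mathbb{E}[U(s')-\tilde U(s')]\bigr|
\le \mathbb{E}\bigl[|U(s')-\tilde U(s')|\bigr]
\le \|U-\tilde U\|_\infty,
\end{align*}
where the last step uses that the expectation of a quantity bounded by $\|U-\tilde U\|_\infty$ is itself bounded by $\|U-\tilde U\|_\infty$. Taking the supremum over $s$ (resp.\ $(s,a)$) gives $\|\cP^\pi U-\cP^\pi \tilde U\|_\infty \le \|U-\tilde U\|_\infty$.

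There is essentially no obstacle here; the lemma is a direct unpacking of definitions together with the elementary properties of expectation against a probability measure. The only minor care needed is well-definedness in the general (possibly infinite) state-action setting, where one should note that $U,\tilde U$ are bounded measurable functions (so the expectations exist and are finite), which is precisely the ambient setting of $\cF(\cS)$ and $\cF(\cS\times\cA)$ introduced in the preliminaries.
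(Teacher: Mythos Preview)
Your proof is correct and entirely self-contained. The paper, however, takes a shorter indirect route: it observes that if the reward is identically zero then $T^{\pi}=\gamma\cP^{\pi}$, and then reads off monotonicity of $\cP^{\pi}$ from Lemma~\ref{lem::monotonicity} (monotonicity of $T^{\pi}$) and nonexpansiveness of $\cP^{\pi}$ from the known $\gamma$-contractivity of $T^{\pi}$. So the paper leverages properties already stated for the Bellman operator rather than re-deriving expectation facts; your approach is more elementary and transparent, while the paper's is more economical given the surrounding material.
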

\begin{proof} If $r(s,a)=0$ for all $s \in \cS$ and $a \in \cA$, $T^{\pi}=\gamma \cP^{\pi}$. Then by Lemma \ref{lem::monotonicity} and $\gamma$-contraction of $T^{\pi}$, we have the desired result.  
%By definition of $T^{\pi}$, $\cP^{\pi}$ is linear and $\infn{T^{\pi}U-T^{\pi}\tilde{U}}=\gamma\infn{\cP^{\pi}U- \cP^{\pi}\tilde{U}} \le \gamma\infn{\tilde{U}-U}$. 
% First, linearity comes from linearity of expectation. 
% If $U=V$, for all $s \in \cS$,
%     \begin{align*}
% \abs{\mathbb{E}_{a \sim \pi(\cdot\,|\,s), s'\sim P(\cdot\,|\,s,a) }\left[ V(s')\right]} &\le \mathbb{E}_{a \sim \pi(\cdot\,|\,s), s'\sim P(\cdot\,|\,s,a) }\left[ |V(s')|\right]\\&\le \mathbb{E}_{a \sim \pi(\cdot\,|\,s), s'\sim P(\cdot\,|\,s,a) }\left[ \infn{V}\right]\\&= \infn{V} 
% \end{align*}
% where the first inequality is from Jensen's inequality. Similarly, for all $(s,a) \in \cS \times \cA$,
% \begin{align*}
%     \abs{\mathbb{E}_{a' \sim \pi(\cdot\,|\,s'), s'\sim P(\cdot\,|\,s,a)}\left[Q(s',a')\right]} &\le \mathbb{E}_{a' \sim \pi(\cdot\,|\,s'), s'\sim P(\cdot\,|\,s,a)}\left[\abs{Q(s',a')}\right]\\
%     &\le \mathbb{E}_{a' \sim \pi(\cdot\,|\,s'), s'\sim P(\cdot\,|\,s,a)}\left[\infn{Q}\right]\\
%     &= \infn{Q}
% \end{align*}
% where the first inequality is from Jensen's inequality.
\end{proof}
\begin{lemma}\label{lem::anti-fixed-point}
  Let $0<\gamma< 1$. Let $T^{\star}$ and $\hat{T}^{\star}$ respectively be the Bellman optimality and anti-optimality operators.
Let $U^\star$ and $\hat{U}^\star$ respectively be the fixed points of $T^{\star}$ and $\hat{T}^{\star}$. Then $\hat{U}^{\star}  \le U^\star $. 
\end{lemma}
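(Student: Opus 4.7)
The plan is to compare the two operators pointwise and then use monotonicity of $T^\star$ together with its contractivity to pass to the limit. The starting observation is that for any $U$ and any $(s,a)$, replacing an infimum by a supremum can only increase the value, so
\[
\hat{T}^{\star} U \;\le\; T^{\star} U
\]
holds pointwise, for both the $V$- and $Q$-versions of the operators. Applying this to $U=\hat{U}^{\star}$ and using that $\hat{U}^{\star}$ is the fixed point of $\hat{T}^{\star}$, I get $\hat{U}^{\star}=\hat{T}^{\star}\hat{U}^{\star}\le T^{\star}\hat{U}^{\star}$.

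Next, I would iterate this inequality. By the monotonicity of $T^\star$ (Lemma~\ref{lem::monotonicity}), the inequality $\hat{U}^{\star}\le T^{\star}\hat{U}^{\star}$ gives, by induction on $k$,
\[
\hat{U}^{\star}\;\le\;(T^{\star})^{k}\hat{U}^{\star}\qquad\text{for all }k\ge 0.
\]
Since $0<\gamma<1$, the operator $T^{\star}$ is a $\gamma$-contraction in $\|\cdot\|_\infty$ with unique fixed point $U^{\star}$, so $(T^{\star})^{k}\hat{U}^{\star}\to U^{\star}$ uniformly (hence pointwise) as $k\to\infty$. Passing to the pointwise limit preserves the inequality, giving $\hat{U}^{\star}\le U^{\star}$.

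There is no real obstacle here; the only thing worth being careful about is that the inequality $\hat{T}^{\star}U\le T^{\star}U$ holds pointwise in $s$ (and $(s,a)$ for the $Q$-case), and that pointwise inequalities are preserved under the $\|\cdot\|_\infty$-limit. Both are immediate, so the proof is essentially a two-line application of the monotonicity and contraction properties recorded in Lemmas~\ref{lem::monotonicity} and the Banach fixed-point theorem.
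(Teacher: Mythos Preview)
Your proof is correct and follows essentially the same approach as the paper: the paper also starts from $\hat{U}^{\star}=\hat{T}^{\star}\hat{U}^{\star}\le T^{\star}\hat{U}^{\star}$ and then passes to $\lim_{m\to\infty}(T^{\star})^{m}\hat{U}^{\star}=U^{\star}$. You simply spell out the monotonicity induction and the limit step more explicitly.
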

\begin{proof}
    By definition, $\hat{U}^{\star}=\hat{T}^{\star}\hat{U}^{\star} \le T^{\star}\hat{U}^{\star}$. Thus, $\hat{U}^{\star} \le \lim_{m\rightarrow \infty} \para{T^{\star}}^m\hat{U}^{\star}= U^{\star}$. 
\end{proof}

\section{Omitted proofs in Section \ref{sec::Anc-VI} }\label{s::omitted-Anc-VI-proofs}
 First, we prove the following lemma by induction. 
\begin{lemma}\label{lem::Anc-VIE-1}
Let $0<\gamma\le 1$, and if $\gamma=1$, assume a fixed point $U^{\pi}$ exists. For the iterates $\{U^k\}_{k=0,1,\dots}$ of \ref{eq:anc-vi},
    \begin{align*}
        T^{\pi}U^k-U^k&=\sum_{i=1}^k\left[\para{\beta_i-\beta_{i-1}(1-\beta_{i})}\para{\Pi^k_{j=i+1}(1-\beta_j)}\para{\gamma\cP^{\pi}}^{k-i+1}(U^0-U^{\pi})\right]\\
    &\quad -\beta_k(U^0-U^{\pi})+\para{\Pi^k_{j=1}(1-\beta_j)} \para{\gamma\cP^{\pi}}^{k+1}(U^0-U^{\pi})
    \end{align*}
    where %$\sum_{i=1}^{k=0}\left[\para{\beta_i-\beta_{i-1}(1-\beta_{i})}\para{\Pi^k_{j=i+1}(1-\beta_j)}\para{\gamma\cP^{\pi}}^{k-i+1}(U^0-U^{\pi})\right]=0, 
    $\para{\Pi^k_{j=k+1}(1-\beta_j)}=1$ and $\beta_0=1$.
\end{lemma}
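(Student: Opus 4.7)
The plan is to prove the identity by induction on $k$, exploiting only two facts: linearity of $T^\pi$ (so $r^\pi$ can be cancelled out) and the fixed-point identity $T^\pi U^\pi = U^\pi$. The case $\gamma=1$ is handled exactly as $\gamma<1$ once we assume that $U^\pi$ exists.

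For the base case $k=0$, the sum is empty, $\beta_0=1$, and $\Pi^0_{j=1}(1-\beta_j)=1$, so the claim reduces to
\[T^\pi U^0 - U^0 = \gamma\cP^\pi(U^0-U^\pi) - (U^0-U^\pi),\]
which follows by writing $T^\pi U^0 = r^\pi + \gamma\cP^\pi U^0$ and $U^\pi = r^\pi + \gamma\cP^\pi U^\pi$ and subtracting.

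For the inductive step, I would first establish a one-step recursion. Starting from $T^\pi U^k - U^k$, insert $\beta_k T^\pi U^\pi - \beta_k U^\pi = 0$ and $(1-\beta_k)T^\pi U^{k-1} - (1-\beta_k)T^\pi U^{k-1} = 0$, use the \ref{eq:anc-vi} update $U^k = \beta_k U^0 + (1-\beta_k)T^\pi U^{k-1}$ to eliminate $U^k$ on one side, and then invoke the linearity of $T^\pi$ to cancel the reward terms (since $1 - (1-\beta_k) - \beta_k = 0$). The outcome is the recursion
\[T^\pi U^k - U^k = \gamma\cP^\pi\bigl[\beta_k(U^0-U^\pi) + (1-\beta_k)(T^\pi U^{k-1} - U^{k-1})\bigr] - \beta_k(U^0-U^\pi).\]

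Next, I would substitute the inductive hypothesis for $T^\pi U^{k-1} - U^{k-1}$ into this recursion and absorb the prefactor $(1-\beta_k)$ into the products using $(1-\beta_k)\Pi^{k-1}_{j=i+1}(1-\beta_j) = \Pi^{k}_{j=i+1}(1-\beta_j)$ for each $1\le i\le k-1$. This reproduces the sum from $i=1$ to $i=k-1$ and the trailing term $(\Pi^{k}_{j=1}(1-\beta_j))(\gamma\cP^\pi)^{k+1}(U^0-U^\pi)$ with the correct shape. The leftover $\gamma\cP^\pi(U^0-U^\pi)$ contributions, namely $\beta_k\gamma\cP^\pi(U^0-U^\pi)$ from the fresh anchor and $-(1-\beta_k)\beta_{k-1}\gamma\cP^\pi(U^0-U^\pi)$ from the inductive hypothesis, combine to $(\beta_k - \beta_{k-1}(1-\beta_k))\gamma\cP^\pi(U^0-U^\pi)$, which is precisely the missing $i=k$ summand with $\Pi^k_{j=k+1}(1-\beta_j)=1$ and $(\gamma\cP^\pi)^{k-k+1}=\gamma\cP^\pi$. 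Adding the outside term $-\beta_k(U^0-U^\pi)$ completes the formula for index $k$.

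The main obstacle is purely bookkeeping: keeping the indices on the products $\Pi(1-\beta_j)$, the powers of $\gamma\cP^\pi$, and the coefficients $\beta_i - \beta_{i-1}(1-\beta_i)$ aligned when the $i=k$ boundary term is produced by merging two a priori unrelated $\gamma\cP^\pi(U^0-U^\pi)$ contributions. No structural difficulty arises beyond this, since linearity of $T^\pi$ (the step marked $\blacktriangle$ in the outline of Theorem~\ref{thm::Anc-VIE}) is the only non-algebraic ingredient and it is invoked just once, inside the one-step recursion.
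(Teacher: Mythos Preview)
Your proposal is correct and follows essentially the same approach as the paper's proof: induction on $k$, with the base case $k=0$ handled via $T^\pi U^0 - U^0 = \gamma\cP^\pi(U^0-U^\pi)-(U^0-U^\pi)$, and the inductive step obtained by first deriving the one-step recursion $T^\pi U^k - U^k = \gamma\cP^\pi\bigl[\beta_k(U^0-U^\pi)+(1-\beta_k)(T^\pi U^{k-1}-U^{k-1})\bigr]-\beta_k(U^0-U^\pi)$ through linearity of $T^\pi$, then substituting the induction hypothesis and merging the two $\gamma\cP^\pi(U^0-U^\pi)$ contributions into the $i=k$ summand. The paper's proof is organized identically; your bookkeeping description matches its calculation line by line.
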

\begin{proof}
If $k=0$, we have 
\begin{align*}
    T^{\pi}U^0-U^0&=T^{\pi}U^0-U^{\pi}-(U^0-U^{\pi})\\
    &=T^{\pi}U^0-T^{\pi}U^{\pi}-(U^0-U^{\pi})\\
    &=\gamma\cP^{\pi}\para{U^0-U^{\pi}}-(U^0-U^{\pi})\\
\end{align*}
If $k=m$, since $T^{\pi}$ is a linear operator,
\begin{align*}
    T^{\pi}U^m-U^m&=T^{\pi}U^{m}-(1-\beta_m)T^{\pi}U^{m-1}-\beta_mU^0\\
    &=T^{\pi}U^{m}-(1-\beta_{m})T^{\pi}U^{m-1}-\beta_{m}U^{\pi}-\beta_{m}(U^0-U^{\pi})\\
    &=T^{\pi}U^{m}-(1-\beta_{m})T^{\pi}U^{m-1}-\beta_{m}T^{\pi}U^{\pi}-\beta_{m}(U^0-U^{\pi})\\
    &=\gamma\cP^{\pi}(U^{m}-(1-\beta_{m})U^{m-1}-\beta_{m}U^{\pi})-\beta_{m}(U^0-U^{\pi})\\
    &=\gamma\cP^{\pi}(\beta_{m}(U^0-U^{\pi})+(1-\beta_{m})(T^{\pi}U^{m-1}-U^{m-1}))-\beta_{m}(U^0-U^{\pi})\\
    &=(1-\beta_m)\gamma\cP^{\pi}\sum_{i=1}^{m-1}\left[\para{\beta_{i}-\beta_{i-1}(1-\beta_{i})}\para{\Pi^{m-1}_{j=i+1}(1-\beta_j)}\para{\gamma\cP^{\pi}}^{m-1-i+1}(U^0-U^{\pi})\right]\\
    &\quad -(1-\beta_m)\gamma\cP^{\pi}\beta_{m-1}(U^0-U^{\pi})+(1-\beta_m)\gamma\cP^{\pi}\para{\Pi^{m-1}_{j=1}(1-\beta_j)} \para{\gamma\cP^{\pi}}^{m}(U^0-U^{\pi})\\
    &\quad +\beta_{m}\gamma\cP^{\pi}(U^0-U^{\pi})-\beta_{m}(U^0-U^{\pi})\\
    &=\sum_{i=1}^{m-1}\left[\para{\beta_{i}-\beta_{i-1}(1-\beta_{i})}\para{\Pi^{m}_{j=i+1}(1-\beta_j)}\para{\gamma\cP^{\pi}}^{m-i+1}(U^0-U^{\pi})\right]\\
    &\quad -\beta_{m-1}(1-\beta_m)\gamma\cP^{\pi}(U^0-U^{\pi})+\beta_{m}\gamma\cP^{\pi}(U^0-U^{\pi})\\
    &\quad -\beta_{m}(U^0-U^{\pi})+\para{\Pi^m_{j=1}(1-\beta_j)} \para{\gamma\cP^{\pi}}^{m+1}(U^0-U^{\pi})\end{align*}
\begin{align*}
    &=\sum_{i=1}^m\left[\para{\beta_{i}-\beta_{i-1}(1-\beta_{i})}\para{\Pi^m_{j=i+1}(1-\beta_j)}\para{\gamma\cP^{\pi}}^{m-i+1}(U^0-U^{\pi})\right]\\
    &\quad -\beta_{m}(U^0-U^{\pi})+\para{\Pi^m_{j=1}(1-\beta_j)} \para{\gamma\cP^{\pi}}^{m+1}(U^0-U^{\pi})  
\end{align*}    
\end{proof}
%     Since $T^{\pi}$ is linear operator,
% \[U^k=\sum_{k=0}^n \beta_k \Pi^n_{i=k+1}(1-\beta_i) \para{T^{\pi}}^{n-k}U^0,\quad T^{\pi}U^k=\sum_{k=0}^n\beta_k\Pi^n_{i=k+1}(1-\beta_i) \para{T^{\pi}}^{n-k+1}U^0.\]
% By property of fixed point, $U^{\star}=\para{T^{\pi}}^k U^{\star}$ and $\para{T^{\pi}}^kU^0-\para{T^{\pi}}^k U^{\star}=\para{\cP^{\pi}}^k U^0-\para{\cP^{\pi}}^kU^{\star}$ for $k \in \mathrm{N}$. Thus, with $\sum_{k=0}^n \beta_k \Pi^n_{i=k+1}(1-\beta_i)=1$, we get    
% \begin{align*}
%     U^k-U^{\star}&=\sum_{k=0}^n \beta_k \Pi^n_{i=k+1}(1-\beta_i) \para{T^{\pi}}^{n-k}U^0-\sum_{k=0}^n \beta_k\Pi^n_{i=k+1}(1-\beta_i) \para{T^{\pi}}^{n-k}U^{\star}\\&=\sum_{k=0}^n \beta_k \Pi^n_{i=k+1}(1-\beta_i) \para{\cP^{\pi}}^{n-k}(U^0-U^{\star})
%     \end{align*}
%     and similarly,
% \[T^{\pi}U^k-U^{\star}=\sum_{k=0}^n\beta_k\Pi^n_{i=k+1}(1-\beta_i) \para{\cP^{\pi}}^{n-k+1}(U^0-U^{\star}).\]

% Then, we have 
% \begin{align*}
%     T^{\pi}U^k-U^{\star}-\para{U^k-U^{\star}}&=\sum_{k=1}^n[\beta_k-\beta_{k-1}(1-\beta_{k})]\Pi^n_{i=k+1}(1-\beta_i)\para{\cP^{\pi}}^{n-k+1}(U^0-U^{\star})\\&\quad -\beta_n(U^0-U^{\star})+\Pi^n_{i=1}(1-\beta_i) \para{\cP^{\pi}}^{n+1}(U^0-U^{\star}).\end{align*}
Now, we prove the first rate of Theorem \ref{thm::Anc-VIE}. 

\begin{proof}[Proof of first rate in Theorem \ref{thm::Anc-VIE}]
Taking $\infn{\cdot}$-norm both sides of equality in Lemma \ref{lem::Anc-VIE-1}, we get 
\begin{align*}
       \infn{T^{\pi}U^k-U^k} &\le\sum_{i=1}^k\left|\beta_i-\beta_{i-1}(1-\beta_{i})\right|\para{\Pi^k_{j=i+1}(1-\beta_j)}\infn{\para{\gamma\cP^{\pi}}^{k-i+1}(U^0-U^{\pi})}\\&\quad +\beta_k\infn{U^0-U^{\pi}}+\para{\Pi^k_{i=1}(1-\beta_i)} \infn{\para{\gamma\cP^{\pi}}^{k+1}(U^0-U^{\pi})}\\
    &\le \bigg(\sum_{i=1}^k\gamma^{k-i+1}\left|\beta_i-\beta_{i-1}(1-\beta_{i})\right|\para{\Pi^k_{j=i+1}(1-\beta_j)}+\beta_k+\gamma^{k+1}\Pi^k_{j=1}(1-\beta_j)\bigg)\\&\quad \infn{U^0-U^{\star}}\\
    &=\para{\sum_{i=1}^k\gamma^{k+i-1}\frac{\para{1-\gamma^{2}}^2}{1-\gamma^{(2k+2)}}+\gamma^{2k}\frac{1-\gamma^{2}}{1-\gamma^{2k+2}}+\gamma^{k+1}\frac{1-\gamma^{2}}{1-\gamma^{2k+2}}}\infn{U^0-U^{\pi}}\\
     &= \frac{\para{\gamma^{-1}-\gamma}\para{1+2\gamma-\gamma^{k+1}}}{\para{\gamma^{k+1}}^{-1}-\gamma^{k+1}}\infn{U^0-U^{\pi}},
     %\frac{\para{1+\gamma}\para{1+2\gamma-\gamma^{k+1}}}{\para{1+\gamma^{k+1}}}\frac{\gamma^{k}}{\sum_{i=0}^k \gamma^{i}}\infn{U^0-U^{\pi}}, 
\end{align*}
where the first inequality comes from triangular inequality, second inequality is from 
Lemma \ref{lem::nonexp_prob_kernel}, and equality come from calculations.
\end{proof}

For the second rate of Theorem \ref{thm::Anc-VIE}, we introduce following lemma.
\begin{lemma}\label{lem::Inital_cond}
Let $0<\gamma<1$. Let $T$ be Bellman consistency or optimality operator. For the iterates $\{U^k\}_{k=0,1,\dots}$ of \ref{eq:anc-vi}, if $U^0 \le TU^0$,  then $U_{k-1} \le U_k \le TU_{k-1} \le TU_k \le U^{\star}$ for $1 \le k$. Also, if $U^0 \ge TU^0$, then $U_{k-1} \ge U_k \ge TU_{k-1} \ge TU_k \ge U^{\star}$ for $1 \le k$.
\end{lemma}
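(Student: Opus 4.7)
The plan is to proceed by induction on $k$, handling the case $U^0 \le TU^0$ first and noting that the case $U^0 \ge TU^0$ is entirely symmetric (one simply flips every inequality). Before starting the induction, I would first observe that iterating $U^0 \le TU^0$ against the monotonicity of $T$ (Lemma \ref{lem::monotonicity}) produces a monotone nondecreasing sequence $U^0 \le TU^0 \le T^2U^0 \le \cdots$ which converges to $U^{\star}$ by Banach's fixed-point theorem, hence $U^0 \le U^{\star}$. This small observation will be used repeatedly to close off the chain at $U^{\star}$.

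For the base case $k=1$, the identity $U^1 = \beta_1 U^0 + (1-\beta_1)TU^0$ writes $U^1$ as a convex combination of $U^0$ and $TU^0$, so the assumption $U^0 \le TU^0$ gives $U^0 \le U^1 \le TU^0$ directly. Monotonicity of $T$ (from $U^0 \le U^1$) then yields $TU^0 \le TU^1$, and $TU^1 \le TU^{\star}=U^{\star}$ follows from $U^1 \le TU^0 \le U^{\star}$ together with monotonicity.

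The inductive step is the main (mild) obstacle, because the anchor term $\beta_{k+1}U^0$ pulls back toward an \emph{earlier} iterate, so the monotonicity $U^k \le U^{k+1}$ is not immediate from the update rule. I would handle this with the telescoping identity
\begin{align*}
U^{k+1}-U^k \;=\; (\beta_k-\beta_{k+1})\bigl(TU^{k-1}-U^0\bigr)+(1-\beta_{k+1})\bigl(TU^k-TU^{k-1}\bigr),
\end{align*}
obtained from the definition of \ref{eq:anc-vi}. Since $\beta_k$ is decreasing, the scalar coefficients are nonnegative; the induction hypothesis gives $TU^{k-1} \ge U^{k-1} \ge \cdots \ge U^0$, so the first bracket is nonnegative; and monotonicity of $T$ applied to $U^{k-1} \le U^k$ makes the second bracket nonnegative. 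Hence $U^k \le U^{k+1}$. The remaining inequalities follow cheaply: $TU^k - U^{k+1} = \beta_{k+1}(TU^k - U^0) \ge 0$ gives $U^{k+1} \le TU^k$; monotonicity of $T$ applied to $U^k \le U^{k+1}$ gives $TU^k \le TU^{k+1}$; and finally $U^{k+1}$ is a convex combination of $U^0 \le U^{\star}$ and $TU^k \le U^{\star}$, so $U^{k+1} \le U^{\star}$ and thus $TU^{k+1} \le U^{\star}$.

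For the second assertion, I would simply swap every $\le$ for $\ge$ and re-use the exact same identities: the signs in the telescoping identity flip in lockstep, the monotone sequence $T^m U^0$ now decreases to $U^{\star}$, and every step of the argument goes through verbatim. The only property of $T$ used anywhere is its monotonicity, so the proof is uniform across both the Bellman consistency and optimality operators, as the lemma asserts.
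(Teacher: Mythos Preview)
Your proof is correct and follows essentially the same approach as the paper's: both argue by induction, using monotonicity of $T$ together with the fact that $\beta_k$ is decreasing to push through the chain of inequalities, and then flip everything for the second case. Your telescoping identity for $U^{k+1}-U^k$ is just an algebraic repackaging of the paper's two-step comparison $\beta_k U^0+(1-\beta_k)TU^{k-1}\ge \beta_{k-1}U^0+(1-\beta_{k-1})TU^{k-1}\ge \beta_{k-1}U^0+(1-\beta_{k-1})TU^{k-2}=U^{k-1}$, and your way of closing off at $U^{\star}$ (establishing $U^0\le U^{\star}$ upfront) is a harmless variant of the paper's $U^k\le TU^k\Rightarrow U^k\le\lim_m T^mU^k=U^{\star}$.
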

\begin{proof}
First, let $U^0 \le TU^0$. If $k=1$, $ U^0\le \beta_1U^0+(1-\beta_1) TU^0=U_1\ \le TU^0$ by assumption. Since $U^0 \le U^1$, $T U^0 \le T U^1$ by monotonicity of Bellman consistency and optimality operators.

By induction, 
    \[ U^k=\beta_k U^{0}+(1-\beta_k) TU^{k-1} \le T U^{k-1}, \]
    and since  $\beta_k \le \beta_{k-1}$,
    \begin{align*}
        \beta_kU^{0}+(1-\beta_k) TU^{k-1} &\ge \beta_{k-1}U^{0}+(1-\beta_{k-1}) TU^{k-1}\\
        &\ge \beta_{k-1}U^{0}+(1-\beta_{k-1}) TU^{k-2}\\
        &=U^{k-1}.
    \end{align*}
    Also, $U^{k-1} \le U^k$ implies $TU^{k-1} \le TU^k $ by monotonicity of Bellman consistency and optimality operators, and $U^k\le TU^k$ implies that $ U^k \le \lim_{m \rightarrow \infty} \para{T}^m U^k = U^{\star}$ for all $k=0,1,\dots$.

Now, suppose $U^0 \ge TU^0$. If $k=1$, $ U^0\ge \beta_1U^0+(1-\beta_1) TU^0=U_1\ \ge TU^0$ by assumption. Since $U^0 \ge U^1$, $T U^0 \ge T U^1$ by monotonicity of Bellman consistency and optimality operators.
  
    By induction, 
    \[ U^k=\beta_k U^{0}+(1-\beta_k) TU^{k-1} \ge T U^{k-1}, \]
    and since  $\beta_k \le \beta_{k-1}$,
    \begin{align*}
        \beta_kU^{0}+(1-\beta_k) TU^{k-1} &\le \beta_{k-1}U^{0}+(1-\beta_{k-1}) TU^{k-1}\\
        &\le \beta_{k-1}U^{0}+(1-\beta_{k-1}) TU^{k-2}\\
        &=U^{k-1}.
    \end{align*}
    Also, $U^{k-1} \ge U^k$ implies $TU^{k-1} \ge TU^k $ by monotonicity of Bellman consistency and optimality operators, and $U_k \ge TU_k$ implies that $ U^k \ge \lim_{m \rightarrow \infty} \para{T}^m U^k = U^{\star}$ for all $k=0,1,\dots$.
    \end{proof}

Now, we prove following key lemmas.
\begin{lemma}\label{lem::Anc-VIE-2-1}
Let $0<\gamma\le 1$, and assume a fixed point $U^{\pi}$ exists if $\gamma=1$. For the iterates $\{U^k\}_{k=0,1,\dots}$ of \ref{eq:anc-vi}, if $U^0 \le U^{\pi}$,  
    \begin{align*}
        T^{\pi}U^k-U^k& \le \sum_{i=1}^k\left[\para{\beta_i-\beta_{i-1}(1-\beta_{i})}\para{\Pi^k_{j=i+1}(1-\beta_j)}\para{\gamma\cP^{\pi}}^{k-i+1}(U^0-U^{\pi})\right]\\
    &\quad -\beta_k(U^0-U^{\pi}),
    \end{align*}
    where %$\sum_{i=1}^{k=0}\left[\para{\beta_i-\beta_{i-1}(1-\beta_{i})}\para{\Pi^k_{j=i+1}(1-\beta_j)}\para{\gamma\cP^{\pi}}^{k-i+1}(U^0-U^{\pi})\right]=0, 
    $\para{\Pi^k_{j=k+1}(1-\beta_j)}=1$ and $\beta_0=1$.
\end{lemma}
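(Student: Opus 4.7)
The plan is to derive the stated inequality by starting from the exact equality already established in Lemma~\ref{lem::Anc-VIE-1} and then showing that the ``extra'' term appearing there is pointwise non-positive under the assumption $U^0 \le U^{\pi}$, so it can be dropped to yield an upper bound.

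Concretely, Lemma~\ref{lem::Anc-VIE-1} gives the identity
\begin{align*}
T^{\pi}U^k-U^k &= \sum_{i=1}^k\left[\para{\beta_i-\beta_{i-1}(1-\beta_{i})}\para{\Pi^k_{j=i+1}(1-\beta_j)}\para{\gamma\cP^{\pi}}^{k-i+1}(U^0-U^{\pi})\right]\\
&\quad -\beta_k(U^0-U^{\pi}) + \para{\Pi^k_{j=1}(1-\beta_j)} \para{\gamma\cP^{\pi}}^{k+1}(U^0-U^{\pi}).
\end{align*}
Comparing to the claimed bound, the only discrepancy is the final summand $\para{\Pi^k_{j=1}(1-\beta_j)}\para{\gamma\cP^{\pi}}^{k+1}(U^0-U^{\pi})$, so the whole proof reduces to showing this term is $\le 0$ elementwise.

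Here is where I would invoke the two structural facts already at hand. By assumption $U^0-U^{\pi}\le 0$ elementwise. By Lemma~\ref{lem::nonexp_prob_kernel}, the operator $\cP^{\pi}$ is order-preserving, so $\cP^{\pi}(U^0-U^{\pi})\le 0$, and iterating this (together with the fact that $\gamma\ge 0$) yields $\para{\gamma\cP^{\pi}}^{k+1}(U^0-U^{\pi})\le 0$. Finally, because each $\beta_j\in(0,1)$, the product $\Pi^k_{j=1}(1-\beta_j)$ is a nonnegative scalar, so multiplying preserves the inequality direction, giving
\[
\para{\Pi^k_{j=1}(1-\beta_j)}\para{\gamma\cP^{\pi}}^{k+1}(U^0-U^{\pi})\le 0.
\]
Dropping this non-positive term from the right-hand side of the equality then yields the stated upper bound.

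There is no real obstacle here: the lemma is essentially a straightforward sign/monotonicity consequence of the already-established equality, and the assumption $U^0\le U^{\pi}$ is tailored exactly so that the ``leftover'' telescoping tail is nonpositive. The only minor care-point is to verify that the coefficient $\Pi^k_{j=1}(1-\beta_j)$ is indeed nonnegative (which is immediate from $\beta_j=1/\sum_{i=0}^j\gamma^{-2i}\in(0,1)$) so that the inequality direction is preserved when multiplying through. This lemma will then serve as the analogue of Lemma~\ref{lem::Anc-VIE-1} used to derive the sharper ``second rate'' of Theorem~\ref{thm::Anc-VIE} in the same way the first rate was derived from Lemma~\ref{lem::Anc-VIE-1}.
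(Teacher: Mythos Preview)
Your proposal is correct and matches the paper's own proof essentially line for line: both start from the equality of Lemma~\ref{lem::Anc-VIE-1} and drop the nonpositive tail term $\para{\Pi^k_{j=1}(1-\beta_j)}\para{\gamma\cP^{\pi}}^{k+1}(U^0-U^{\pi})$, using $U^0\le U^{\pi}$ together with the monotonicity of $\cP^{\pi}$ and nonnegativity of the scalar product.
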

\begin{lemma}\label{lem::Anc-VIE-2-2}
Let $0<\gamma < 1$. For the iterates $\{U^k\}_{k=0,1,\dots}$ of \ref{eq:anc-vi}, if $U^0 \ge T^{\pi}U^0$,
\begin{align*}
        T^{\pi}U^k-U^k& \ge \sum_{i=1}^k\left[\para{\beta_i-\beta_{i-1}(1-\beta_{i})}\para{\Pi^k_{j=i+1}(1-\beta_j)}\para{\gamma\cP^{\pi}}^{k-i+1}(U^0-U^{\pi})\right]\\
    &\quad -\beta_k(U^0-U^{\pi}),
    \end{align*}
    where %$\sum_{i=1}^{k=0}\left[\para{\beta_i-\beta_{i-1}(1-\beta_{i})}\para{\Pi^k_{j=i+1}(1-\beta_j)}\para{\gamma\cP^{\pi}}^{k-i+1}(U^0-U^{\pi})\right]=0, 
    $\para{\Pi^k_{j=k+1}(1-\beta_j)}=1$ and $\beta_0=1$.
\end{lemma}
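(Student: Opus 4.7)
The plan is to view Lemma~\ref{lem::Anc-VIE-2-2} as a direct sign-analysis corollary of the exact identity already proved in Lemma~\ref{lem::Anc-VIE-1}, in the same way that Lemma~\ref{lem::Anc-VIE-2-1} is obtained. Concretely, Lemma~\ref{lem::Anc-VIE-1} gives
\begin{align*}
T^{\pi}U^k-U^k
&=\sum_{i=1}^k\left[\para{\beta_i-\beta_{i-1}(1-\beta_{i})}\para{\Pi^k_{j=i+1}(1-\beta_j)}\para{\gamma\cP^{\pi}}^{k-i+1}(U^0-U^{\pi})\right]\\
&\quad -\beta_k(U^0-U^{\pi})+\para{\Pi^k_{j=1}(1-\beta_j)} \para{\gamma\cP^{\pi}}^{k+1}(U^0-U^{\pi}),
\end{align*}
so the claimed lower bound is equivalent to the statement that the remainder term $R_k:=\para{\Pi^k_{j=1}(1-\beta_j)} \para{\gamma\cP^{\pi}}^{k+1}(U^0-U^{\pi})$ is pointwise nonnegative whenever $U^0\ge T^{\pi}U^0$.

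To establish $R_k\ge 0$, I would split off the scalar factor and the operator factor. Since each $\beta_j\in(0,1)$ for $0<\gamma<1$, the product $\Pi^k_{j=1}(1-\beta_j)$ is nonnegative. For the remaining factor $\para{\gamma\cP^{\pi}}^{k+1}(U^0-U^{\pi})$, I would first apply Lemma~\ref{lem::Inital_cond} with operator $T=T^{\pi}$: the hypothesis $U^0\ge T^{\pi}U^0$ yields $U^0\ge U^{\pi}$, i.e.\ $U^0-U^{\pi}\ge 0$ pointwise. Iterating the monotonicity statement of Lemma~\ref{lem::nonexp_prob_kernel} (that $\cP^{\pi}$, and hence $\gamma\cP^{\pi}$, preserves the pointwise order) $k+1$ times, I conclude $\para{\gamma\cP^{\pi}}^{k+1}(U^0-U^{\pi})\ge 0$, and therefore $R_k\ge 0$. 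Subtracting $R_k$ from both sides of the identity in Lemma~\ref{lem::Anc-VIE-1} then gives exactly the claimed inequality.

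There is essentially no technical obstacle here: the work was already done in Lemma~\ref{lem::Anc-VIE-1} (which exploited the linearity of $T^{\pi}$ to obtain the exact identity), in Lemma~\ref{lem::Inital_cond} (which converts the one-step inequality $U^0\ge T^{\pi}U^0$ into $U^0\ge U^{\pi}$), and in Lemma~\ref{lem::nonexp_prob_kernel} (monotonicity of $\cP^{\pi}$). The only small subtlety worth flagging is the direction of the inequality: for Lemma~\ref{lem::Anc-VIE-2-1} one assumes $U^0\le U^{\pi}$ so that $R_k\le 0$ and the identity becomes an upper bound, whereas here the opposite sign convention makes $R_k\ge 0$ and yields the lower bound. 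The restriction to $\gamma<1$ (rather than $\gamma\le 1$) simply ensures the unique fixed point $U^{\pi}$ exists so that Lemma~\ref{lem::Inital_cond} can be invoked without an extra existence hypothesis.
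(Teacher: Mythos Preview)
Your proposal is correct and follows essentially the same route as the paper's own proof: invoke Lemma~\ref{lem::Inital_cond} to turn $U^0\ge T^{\pi}U^0$ into $U^0-U^{\pi}\ge 0$, then drop the nonnegative remainder term $\para{\Pi^k_{j=1}(1-\beta_j)}\para{\gamma\cP^{\pi}}^{k+1}(U^0-U^{\pi})$ from the identity of Lemma~\ref{lem::Anc-VIE-1}. If anything, you are slightly more explicit than the paper in citing Lemma~\ref{lem::nonexp_prob_kernel} for the order-preservation of $\cP^{\pi}$.
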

\begin{proof}[Proof of Lemma \ref{lem::Anc-VIE-2-1}]
    If $U^0\le U^{\pi},$ we get 
    \begin{align*}
        T^{\pi}U^k-U^k&=\sum_{i=1}^k\left[\para{\beta_i-\beta_{i-1}(1-\beta_{i})}\para{\Pi^k_{j=i+1}(1-\beta_j)}\para{\gamma\cP^{\pi}}^{k-i+1}(U^0-U^{\pi})\right]\\
    &\quad -\beta_k(U^0-U^{\pi})+\para{\Pi^k_{j=1}(1-\beta_j)} \para{\gamma\cP^{\pi}}^{k+1}(U^0-U^{\pi})\\
    &\le \sum_{i=1}^k\left[\para{\beta_i-\beta_{i-1}(1-\beta_{i})}\para{\Pi^k_{j=i+1}(1-\beta_j)}\para{\gamma\cP^{\pi}}^{k-i+1}(U^0-U^{\pi})\right] -\beta_k(U^0-U^{\pi}),  
    \end{align*}
by Lemma \ref{lem::Anc-VIE-1} and the fact that $\para{\Pi^k_{j=1}(1-\beta_j)} \para{\gamma\cP^{\pi}}^{k+1}(U^0-U^{\pi})\le 0$.
\end{proof}
\begin{proof}[Proof of Lemma \ref{lem::Anc-VIE-2-2}]   
If $U^0 \ge TU^0$, $U^0-U^{\pi} \ge 0$ by Lemma \ref{lem::Inital_cond}. Hence, by Lemma \ref{lem::Anc-VIE-1}, we have 
    \begin{align*}
       T^{\pi}U^k-U^k \ge\sum_{i=1}^k\left[\para{\beta_i-\beta_{i-1}(1-\beta_{i})}\para{\Pi^k_{j=i+1}(1-\beta_j)}\para{\gamma\cP^{\pi}}^{k-i+1}(U^0-U^{\pi})\right]-\beta_k(U^0-U^{\pi}),
    \end{align*}
    since $0 \le \para{\Pi^k_{j=1}(1-\beta_j)} \para{\gamma\cP^{\pi}}^{k+1}(U^0-U^{\pi}) $.
\end{proof}
Now, we prove the second rates of Theorem \ref{thm::Anc-VIE}.

\begin{proof}[Proof of second rates in Theorem \ref{thm::Anc-VIE}]
  Let $0<\gamma<1$. By Lemma \ref{lem::Inital_cond}, if $U^0 \le T^{\pi}U^0$, then $U^0 \le U^{\pi}$. Hence,
    \begin{align*}
        0 &\le T^{\pi}U^k-U^k\\
    &\le \sum_{i=1}^k\left[\para{\beta_i-\beta_{i-1}(1-\beta_{i})}\para{\Pi^k_{j=i+1}(1-\beta_j)}\para{\gamma\cP^{\pi}}^{k-i+1}(U^0-U^{\pi})\right] -\beta_k(U^0-U^{\pi}),  
    \end{align*}
by Lemma \ref{lem::Anc-VIE-2-1}. Taking $\infn{\cdot}$-norm both sides, we have 
\[\infn{T^{\pi}U^k-U^k} \le  \frac{\para{\gamma^{-1}-\gamma}\para{1+\gamma-\gamma^{k+1}}}{\para{\gamma^{k+1}}^{-1}-\gamma^{k+1}}\infn{U^0-U^{\pi}}.\]
Otherwise, if $U^0 \ge TU^0$, $U^k \ge TU^k$ by Lemma \ref{lem::Inital_cond}. Since 
    \begin{align*}
        0 &\ge T^{\pi}U^k-U^k \\&\ge\sum_{i=1}^k\left[\para{\beta_i-\beta_{i-1}(1-\beta_{i})}\para{\Pi^k_{j=i+1}(1-\beta_j)}\para{\gamma\cP^{\pi}}^{k-i+1}(U^0-U^{\pi})\right]-\beta_k(U^0-U^{\pi}),
    \end{align*}
by Lemma \ref{lem::Anc-VIE-2-2}, taking $\infn{\cdot}$-norm both sides, we obtain same rate as before. 

Lastly, Taylor series expansion for both rates at $\gamma=1$ is  
\begin{align*}
     \frac{\para{\gamma^{-1}-\gamma}\para{1+2\gamma-\gamma^{k+1}}}{\para{\gamma^{k+1}}^{-1}-\gamma^{k+1}} &= \frac{2}{k+1}-\frac{k-1}{k+1}(\gamma-1)+O((\gamma-1)^2),\\ 
      \frac{\para{\gamma^{-1}-\gamma}\para{1+\gamma-\gamma^{k+1}}}{\para{\gamma^{k+1}}^{-1}-\gamma^{k+1}} &= \frac{1}{k+1}-\frac{k}{k+1}(\gamma-1)+O((\gamma-1)^2).
\end{align*}
\end{proof}

For the analyses of \ref{eq:anc-vi} for Bellman optimality operator, we first prove following two lemmas. 
\begin{lemma}\label{lem::Anc-VIC-1-1-H1} Let $0<\gamma\le 1$. If $\gamma=1$, assume a fixed point $U^{\star}$ exists. Then, if $ 0\le \alpha \le 1$ and $U-(1-\alpha) \tilde{U}-\alpha U^{\star} \le \bar{U} $, there exist nonexpansive linear operator $\cP_H$ such that 
\begin{align*}
    T^{\star}U-(1-\alpha) T^{\star}\tilde{U}-\alpha T^{\star}U^{\star} &\le \gamma\cP_H\bar{U}.
\end{align*}    
\end{lemma}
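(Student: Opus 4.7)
The target inequality says that the nonlinear increment $T^{\star}U-(1-\alpha)T^{\star}\tilde U-\alpha T^{\star}U^{\star}$ can be bounded by the image of $\bar U$ under \emph{some} nonexpansive linear kernel. Since the right-hand side is linear, the natural strategy is to ``linearize'' $T^{\star}$ at $U$: replace $T^{\star}$ by $T^{\pi_U}$ where $\pi_U$ is a greedy policy at $U$, so that $T^{\star}U = T^{\pi_U}U$ exactly, while the other two terms $T^{\star}\tilde U$ and $T^{\star}U^{\star}$ are only upper bounded via $T^{\star}\ge T^{\pi_U}$. The candidate operator will be $\cP_H = \cP^{\pi_U}$, which is a nonexpansive, monotone, linear operator by Lemma \ref{lem::nonexp_prob_kernel}.

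First I would handle the finite action case. Pick a greedy policy $\pi_U$ satisfying $T^{\pi_U}U = T^{\star}U$ (such a maximizer exists when $\cA$ is finite). From the suboptimality bounds $-(1-\alpha)T^{\star}\tilde U \le -(1-\alpha)T^{\pi_U}\tilde U$ and $-\alpha T^{\star}U^{\star}\le -\alpha T^{\pi_U}U^{\star}$, together with the identity $T^{\star}U = T^{\pi_U}U$, one obtains
\begin{align*}
T^{\star}U-(1-\alpha)T^{\star}\tilde U-\alpha T^{\star}U^{\star}
&\le T^{\pi_U}U-(1-\alpha)T^{\pi_U}\tilde U-\alpha T^{\pi_U}U^{\star}.
\end{align*}
Since $T^{\pi_U}$ is affine with linear part $\gamma\cP^{\pi_U}$ and the coefficients $1,-(1-\alpha),-\alpha$ sum to zero, the reward terms cancel and the right-hand side collapses to $\gamma\cP^{\pi_U}\bigl(U-(1-\alpha)\tilde U-\alpha U^{\star}\bigr)$. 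Applying monotonicity of $\cP^{\pi_U}$ (Lemma \ref{lem::nonexp_prob_kernel}) to the hypothesis $U-(1-\alpha)\tilde U-\alpha U^{\star}\le \bar U$ then yields $\gamma\cP^{\pi_U}\bar U$, as desired.

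The hard part is the general state-action case, where no measurable greedy selector need exist and the sup inside $T^{\star}$ may fail to be attained. Here I would invoke the Hahn--Banach extension theorem state by state. For each $s$, the map $V\mapsto \sup_{a\in\cA}\mathbb{E}_{s'\sim P(\cdot\,|\,s,a)}[V(s')]$ is sublinear on $\cF(\cS)$; by Hahn--Banach there is a linear functional $L_s$ dominated by this sublinear majorant that \emph{agrees with it at the argument $U$}. Packaging these functionals into an operator $\cP_H V(s):=L_s(V)$ produces a bounded linear map that is nonexpansive in $\|\cdot\|_\infty$ and monotone (since it is dominated by the sublinear monotone majorant and equals it at a chosen anchor point). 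Repeating the cancellation computation above with $\cP_H$ in place of $\cP^{\pi_U}$, together with the defining equality $T^{\star}U = r_H + \gamma\cP_H U$ (with a suitable reward vector $r_H$ extracted analogously) and the inequalities $T^{\star}\tilde U \ge r_H + \gamma\cP_H\tilde U$ and $T^{\star}U^{\star}\ge r_H+\gamma\cP_H U^{\star}$, gives the claim. The main obstacle, and the only nontrivial technical point, is the construction of this linearizing operator $\cP_H$ in the infinite-action setting, including checking that its monotonicity and nonexpansiveness carry over in a form strong enough to be iterated in the subsequent proof of Theorem \ref{thm::Anc-VIC}.
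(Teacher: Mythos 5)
Your finite-action argument coincides with the paper's proof of this lemma: pick a greedy policy $\pi_U$ with $T^{\pi_U}U=T^{\star}U$, upper bound the other two terms via $T^{\pi_U}\le T^{\star}$, cancel the rewards (the coefficients $1,-(1-\alpha),-\alpha$ sum to zero), and apply monotonicity of $\cP^{\pi_U}$, so $\cP_H=\cP^{\pi_U}$.

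The infinite-action case, however, contains a genuine gap. You apply Hahn--Banach to the transition-only sublinear map $p_s(V)=\sup_{a\in\cA}\mathbb{E}_{s'\sim P(\cdot\,|\,s,a)}[V(s')]$, anchored at $U$, and then assert that one can extract $r_H$ with $T^{\star}U=r_H+\gamma\cP_H U$ while simultaneously $T^{\star}W\ge r_H+\gamma\cP_H W$ for $W\in\{\tilde U,U^{\star}\}$. These two requirements do not follow from your construction, because the reward sits \emph{inside} the supremum defining $T^{\star}$ and cannot be decoupled from it after the fact: forcing the equality at $U$ pins $r_H(s)=T^{\star}U(s)-\gamma L_s(U)$, and the inequality at $W$ then demands $T^{\star}W(s)\ge T^{\star}U(s)-\gamma L_s(U)+\gamma L_s(W)$. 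Take two actions reaching states $s_1,s_2$ deterministically, $r(s,a_1)=0$, $r(s,a_2)=-10$, $\gamma=1$, $U\equiv 0$ (so your anchoring condition is vacuous), and $W(s_1)=0$, $W(s_2)=11$; the evaluation functional $L_s=\delta_{s_2}$ is dominated by $p_s$ and is therefore an admissible extension, yet the required inequality reads $1\ge 11$. To salvage your route you would have to apply Hahn--Banach to the \emph{joint} sublinear functional $(t,V)\mapsto\sup_{a\in\cA}\{t\,r(s,a)+\gamma\mathbb{E}_{s'\sim P(\cdot\,|\,s,a)}[V(s')]\}$ on $\real\times\cF(\cS)$, supported at $(1,U)$, and read off $r_H$ and $\cP_H$ from the resulting linear functional. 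The paper sidesteps this entirely: it first bounds $T^{\star}U-(1-\alpha)T^{\star}\tilde U-\alpha T^{\star}U^{\star}$ pointwise by the constant $\gamma\sup_{s'}\{U-(1-\alpha)\tilde U-\alpha U^{\star}\}(s')\le\gamma\sup_{s'}\bar U(s')$ (the rewards cancel inside a single supremum), and only then uses Hahn--Banach on the one-dimensional span of the specific function $\bar U$ to produce a norm-one linear $\cP_H$ with $\cP_H\bar U\equiv\sup_{s'}\bar U(s')$; no monotonicity, no agreement at $U$, and no reward vector are needed, and the operator is allowed to depend on $\bar U$, which is exactly what the subsequent induction in the paper requires.
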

\begin{lemma}\label{lem::Anc-VIC-1-1-H2} Let $0<\gamma< 1$. If $ 0\le \alpha \le 1$ and  $\bar{U} \le U-(1-\alpha) \tilde{U}-\alpha \hat{U}^{\star}  $, then there exist nonexpansive linear operator $\hat{\cP}_H$ such that 
\begin{align*}
    \gamma\hat{\cP}_H(\bar{U}) & \le T^{\star}U-\alpha T^{\star}\tilde{U}-(1-\alpha)\hat{T}^{\star}\hat{U}^{\star}.
\end{align*}
\end{lemma}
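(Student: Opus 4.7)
My plan is to mirror the strategy of Lemma~\ref{lem::Anc-VIC-1-1-H1}, now adapted to handle the anti-optimality operator on the $\hat U^{\star}$ term. The key idea is to choose a single policy $\pi$ whose linear Bellman operator $T^{\pi}$ simultaneously stands in for all three of $T^{\star}U$, $T^{\star}\tilde U$, and $\hat T^{\star}\hat U^{\star}$ with bounds in the correct direction, so that the problem reduces to one about the linear operator $\cP^{\pi}$.

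Concretely, I take $\pi$ to be a greedy policy for $T^{\star}$ at $\tilde U$, giving the equality $T^{\pi}\tilde U = T^{\star}\tilde U$. The sup characterization of $T^{\star}$ then yields $T^{\star}U \ge T^{\pi}U$, and the inf characterization of $\hat T^{\star}$ yields $\hat T^{\star}\hat U^{\star} \le T^{\pi}\hat U^{\star}$. Combining these with the nonnegative weights $\alpha$ and $1-\alpha$ gives
\begin{align*}
T^{\star}U - \alpha T^{\star}\tilde U - (1-\alpha)\hat T^{\star}\hat U^{\star} \ge T^{\pi}U - \alpha T^{\pi}\tilde U - (1-\alpha)T^{\pi}\hat U^{\star}.
\end{align*}
Expanding via $T^{\pi}W = r^{\pi} + \gamma\cP^{\pi}W$ and using that the signed coefficients $1, -\alpha, -(1-\alpha)$ sum to zero, the reward terms cancel and the right-hand side collapses to $\gamma\cP^{\pi}\bigl(U - \alpha\tilde U - (1-\alpha)\hat U^{\star}\bigr)$. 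The hypothesis, together with the monotonicity of $\cP^{\pi}$ (Lemma~\ref{lem::nonexp_prob_kernel}), then furnishes a bound of the form $\gamma\cP^{\pi}\bar U$, so setting $\hat\cP_H := \cP^{\pi}$ produces the claim.

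The principal obstacle is the infinite action-space setting, where a greedy policy at $\tilde U$ need not exist and $\cP^{\pi}$ is not directly available. Following the Hahn--Banach route alluded to in the proof sketch of Theorem~\ref{thm::Anc-VIC}, I plan to construct a bounded linear functional that realizes $T^{\star}\tilde U$ at $\tilde U$ while being dominated everywhere by the supremum over actions, and then extend it to a $\|\cdot\|_\infty$-nonexpansive linear operator $\hat\cP_H$ on the space of bounded measurable functions. The subtle point is ensuring that a single extension simultaneously supports the three comparisons at $U$, $\tilde U$, and $\hat U^{\star}$; I expect this to follow by applying Hahn--Banach to a sublinear majorant defined from the supremum--infimum gap rather than to each test function separately.
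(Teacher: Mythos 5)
Your finite-action argument is sound and is a legitimate variant of the paper's: the paper instead takes $\hat\pi$ to be the argmin over $a$ of $\mathbb{E}_{s'\sim P(\cdot\,|\,s,a)}\left[U(s')-(1-\alpha)\tilde U(s')-\alpha\hat U^\star(s')\right]$, i.e., a minimizing policy for the \emph{combined} expression, whereas you take a policy greedy at $\tilde U$; both choices yield $T^\star U-(1-\alpha)T^\star\tilde U-\alpha\hat T^\star\hat U^\star\ge\gamma\cP^{\pi}\para{U-(1-\alpha)\tilde U-\alpha\hat U^\star}\ge\gamma\cP^{\pi}\bar U$. Be aware, though, that the lemma as printed has mismatched coefficients (the hypothesis carries $(1-\alpha)$ on $\tilde U$ and $\alpha$ on $\hat U^\star$, while the displayed conclusion reverses them); the paper's own proof and every downstream application use the former convention, and your final monotonicity step only connects to the hypothesis under that reading, so you should correct the conclusion's coefficients rather than reproduce them.

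The real divergence is the infinite-action case, and there your plan has a gap. You ask for a single Hahn--Banach extension that simultaneously (i) attains the sup at $\tilde U$, (ii) is dominated by the sup at $U$, and (iii) dominates the inf at $\hat U^\star$. This can in principle be arranged, but it is delicate: the natural majorant $W\mapsto\sup_{a}\{r(s,a)+\gamma\mathbb{E}[W]\}$ is not positively homogeneous because of the reward term, so Hahn--Banach with a sublinear dominating functional does not apply directly; one must first homogenize by adjoining a scalar coordinate for $r$, and you have not carried this out. The paper sidesteps the issue entirely: it first shows $T^\star U(s)-(1-\alpha)T^\star\tilde U(s)-\alpha\hat T^\star\hat U^\star(s)\ge\gamma\inf_{a}\mathbb{E}\left[U-(1-\alpha)\tilde U-\alpha\hat U^\star\right]\ge\gamma\inf_{s'}\bar U(s')$ by elementary sup/inf manipulations (no policy or functional needed at this stage), and only then invokes Hahn--Banach on the one-dimensional subspace $\mathrm{span}\{\bar U\}$ to extend the map $c\bar U\mapsto c\inf_{s'}\bar U(s')$ to a nonexpansive linear operator $\hat\cP_H$. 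Since the desired inequality only ever evaluates $\hat\cP_H$ at $\bar U$ itself, this trivial extension suffices. I recommend adopting that reduction; your route can likely be completed, but at considerably greater technical cost.
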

\begin{proof}[Proof of Lemma \ref{lem::Anc-VIC-1-1-H1}]
First, let $U=V, \tilde{U}=\tilde{V}, U^{\star}=V^{\star}, \bar{U}=\bar{V}$, and $ V-(1-\alpha) \tilde{V}-\alpha V^{\star} \le \bar{V}$. 

If action space is finite, 
\begin{align*}
    T^{\star}V-(1-\alpha)T^{\star}\tilde{V}-\alpha T^{\star}V^{\star} &\le T^{\pi}V-(1-\alpha)T^{\pi}\tilde{V}-\alpha T^{\pi}V^{\star}\\
    &= \gamma\cP^{\pi}\para{V-(1-\alpha) \tilde{V}-\alpha V^{\star} }\\
     &\le \gamma\cP^{\pi}\bar{V} 
\end{align*}
where $\pi$ is the greedy policy satisfying $T^{\pi}V=T^{\star}V$, first inequality is from $T^{\pi}\tilde{V} \le T^{\star}\tilde{V}$ and $T^{\pi}V^{\star} \le T^{\star}V^{\star}$, and second inequality comes from Lemma \ref{lem::monotonicity}. Thus, we can conclude $\cP_H = \cP^{\pi}$.

Otherwise, if action space is infinite, define $\cP(c\bar{V})=c \sup_{s \in \cS}\bar{V}(s)$ for $c\in \real$ and previously given $\bar{V}$. Let $M$ be linear space spanned by $\bar{V}$ with $\infn{\cdot}$-norm. Then, $\cP$ is linear functional on $M$ and $\|\cP\|_{\text{op}} \le 1$ %where $\|\cdot \|_{\text{op}}$ is operator norm, 
since $\frac{|c \sup_{s \in \cS}\bar{V}(s)|}{\infn{c\bar{V}}} \le 1$. Due to Hahn--Banach extension Theorem, there exist linear functional $\cP_h \colon \cF(\cS) \rightarrow \real$ with $\cP_h(\bar{V})=\sup_{s \in \cS}\bar{V}(s)$ and $\|\cP_h\|_{\text{op}} \le 1$. Furthermore, we can define $\cP_H \colon \cF(\cS) \rightarrow \cF(\cS)$ such that $\cP_H V(s)=\cP_h(V)$  for all $s \in \cS$. Then, since $\|\cP_H(V)\|_{\infty}=|\cP_h(V)| \le \|\cP_h\|_{\text{op}} \le 1$ for $\infn{V} \le 1$, we have $\|\cP_H\|_{\infty} \le 1$. %$\frac{|c \sup_{s \in \cS}\bar{V}(s)|}{\infn{c\bar{V}}} \le 1$
Therefore, $\cP_H$ is nonexpansive linear operator in $\infn{\cdot}$-norm. Then,
\begin{align*}
    &T^{\star}V(s)-(1-\alpha)T^{\star}\tilde{V}(s)-\alpha T^{\star}V^{\star}(s)\\& = \sup_{a \in \cA} \bigg\{r(s,a)+\gamma \mathbb{E}_{s'\sim P(\cdot\,|\,s,a) }\left[V(s')\right]\bigg\}- \sup_{a \in \cA} \bigg\{(1-\alpha) r(s,a)+(1-\alpha) \gamma\mathbb{E}_{s'\sim P(\cdot\,|\,s,a) }\left[\tilde{V}(s')\right]\bigg\}\\
    &\quad - \sup_{a \in \cA} \bigg\{\alpha r(s,a)+\alpha  \gamma\mathbb{E}_{s'\sim P(\cdot\,|\,s,a) }\left[V^{\star}(s')\right]\bigg\}\\
    & \le \sup_{a \in \cA} \bigg\{r(s,a)+\gamma \mathbb{E}_{s'\sim P(\cdot\,|\,s,a) }\left[V(s')\right]- (1-\alpha) r(s,a)-(1-\alpha) \gamma\mathbb{E}_{s'\sim P(\cdot\,|\,s,a) }\left[\tilde{V}(s')\right]\bigg\}\\
    &\quad - \sup_{a \in \cA} \bigg\{\alpha r(s,a)+\alpha \gamma\mathbb{E}_{s'\sim P(\cdot\,|\,s,a) }\left[V^{\star}(s')\right]\bigg\}\\
    & \le \gamma \sup_{a \in \cA} \bigg\{ \mathbb{E}_{s'\sim P(\cdot\,|\,s,a) }\left[V(s')-(1-\alpha) \tilde{V}(s')-\alpha V^{\star}(s')\right]\bigg\}\\ &\le 
\gamma \sup_{s' \in \cS} \{ V(s')-(1-\alpha) \tilde{V}(s')-\alpha V^{\star}(s')\}\\& \le \gamma \sup_{s' \in \cS}\bar{V}(s').
\end{align*}
 for all $s \in \cS$. Therefore, we have 
% Define $\cP^{s}(c \bar{V})=c \sup_{a \in \cA} \left\{ \mathbb{E}_{s'\sim P(\cdot\,|\,s,a) }[\bar{V}] \right\}$ for $c\in \real$, and $M$ be linear space spanned by $\bar{V}$ with $\infn{\cdot}$-norm. Then, $\cP^{s}$ is linear functional on $M$ and $\|\cP^{s}\|= 1$ since 
% \begin{align*}
%   \left|c \sup_{a \in \cA} \left\{ \mathbb{E}_{s'\sim P(\cdot\,|\,s,a) }[\bar{V}(s')] \right\}\right| &\le   \left|\sup_{a \in \cA} \left\{ \mathbb{E}_{s'\sim P(\cdot\,|\,s,a) }[|c| \bar{V}(s')] \right\}\right| \le \left|\sup_{a \in \cA} \left\{ \mathbb{E}_{s'\sim P(\cdot\,|\,s,a) }[\infn{c \bar{V}}] \right\}\right|.
% \end{align*}
% Due to Hahn--Banach extension Theorem, there exist linear functional $\cP_H^{s} \colon \cB(\cS) \rightarrow \real $ such that $\|\cP_H^{s}\|=1$ and $\cP_H^{s}(\bar{V})=\sup_{a \in \cA} \left\{ \mathbb{E}_{s'\sim P(\cdot\,|\,s,a) }\left[\bar{V}(s')\right]\right\}$. Furthermore, applying Hahn--Banach extension Theorem to all $s \in \cS$, we can define $\cP_H \colon \cF(\cS) \rightarrow \cF(\cS)$ such that $\cP_H V(s)=\cP_H^{s}(V)$ and $|\cP_H(V)(s)| \le \infn{V}$ for all $s \in \cS$. Thus, $\cP_H$ is nonexpansive linear operator in $\infn{\cdot}$-norm and finally, we have
\begin{align*}
    T^{\star}V-(1-\alpha) T^{\star}\tilde{V}-\alpha T^{\star}V^{\star} \le \gamma \cP_H(\bar{V}). 
\end{align*}
Similarly, let $U=Q, \tilde{U}=\tilde{Q}, U^{\star}=Q^{\star}, \bar{U}=\bar{Q}$, and $Q-(1-\alpha) \tilde{Q}-\alpha Q^{\star} \le \bar{Q}$.  

If action space is finite, 
\begin{align*}
    T^{\star}Q-(1-\alpha)T^{\star}\tilde{Q}-\alpha T^{\star}Q^{\star} 
    &\le \gamma\cP^{\pi}\para{Q-(1-\alpha) \tilde{Q}-\alpha Q^{\star} }\\
    &\le \gamma\cP^{\pi}\bar{Q}
\end{align*}
where $\pi$ is the greedy policy satisfying $T^{\pi}Q=T^{\star}Q$, first inequality is from $T^{\pi}\tilde{Q} \le T^{\star}\tilde{Q}$ and $T^{\pi}Q^{\star} \le T^{\star}Q^{\star}$, and second inequality comes from Lemma \ref{lem::monotonicity}. Then, we can conclude $\cP_H = \cP^{\pi}$.

Otherwise, if action space is infinite, define $\cP(c \bar{Q})=c \sup_{(s',a') \in \cS \times \cA} \bar{Q}(s',a')$ for $c \in \real$ and previously given $\bar{Q}$. Let $M$ be linear space spanned by $\bar{Q}$ with $\infn{\cdot}$-norm. Then, $\cP$ is linear functional on $M$ and $\|\cP\|_{\text{op}} \le 1$.
 Due to Hahn--Banach extension Theorem, there exist linear functional $\cP_h\colon \cF(\cS \times \cA) \rightarrow \real $ with $\cP_h(\bar{Q})= \sup_{(s',a') \in \cS \times \cA} \bar{Q}(s',a')$ and $\|\cP_h\|_{\text{op}} \le 1$.  Furthermore, we can define $\cP_H \colon \cF(\cS \times \cA) \rightarrow \cF(\cS \times \cA)$ such that $\cP_H Q(s,a)=\cP_h(Q)$ for all $(s,a) \in \cS \times \cA$ and $\|P_H\|_{\infty} \le 1$. Therefore, $\cP_H$ is nonexpansive linear operator in $\infn{\cdot}$-norm. Then,
\begin{align*}
    &T^{\star}Q(s,a)-(1-\alpha)T^{\star}\tilde{Q}(s,a)-\alpha T^{\star}Q^{\star}(s,a)\\& = r(s,a)+\gamma\mathbb{E}_{s'\sim P(\cdot\,|\,s,a)}\left[\sup_{a' \in \cA} Q(s',a')\right] - (1-\alpha) r(s,a)-(1-\alpha)\gamma\mathbb{E}_{s'\sim P(\cdot\,|\,s,a)}\left[\sup_{a' \in \cA} \tilde{Q}(s',a')\right]\\
    &\quad - \alpha r(s,a)-\alpha \gamma\mathbb{E}_{s'\sim P(\cdot\,|\,s,a)}\left[\sup_{a' \in \cA} Q^{\star}(s',a')\right]\\
    & \le \gamma\mathbb{E}_{s'\sim P(\cdot\,|\,s,a)}\left[\sup_{a' \in \cA} \left\{Q(s',a')-(1-\alpha)\tilde{Q}(s',a')\right\}\right]-\gamma\mathbb{E}_{s'\sim P(\cdot\,|\,s,a)}\left[\sup_{a' \in \cA} \alpha Q(s',a')\right]\\
    & \le \gamma\mathbb{E}_{s'\sim P(\cdot\,|\,s,a)}\left[\sup_{a' \in \cA} \left\{Q(s',a')-(1-\alpha)\tilde{Q}(s',a')-\alpha Q^{\star}(s',a')\right\}\right]\\
     & \le \gamma \sup_{(s',a') \in \cS \times \cA}\left\{Q(s',a')-(1-\alpha)\tilde{Q}(s',a')-\alpha Q^{\star}(s',a')\right\},\\
    & \le \gamma \sup_{(s',a') \in \cS \times \cA}\bar{Q}(s',a')
\end{align*}
for all $(s,a) \in \cS \times \cA$. Therefore, we have 
\begin{align*}
    T^{\star}Q-(1-\alpha) T^{\star}\tilde{Q}-\alpha T^{\star}Q^{\star}\le \gamma\cP_H (\bar{Q}).
\end{align*}
\end{proof}
\begin{proof}[Proof of Lemma \ref{lem::Anc-VIC-1-1-H2}]
Note that  $\hat{T}^{\star}$ is Bellman anti-optimality operators for $V$ or $Q$, and $\hat{U}^\star$ is the fixed point of $\hat{T}^{\star}$. First, let $U=V, \tilde{U}=\tilde{V}, \hat{U}^{\star}=\hat{V}^{\star}, \bar{U}=\bar{V}$, and $\bar{V} \le V-(1-\alpha) \tilde{V}-\alpha \hat{V}^{\star} $. Then,
\begin{align*}
    &T^{\star}V(s)-(1-\alpha) T^{\star}\tilde{V}(s)-\alpha \hat{T}^{\star}\hat{V}^{\star}(s)\\& = \sup_{a \in \cA} \bigg\{r(s,a)+\gamma \mathbb{E}_{s'\sim P(\cdot\,|\,s,a) }\left[V(s')\right]\bigg\} - \sup_{a \in \cA} \bigg\{(1-\alpha) r(s,a)+(1-\alpha) \gamma\mathbb{E}_{s'\sim P(\cdot\,|\,s,a) }\left[\tilde{V}(s')\right]\bigg\}\\
    &\quad - \inf_{a \in \cA} \bigg\{\alpha r(s,a)+\alpha \gamma\mathbb{E}_{s'\sim P(\cdot\,|\,s,a) }\left[\hat{V}^{\star}(s')\right]\bigg\}\\
    & \ge \inf_{a \in \cA} \bigg\{r(s,a)+\gamma \mathbb{E}_{s'\sim P(\cdot\,|\,s,a) }\left[V(s')\right]-(1-\alpha) r(s,a)-(1-\alpha) \gamma\mathbb{E}_{s'\sim P(\cdot\,|\,s,a) }\left[\tilde{V}(s')\right]\bigg\}\\
    &\quad - \inf_{a \in \cA} \bigg\{\alpha r(s,a)+\alpha \gamma\mathbb{E}_{s'\sim P(\cdot\,|\,s,a) }\left[\hat{V}^{\star}(s')\right]\bigg\}\\
    & \ge \gamma\inf_{a \in \cA} \bigg\{ \mathbb{E}_{s'\sim P(\cdot\,|\,s,a) }\left[V(s')-(1-\alpha) \tilde{V}(s')-\alpha \hat{V}^{\star}(s')\right]\bigg\}.
\end{align*}
Then, if action space is finite, 
\begin{align*}
    T^{\star}V-(1-\alpha)T^{\star}\tilde{V}-\alpha T^{\star}V^{\star} &\ge \gamma\cP^{\hat{\pi}}\para{V-(1-\alpha) \tilde{V}-\alpha \hat{V}^{\star} }
    \\&\ge \gamma\cP^{\hat{\pi}}\bar{V} 
\end{align*}
where $\hat{\pi}$ is the policy satisfying $\hat{\pi}(\cdot \,|\, s) = \argmin_{a \in \cA} \mathbb{E}_{s'\sim P(\cdot\,|\,s,a) }\left[V(s')-(1-\alpha) \tilde{V}(s')-\alpha \hat{V}^{\star}(s')\right]$ and second inequality comes from Lemma \ref{lem::monotonicity}. Thus, we can conclude $\cP_H = \cP^{\pi}$.

Otherwise, if action space is infinite, define $\hat{\cP}(c \bar{V})=c\inf_{s \in \cS} \bar{V}(s) $ for $c\in \real$ and previously given $\bar{V}$. Let $M$ be linear space spanned by $\bar{V}$ with $\infn{\cdot}$-norm. Then, $\hat{\cP}$ is linear functional on $M$ and $\|\hat{\cP}\|_{\text{op}} \le 1$ since $\frac{|c \inf_{s \in \cS}\bar{V}(s)|}{\infn{c\bar{V}}} \le 1$. Due to Hahn--Banach extension Theorem, there exist linear functional $\hat{\cP}_h \colon \cF(\cS) \rightarrow \real$ with $\hat{\cP}_h(\bar{V})=\inf_{s \in \cS}\bar{V}(s)$ and $\|\hat{\cP}_h\|_{\text{op}} \le 1$. Furthermore, we can define $\hat{\cP}_H \colon \cF(\cS) \rightarrow \cF(\cS)$ such that $\hat{\cP}_H V(s)=\hat{\cP}_h(V)$  for all $s \in \cS$. Then $\|\hat{\cP}_H\|_{\infty} \le 1$ since $\|\hat{\cP}_H(V)\|_{\infty}=|\hat{\cP}_h(V)| \le \|\hat{\cP}_h\|_{\text{op}} \le 1$ for $\infn{V} \le 1$.  . Thus, $\hat{\cP}_H$ is nonexpansive linear operator in $\infn{\cdot}$-norm. Then, we have
\begin{align*}
    T^{\star}V(s)-(1-\alpha) T^{\star}\tilde{V}(s)-\alpha \hat{T}^{\star}\hat{V}^{\star}(s)&\ge \gamma\inf_{a \in \cA} \bigg\{ \mathbb{E}_{s'\sim P(\cdot\,|\,s,a) }\left[V(s')-(1-\alpha) \tilde{V}(s')-\alpha \hat{V}^{\star}(s')\right]\bigg\}\\
    &\ge \gamma\inf_{s' \in \cS} \{V(s')-(1-\alpha) \tilde{V}(s')-\alpha \hat{V}^{\star}(s')\}\\
    &\ge \gamma \inf_{s' \in \cS} \{\bar{V}(s')\}
\end{align*}
for all $s \in \cS$. Therefore, we have
\begin{align*}
    \gamma \hat{\cP}_H (\bar{V}) \le T^{\star}V(s)-(1-\alpha) T^{\star}\tilde{V}(s)-\alpha \hat{T}^{\star}\hat{V}^{\star}(s).  
\end{align*}
% Otherwise, if action space is infinite, we have
% \[\gamma\inf_{a \in \cA} \bigg\{ \mathbb{E}_{s'\sim P(\cdot\,|\,s,a) }\left[V(s')-(1-\alpha) \tilde{V}(s')-\alpha V^{\star}(s')\right]\bigg\} \ge \gamma\inf_{a \in \cA} \bigg\{ \mathbb{E}_{s'\sim P(\cdot\,|\,s,a) }\left[\bar{V}(s')\right]\bigg\},\]
% and let $\hat{\cP}^{s}(c \bar{V})=c\inf_{a \in \cA} \left\{ \mathbb{E}_{s'\sim P(\cdot\,|\,s,a) }[\bar{V}] \right\}$ for $c\in \real$, and $M$ be linear space spanned by $\bar{V}$ with $\infn{\cdot}$-norm. Then, $\hat{\cP}^{s}$ is linear functional on $M$ and $\|\hat{\cP}^{s}\|= 1$. Then, with same argument in proof of Lemma \ref{lem::Anc-VIC-1-1-H1}, using Hahn--Banach extension Theorem, we can construct nonexpansive linear operator $\hat{\cP}_H \colon \cF(\cS) \rightarrow \cF(\cS)$ such that 
% \begin{align*}
%     \gamma \hat{\cP}_H (\bar{V}) \le T^{\star}V(s)-(1-\alpha) T^{\star}\tilde{V}(s)-\alpha \hat{T}^{\star}\hat{V}^{\star}(s).  
% \end{align*}

Similarly, let $U=Q, \tilde{U}=\tilde{Q}, \hat{U}^{\star}=\hat{Q}^{\star}, \bar{U}=\bar{Q}$, and $ \bar{Q} \le Q-(1-\alpha) \tilde{Q}-\alpha \hat{Q}^{\star}$. Then,
\begin{align*}
    &T^{\star}Q(s,a)-\alpha T^{\star}\tilde{Q}(s,a)-(1-\alpha)\hat{T}^{\star}\hat{Q}^{\star}(s,a)\\& = r(s,a)+\gamma\mathbb{E}_{s'\sim P(\cdot\,|\,s,a)}\left[\sup_{a' \in \cA} Q(s',a')\right]- (1-\alpha) r(s,a)-(1-\alpha)\gamma\mathbb{E}_{s'\sim P(\cdot\,|\,s,a)}\left[\sup_{a' \in \cA} \tilde{Q}(s',a')\right]\\
    &\quad - \alpha r(s,a)- \alpha \gamma\mathbb{E}_{s'\sim P(\cdot\,|\,s,a)}\left[\inf_{a' \in \cA} \hat{Q}^{\star}(s',a')\right]\\
    & \ge \gamma\mathbb{E}_{s'\sim P(\cdot\,|\,s,a)}\left[\inf_{a' \in \cA} \left\{Q(s',a')-(1-\alpha) \tilde{Q}(s',a')\right\}\right] -\gamma\mathbb{E}_{s'\sim P(\cdot\,|\,s,a)}\left[\inf_{a' \in \cA} \alpha \hat{Q}(s',a')\right]\\
    & \ge \gamma\mathbb{E}_{s'\sim P(\cdot\,|\,s,a)}\left[\inf_{a' \in \cA} \left\{Q(s',a')-(1-\alpha) \tilde{Q}(s',a')-\alpha \hat{Q}^{\star}(s',a')\right\}\right].
\end{align*}
Hence, if action space is finite, 
\begin{align*}
    T^{\star}Q-(1-\alpha)T^{\star}\tilde{Q}-\alpha T^{\star}Q^{\star} &\ge \gamma\cP^{\hat{\pi}}\para{Q-(1-\alpha) \tilde{Q}-\alpha Q^{\star} },\\&\ge \gamma\cP^{\hat{\pi}}\bar{Q},
\end{align*}
where $\hat{\pi}$ is the policy satisfying $\hat{\pi}(\cdot \,|\, s) = \argmin_{a \in \cA} \mathbb{E}_{s'\sim P(\cdot\,|\,s,a) }\left[Q(s')-(1-\alpha) \tilde{Q}(s')-\alpha Q^{\star}(s')\right]$ and second inequality comes from Lemma \ref{lem::monotonicity}.  Then, we can conclude $\cP_H = \cP^{\hat{\pi}}$.

Otherwise, if action space is infinite, define $\hat{\cP}(c \bar{Q})= c \inf_{(s',a') \in \cS \times \cA}  \bar{Q}(s',a')$ for $c \in \real^n$ and previously given $\bar{Q}$. Let $M$ be linear space spanned by $\bar{Q}$ with $\infn{\cdot}$-norm. Then, $\cP$ is linear functional on $M$ with $\|\hat{\cP}\|_{\text{op}} \le 1$. Due to Hahn--Banach extension Theorem, there exist linear functional $\hat{\cP}_h\colon \cF(\cS \times \cA) \rightarrow \real $ with $\hat{\cP}_h(\bar{Q})= \inf_{(s',a') \in \cS \times \cA} \bar{Q}(s',a')$ and $\|\hat{\cP}_h\|_{\text{op}} \le 1$.  Furthermore, we can define $\hat{\cP}_H \colon \cF(\cS \times \cA) \rightarrow \cF(\cS \times \cA)$ such that $\cP_H Q(s,a)=\hat{\cP}_h(Q)$ for all $(s,a) \in \cS \times \cA$ and $\|\hat{P}_H\|_{\infty} \le 1$. Thus $\hat{\cP}_H$ is nonexpansive linear operator in $\infn{\cdot}$-norm. Then, we have 
\begin{align*}
 & T^{\star}Q(s,a)-\alpha T^{\star}\tilde{Q}(s,a)-(1-\alpha)\hat{T}^{\star}\hat{Q}^{\star}(s,a)\\& \ge \gamma\mathbb{E}_{s'\sim P(\cdot\,|\,s,a)}\left[\inf_{a' \in \cA} \left\{Q(s',a')-(1-\alpha) \tilde{Q}(s',a')-\alpha \hat{Q}^{\star}(s',a')\right\}\right] \\&\ge \gamma\inf_{(s', a') \in \cS \times \cA} \left\{Q(s',a')-(1-\alpha) \tilde{Q}(s',a')-\alpha \hat{Q}^{\star}(s',a')\right\}\\&\ge \gamma \inf_{(s', a') \in \cS \times \cA} \bar{Q}(s',a'), 
 \end{align*}
for all $(s,a) \in \cS \times \cA$. Therefore, we have 
\begin{align*}
    \gamma \hat{\cP}_H (\bar{Q}) \le T^{\star}Q-(1-\alpha) T^{\star}\tilde{Q}-\alpha \hat{T}^{\star}\hat{Q}^{\star}.
\end{align*}
% Otherwise, if action space is infinite, we have
% \[ \gamma\mathbb{E}_{s'\sim P(\cdot\,|\,s,a)}\left[\inf_{a' \in \cA} \left\{Q(s',a')-(1-\alpha) \tilde{Q}(s',a')-\alpha \hat{Q}^{\star}(s',a')\right\}\right] \ge \gamma\mathbb{E}_{s'\sim P(\cdot\,|\,s,a)}\left[\inf_{a' \in \cA} \left\{\bar{Q}(s',a')\right\}\right], \]
% and let $\cP^{s,a}(c \bar{Q})= c\mathbb{E}_{s'\sim P(\cdot\,|\,s,a) }[\inf_{a' \in \cA} \bar{Q}(s',a')] $ for $c \in \real^n$,  and $M$ be linear space spanned by $\bar{Q}$ with $\infn{\cdot}$-norm. Then, $\cP^{s,a}$ is linear functional on $M$ and $\|\cP^{s,a}\|= 1$. Then, with same argument in proof of Lemma \ref{lem::Anc-VIC-1-1-H1}, using Hahn--Banach extension Theorem, we can construct nonexpansive linear operator $\hat{\cP}_H \colon \cF(\cS \times \cA) \rightarrow \cF(\cS \times \cA) $ such that 
% \begin{align*}
%     \gamma \hat{\cP}_H (\bar{Q}) \le T^{\star}Q-(1-\alpha) T^{\star}\tilde{Q}-\alpha \hat{T}^{\star}\hat{Q}^{\star}.
% \end{align*}
\end{proof}
Now, we present our key lemmas for the first rate of Theorem \ref{thm::Anc-VIC}. 
\begin{lemma}\label{lem::Anc-VIC-1-1}
Let $0<\gamma\le 1$. If $\gamma=1$, assume a fixed point $U^{\star}$ exists. For the iterates $\{U^k\}_{k=0,1,\dots}$ of \ref{eq:anc-vi}, there exist nonexpansive linear operators $\{\cP^l\}_{l=0,1,\dots,k}$ such that
\begin{align*}
    T^{\star}U^k-U^k &\le \sum_{i=1}^k \left[(\beta_i-\beta_{i-1}(1-\beta_{i}))\para{\Pi^k_{j=i+1}(1-\beta_j)}\para{\Pi^i_{l=k}\gamma\cP^l}(U^0-U^{\star})\right]
    \\&\quad -\beta_k(U^0-U^{\star})+\para{\Pi^k_{j=1}(1-\beta_j)} \para{\Pi^0_{l=k}\gamma\cP^l}(U^0-U^{\star})
\end{align*}
where $\Pi^k_{j=k+1}(1-\beta_j)=1$ and $\beta_0=1$.
\end{lemma}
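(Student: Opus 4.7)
The plan is to prove Lemma~\ref{lem::Anc-VIC-1-1} by induction on $k$, mirroring the telescoping computation of Lemma~\ref{lem::Anc-VIE-1} but replacing each appeal to linearity of $T^\pi$ with the sublinear-type inequality of Lemma~\ref{lem::Anc-VIC-1-1-H1}. Each such appeal furnishes a new nonexpansive linear operator $\cP^k$ (the transition kernel $\cP^{\pi_k}$ of a greedy policy in the finite-action case, or its Hahn--Banach surrogate in the infinite-action case), and concatenating these across iterations yields the sequence $\{\cP^l\}_{l=0,\dots,k}$ appearing in the statement.

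For the base case $k=0$, the sum is empty, $\beta_0=1$, and the claim collapses to
\[
T^\star U^0 - U^0 \le \gamma \cP^0(U^0 - U^\star) - (U^0 - U^\star),
\]
which is Lemma~\ref{lem::Anc-VIC-1-1-H1} with $\alpha = 1$, $U = U^0$, and $\bar{U} = U^0 - U^\star$ after subtracting $U^\star = T^\star U^\star$. For the inductive step, I would use the Anc-VI update and $T^\star U^\star = U^\star$ to rewrite
\[
T^\star U^k - U^k = \bigl[T^\star U^k - (1-\beta_k)T^\star U^{k-1} - \beta_k T^\star U^\star\bigr] - \beta_k (U^0 - U^\star),
\]
and apply Lemma~\ref{lem::Anc-VIC-1-1-H1} with $\alpha = \beta_k$, $U = U^k$, $\tilde{U} = U^{k-1}$, $\bar{U} = U^k - (1-\beta_k)U^{k-1} - \beta_k U^\star$ to obtain a nonexpansive linear operator $\cP^k$ such that the bracketed term is $\le \gamma \cP^k\bigl(U^k - (1-\beta_k)U^{k-1} - \beta_k U^\star\bigr)$. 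The Anc-VI update rewrites the argument of $\gamma \cP^k$ as $\beta_k(U^0 - U^\star) + (1-\beta_k)(T^\star U^{k-1} - U^{k-1})$; substituting the induction hypothesis for $T^\star U^{k-1} - U^{k-1}$, pushing it through $\gamma \cP^k$ by linearity, and regrouping so that the newly introduced factor $\gamma \cP^k$ is prepended to each product $\Pi^{i}_{l=k-1}\gamma\cP^l$ reproduces the claimed form. The coefficient bookkeeping (in particular that the index-$k$ summand picks up $\beta_k - \beta_{k-1}(1-\beta_k)$) is a direct transcription of the computation carried out in Lemma~\ref{lem::Anc-VIE-1}.

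The main obstacle is that propagating the induction-hypothesis inequality through $\gamma \cP^k$ requires $\cP^k$ to be monotone, not merely nonexpansive. In the finite-action case this is automatic from Lemma~\ref{lem::nonexp_prob_kernel}, since $\cP^k = \cP^{\pi_k}$ is a transition probability kernel. In the infinite-action case, one must verify that the Hahn--Banach extension produced inside Lemma~\ref{lem::Anc-VIC-1-1-H1} can be chosen positivity-preserving, for example by extending the functional $c\bar{V} \mapsto c\sup_s \bar{V}(s)$ while retaining domination by the sublinear functional $\sup_s(\cdot)$ and (symmetrically) $\inf_s(\cdot)$ from below, which yields $\cP_h(V) \ge 0$ whenever $V \ge 0$. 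Once this monotonicity is in hand, the algebraic manipulation is a routine telescoping identical in structure to the linear case.
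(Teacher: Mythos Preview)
Your induction scheme and the algebraic bookkeeping match the paper, but you diverge at one technical point: you apply Lemma~\ref{lem::Anc-VIC-1-1-H1} with $\bar U$ equal to the exact quantity $U^k-(1-\beta_k)U^{k-1}-\beta_kU^\star$, and only afterwards try to push the inductive upper bound for $T^\star U^{k-1}-U^{k-1}$ through the resulting operator $\gamma\cP^k$. This is precisely what forces the monotonicity requirement on $\cP^k$ that you flag as the main obstacle.

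The paper sidesteps this entirely. Lemma~\ref{lem::Anc-VIC-1-1-H1} is stated with the hypothesis $U-(1-\alpha)\tilde U-\alpha U^\star\le\bar U$ and delivers the conclusion directly in terms of that $\bar U$; the operator $\cP_H$ it constructs depends on $\bar U$. The paper therefore \emph{first} invokes the induction hypothesis to produce a concrete upper bound (namely $\beta_k(U^0-U^\star)$ plus $(1-\beta_k)$ times the right-hand side at step $k-1$), takes that as $\bar U$, and \emph{then} applies Lemma~\ref{lem::Anc-VIC-1-1-H1}. The inequality is absorbed at the level of the lemma's hypothesis, so no order-preservation of $\cP^k$ is ever needed; the $\cP^k$ it produces is merely nonexpansive and that suffices. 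Your proposed fix (choosing the Hahn--Banach extension dominated by $V\mapsto\sup_s V(s)$, which indeed makes $\cP_h$ positive and hence monotone) is correct and would close the gap in your route, but it effectively strengthens Lemma~\ref{lem::Anc-VIC-1-1-H1} beyond what the paper states; the paper's ordering of the two steps makes the whole issue disappear.
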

\begin{lemma}\label{lem::Anc-VIC-1-2}
Let $0<\gamma< 1$. For the iterates $\{U^k\}_{k=0,1,\dots}$ of \ref{eq:anc-vi}, there exist nonexpansive linear operators $\{\hat{\cP}^l\}_{l=0,1,\dots,k}$ such that
\begin{align*}
     T^{\star}U^k-U^k &  \ge \sum_{i=1}^k \left[(\beta_i-\beta_{i-1}(1-\beta_{i}))\para{\Pi^k_{j=i+1}(1-\beta_j)}\para{\Pi^i_{l=k}\gamma\hat{\cP}^l}(U^0-\hat{U}^{\star})\right]
    \\&\quad -\beta_k(U^0-\hat{U}^{\star})+\para{\Pi^k_{j=1}(1-\beta_j)} \para{\Pi^0_{l=k}\gamma\hat{\cP}^l}(U^0-\hat{U}^{\star}), 
\end{align*}
where $\Pi^k_{j=k+1}(1-\beta_j)=1$ and $\beta_0=1$.
\end{lemma}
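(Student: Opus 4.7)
The plan is to prove this lemma by induction on $k$, mirroring the argument for Lemma \ref{lem::Anc-VIC-1-1} but invoking Lemma \ref{lem::Anc-VIC-1-1-H2} (the ``anti'' version, which produces lower bounds via the Bellman anti-optimality operator) in place of Lemma \ref{lem::Anc-VIC-1-1-H1}. The role of $U^{\star}$ is taken over by $\hat{U}^{\star}$, and the identity $\hat{T}^{\star}\hat{U}^{\star}=\hat{U}^{\star}$ plays the part that $T^{\star}U^{\star}=U^{\star}$ did before.

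For the base case $k=0$, rewrite
\[T^{\star}U^0-U^0 = \para{T^{\star}U^0-\hat{T}^{\star}\hat{U}^{\star}} - (U^0-\hat{U}^{\star})\]
and apply Lemma \ref{lem::Anc-VIC-1-1-H2} with $\alpha=1$ and $\bar{U}=U^0-\hat{U}^{\star}$ to obtain a nonexpansive linear operator $\hat{\cP}^0$ with $\gamma\hat{\cP}^0(U^0-\hat{U}^{\star}) \le T^{\star}U^0-\hat{T}^{\star}\hat{U}^{\star}$, which gives the claim at $k=0$. For the inductive step, assume the bound holds at $k-1$ with right-hand side $E_{k-1}$. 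Using the definition of \ref{eq:anc-vi} together with $\hat{T}^{\star}\hat{U}^{\star}=\hat{U}^{\star}$, decompose
\begin{align*}
T^{\star}U^k-U^k &= T^{\star}U^k - (1-\beta_k)T^{\star}U^{k-1} - \beta_k\hat{T}^{\star}\hat{U}^{\star} - \beta_k(U^0-\hat{U}^{\star}).
\end{align*}
The key identity $U^k-(1-\beta_k)U^{k-1}-\beta_k\hat{U}^{\star} = \beta_k(U^0-\hat{U}^{\star}) + (1-\beta_k)(T^{\star}U^{k-1}-U^{k-1})$, together with the inductive hypothesis, yields the lower bound $\bar{U}_k := \beta_k(U^0-\hat{U}^{\star}) + (1-\beta_k)E_{k-1} \le U^k-(1-\beta_k)U^{k-1}-\beta_k\hat{U}^{\star}$. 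Applying Lemma \ref{lem::Anc-VIC-1-1-H2} with $U=U^k$, $\tilde{U}=U^{k-1}$, $\alpha=\beta_k$, and $\bar{U}=\bar{U}_k$ produces a nonexpansive linear operator $\hat{\cP}^k$ satisfying $\gamma\hat{\cP}^k\bar{U}_k \le T^{\star}U^k - (1-\beta_k)T^{\star}U^{k-1} - \beta_k\hat{T}^{\star}\hat{U}^{\star}$.

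The remainder is algebraic bookkeeping. Expanding $\gamma\hat{\cP}^k\bar{U}_k$ by linearity and substituting the explicit form of $E_{k-1}$, each composition $\Pi^i_{l=k-1}\gamma\hat{\cP}^l$ is left-composed with $\gamma\hat{\cP}^k$ to become $\Pi^i_{l=k}\gamma\hat{\cP}^l$, each finite product $\Pi^{k-1}_{j=i+1}(1-\beta_j)$ picks up a factor of $(1-\beta_k)$ to become $\Pi^{k}_{j=i+1}(1-\beta_j)$, and the newly produced term $\beta_k\gamma\hat{\cP}^k(U^0-\hat{U}^{\star})$ combines with the image $-(1-\beta_k)\beta_{k-1}\gamma\hat{\cP}^k(U^0-\hat{U}^{\star})$ of the $-\beta_{k-1}(U^0-\hat{U}^{\star})$ summand in $E_{k-1}$ to produce the new $i=k$ term with coefficient $\beta_k-\beta_{k-1}(1-\beta_k)$. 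The leftover $-\beta_k(U^0-\hat{U}^{\star})$ from the initial rewrite supplies the stand-alone term in $E_k$, closing the induction.

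The only conceptual difficulty---producing a lower bound despite the nonlinearity of $T^{\star}$---is fully handled by Lemma \ref{lem::Anc-VIC-1-1-H2}, which returns a greedy anti-policy kernel in the finite-action case and a Hahn--Banach extension in the infinite-action case. Because that lemma is invoked directly on $\bar{U}_k$ at each step, we never need an auxiliary monotonicity property of $\hat{\cP}^k$; the rest is the same bookkeeping that establishes Lemma \ref{lem::Anc-VIC-1-1}.
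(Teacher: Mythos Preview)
Your proposal is correct and follows essentially the same approach as the paper: induction on $k$, with the base case and inductive step each handled by a single invocation of Lemma~\ref{lem::Anc-VIC-1-1-H2} (applied with $\alpha=1$, $\bar U=U^0-\hat U^\star$ at $k=0$, and with $\alpha=\beta_k$, $U=U^k$, $\tilde U=U^{k-1}$, $\bar U=\bar U_k$ in the step), followed by the same algebraic regrouping. Your remark that monotonicity of $\hat{\cP}^k$ is unnecessary---because Lemma~\ref{lem::Anc-VIC-1-1-H2} is applied directly to the already-established lower bound $\bar U_k$---is exactly the point that makes the argument go through without an analogue of Lemma~\ref{lem::nonexp_prob_kernel}.
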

We prove previous lemmas by induction.
\begin{proof}[Proof of Lemma \ref{lem::Anc-VIC-1-1}]
    If $k=0$, 
    \begin{align*}
    T^{\star}U^0-U^0&=T^{\star}U^0-U^{\star}-(U^0-U^{\star})
    \\&=T^{\star}U^0-T^{\star}U^{\star}-(U^0-U^{\star})
    \\& \le \gamma\cP^{0}(U^0-U^{\star})-(U^0-U^{\star}).
\end{align*}
where inequality comes from first inequality in Lemma \ref{lem::Anc-VIC-1-1-H1} with $\alpha=1, U=U^0, \bar{U}=U^0-U^{\star}$. 

By induction, 
\begin{align*}
    &U^k-(1-\beta_k)U^{k-1}-\beta_k U^{\star}\\&=\beta_k \para{U^{0}-U^{\star}}+(1-\beta_k)(T^{\star}U^{k-1}-U^{k-1})\\
    &\le  (1-\beta_k) \sum_{i=1}^{k-1} \left[(\beta_i-\beta_{i-1}(1-\beta_{i}))\para{\Pi^{k-1}_{j=i+1}(1-\beta_j)}\para{\Pi^{i}_{l=k-1}\gamma\cP^l}(U^0-U^{\star})\right]\\
    &\quad-(1-\beta_k)\beta_{k-1}(U^0-U^{\star})+(1-\beta_k)\para{\Pi^{k-1}_{j=1}(1-\beta_j)} \para{\Pi^{0}_{l=k-1}\gamma\cP^l}(U^0-U^{\star})\\& \quad +\beta_k(U^0-U^{\star}),
\end{align*}
and let $\bar{U}$ be the entire right hand side of inequality. Then, we have
    \begin{align*}
    &T^{\star}U^k-U^k\\
    &=T^{\star}U^k-(1-\beta_k)T^{\star}U^{k-1}-\beta_k U^0\\
     &=T^{\star}U^k-(1-\beta_k)T^{\star}U^{k-1}-\beta_kU^{\star}-\beta_k(U^0-U^{\star})\\
    &=T^{\star}U^k-(1-\beta_k)T^{\star}U^{k-1}-\beta_kT^{\star}U^{\star}-\beta_k(U^0-U^{\star})\\
     &\le \gamma \cP^k \bigg( (1-\beta_k) \sum_{i=1}^{k-1} \left[(\beta_i-\beta_{i-1}(1-\beta_{i}))\para{\Pi^{k-1}_{j=i+1}(1-\beta_j)}\para{\Pi^{i}_{l=k-1}\gamma\cP^l}(U^0-U^{\star})\right]\\
    &\quad-(1-\beta_k)\beta_{k-1}(U^0-U^{\star})+(1-\beta_k)\para{\Pi^{k-1}_{j=1}(1-\beta_j)} \para{\Pi^{0}_{l=k-1}\gamma\cP^l}(U^0-U^{\star})\\& \quad +\beta_k(U^0-U^{\star})\bigg)-\beta_k(U^0-U^{\star})\\
    % & = (1-\beta_k)\gamma\cP^k\sum_{i=1}^{k-1} \left[(\beta_i-\beta_{i-1}(1-\beta_{i}))\para{\Pi^{k-1}_{j=i+1}(1-\beta_j)}\para{\Pi^{i}_{l=k-1}\gamma\cP^l}(U^0-U^{\star})\right]
    % \\&\quad -\beta_{k-1}(1-\beta_k)\gamma\cP^k(U^0-U^{\star})+(1-\beta_k)\gamma\cP^k\para{\Pi^{k-1}_{j=1}(1-\beta_j)} \para{\Pi^{0}_{l=k-1}\gamma\cP^l}(U^0-U^{\star})\\
    % &\quad +\beta_k\gamma\cP^k\para{U^{0}-U^{\star}}-\beta_k(U^0-U^{\star})\\
    & = \sum_{i=1}^{k-1} \left[(\beta_i-\beta_{i-1}(1-\beta_{i}))\para{\Pi^{k}_{j=i+1}(1-\beta_j)}\para{\Pi^{i}_{l=k}\gamma\cP^l}(U^0-U^{\star})\right]
    \\&\quad -\beta_{k-1}(1-\beta_k)\gamma\cP^k(U^0-U^{\star})+\beta_k\gamma\cP^k\para{U^{0}-U^{\star}}\\
    &\quad -\beta_k(U^0-U^{\star})+\para{\Pi^k_{j=1}(1-\beta_j)} \para{\Pi^0_{l=k}\gamma\cP^l}(U^0-U^{\star})\\
    & = \sum_{i=1}^k \left[(\beta_i-\beta_{i-1}(1-\beta_{i}))\para{\Pi^k_{j=i+1}(1-\beta_j)}\para{\Pi^i_{l=k}\gamma\cP^l}(U^0-U^{\star})\right]
    \\&\quad -\beta_k(U^0-U^{\star})+\para{\Pi^k_{j=1}(1-\beta_j)} \para{\Pi^0_{l=k}\gamma\cP^l}(U^0-U^{\star}).
\end{align*}
where inequality comes from first inequality in Lemma \ref{lem::Anc-VIC-1-1-H1} with $\alpha=\beta_k, U=U^k, \tilde{U}=U^{k-1}$, and previously defined $\bar{U}$. 
\end{proof}
\begin{proof}[Proof of Lemma \ref{lem::Anc-VIC-1-2}]
Note that  $\hat{T}^{\star}$ is Bellman anti-optimality operators for $V$ or $Q$, and $\hat{U}^\star$ is the fixed point of $\hat{T}^{\star}$. If $k=0$, 
    \begin{align*}
    T^{\star}U^0-U^0&=T^{\star}U^0-\hat{U}^{\star}-(U^0-\hat{U}^{\star})
    \\&=T^{\star}U^0-\hat{T}^{\star}\hat{U}^{\star}-(U^0-\hat{U}^{\star})
    \\& \ge \gamma\hat{\cP}^{0}(U^0-\hat{U}^{\star})-(U^0-\hat{U}^{\star}).
\end{align*}
where inequality comes from second inequality in Lemma \ref{lem::Anc-VIC-1-1-H2} with $\alpha=1, U=U^0, \bar{U}=U^0-\hat{U}^{\star}$.

By induction,
\begin{align*}
    &U^k-(1-\beta_k)U^{k-1}-\beta_k \hat{U}^{\star}\\&=\beta_k (U^{0}-\hat{U}^{\star})+(1-\beta_k)(T^{\star}U^{k-1}-U^{k-1})\\&\ge  (1-\beta_k) \sum_{i=1}^{k-1} \left[(\beta_i-\beta_{i-1}(1-\beta_{i}))\para{\Pi^{k-1}_{j=i+1}(1-\beta_j)}\para{\Pi^{i}_{l=k-1}\gamma\hat{\cP}^l}(U^0-\hat{U}^{\star})\right]\\
    &\quad-(1-\beta_k)\beta_{k-1}(U^0-\hat{U}^{\star})+(1-\beta_k)\para{\Pi^{k-1}_{j=1}(1-\beta_j)} \para{\Pi^{0}_{l=k-1}\gamma\hat{\cP}^l}(U^0-\hat{U}^{\star})\\& \quad +\beta_k(U^0-\hat{U}^{\star}), 
\end{align*}
and let $\bar{U}$ be the entire right hand side of inequality. Then, we have

    \begin{align*}
    &T^{\star}U^k-U^k\\
    &=T^{\star}U^k-(1-\beta_k)T^{\star}U^{k-1}-\beta_k U^0\\
     &=T^{\star}U^k-(1-\beta_k)T^{\star}U^{k-1}-\beta_k\hat{U}^{\star}-\beta_k(U^0-\hat{U}^{\star})\\
    &=T^{\star}U^k-(1-\beta_k)T^{\star}U^{k-1}-\beta_k\hat{T}^{\star}\hat{U}^{\star}-\beta_k(U^0-\hat{U}^{\star})\\
     &\ge \gamma \hat{\cP}^k \bigg( (1-\beta_k) \sum_{i=1}^{k-1} \left[(\beta_i-\beta_{i-1}(1-\beta_{i}))\para{\Pi^{k-1}_{j=i+1}(1-\beta_j)}\para{\Pi^{i}_{l=k-1}\gamma\hat{\cP}^l}(U^0-\hat{U}^{\star})\right]\\
    &\quad-(1-\beta_k)\beta_{k-1}(U^0-\hat{U}^{\star})+(1-\beta_k)\para{\Pi^{k-1}_{j=1}(1-\beta_j)} \para{\Pi^{0}_{l=k-1}\gamma\hat{\cP}^l}(U^0-\hat{U}^{\star})\\& \quad +\beta_k(U^0-\hat{U}^{\star})\bigg)-\beta_k(U^0-\hat{U}^{\star})\\
    % & = (1-\beta_k)\gamma\hat{\cP}^k\sum_{i=1}^{k-1} \left[(\beta_i-\beta_{i-1}(1-\beta_{i}))\para{\Pi^{k-1}_{j=i+1}(1-\beta_j)}\para{\Pi^{i}_{l=k-1}\gamma\hat{\cP}^l}(U^0-\hat{U}^{\star})\right]
    % \\&\quad -\beta_{k-1}(1-\beta_k)\gamma\hat{\cP}^k(U^0-\hat{U}^{\star})+(1-\beta_k)\gamma\hat{\cP}^k\para{\Pi^{k-1}_{j=1}(1-\beta_j)} \para{\Pi^{0}_{l=k-1}\gamma\hat{\cP}^l}(U^0-\hat{U}^{\star})\\
    % &\quad +\beta_k\gamma\hat{\cP}^k\para{U^{0}-\hat{U}^{\star}}-\beta_k(U^0-\hat{U}^{\star})\\
    & = \sum_{i=1}^{k-1} \left[(\beta_i-\beta_{i-1}(1-\beta_{i}))\para{\Pi^{k}_{j=i+1}(1-\beta_j)}\para{\Pi^{i}_{l=k}\gamma\hat{\cP}^l}(U^0-\hat{U}^{\star})\right]
    \\&\quad -\beta_{k-1}(1-\beta_k)\gamma\hat{\cP}^k(U^0-\hat{U}^{\star})+\beta_k\gamma\hat{\cP}^k\para{U^{0}-\hat{U}^{\star}}\\
    &\quad -\beta_k(U^0-\hat{U}^{\star})+\para{\Pi^k_{j=1}(1-\beta_j)} \para{\Pi^0_{l=k}\gamma\hat{\cP}^l}(U^0-\hat{U}^{\star})\\
    & = \sum_{i=1}^k \left[(\beta_i-\beta_{i-1}(1-\beta_{i}))\para{\Pi^k_{j=i+1}(1-\beta_j)}\para{\Pi^i_{l=k}\gamma\hat{\cP}^l}(U^0-\hat{U}^{\star})\right]
    \\&\quad -\beta_k(U^0-\hat{U}^{\star})+\para{\Pi^k_{j=1}(1-\beta_j)} \para{\Pi^0_{l=k}\gamma\hat{\cP}^l}(U^0-\hat{U}^{\star}).
\end{align*}
where inequality comes from second inequality in Lemma \ref{lem::Anc-VIC-1-1-H2} with $\alpha=\beta_k, U=U^k, \tilde{U}=U^{k-1}$, and previously defined $\bar{U}$.
\end{proof}
Now, we prove the first rate of Theorem \ref{thm::Anc-VIC}.

\begin{proof}[Proof of first rate in Theorem \ref{thm::Anc-VIC}] 
Since $B_1 \le A \le B_2$ implies $ \infn{A} \le \sup\{\infn{B_1}, \infn{B_2}\}$ for $A,B \in \cF(\cX)$, if we take $\infn{\cdot}$ right side first inequality of Lemma \ref{lem::Anc-VIC-1-1},  we have 
\begin{align*}
    & \sum_{i=1}^k \abs{\beta_i-\beta_{i-1}(1-\beta_{i})}\para{\Pi^k_{j=i+1}(1-\beta_j)}\infn{\para{\Pi^i_{l=k}\gamma\cP^{l}}(U^0-U^{\star})}
    \\&\quad +\beta_k\infn{U^0-U^{\pi}}+\para{\Pi^k_{j=1}(1-\beta_j)} \infn{\para{\Pi^0_{l=k}\gamma\cP^{l}}(U^0-U^{\star})}\\&\le \para{\sum_{i=1}^k \gamma^{k-i+1}\abs{\beta_i-\beta_{i-1}(1-\beta_{i})}\para{\Pi^k_{j=i+1}(1-\beta_j)}+\beta_k+\gamma^{k+1}\Pi^k_{j=1}(1-\beta_j)}\\&\quad\infn{U^0-U^{\star}}\\&= \frac{\para{\gamma^{-1}-\gamma}\para{1+2\gamma-\gamma^{k+1}}}{\para{\gamma^{k+1}}^{-1}-\gamma^{k+1}}\infn{U^0-U^{\star}},
\end{align*}
where the first inequality comes from triangular inequality, second inequality is from 
nonexpansiveness of $\cP^l$, and last equality comes from calculations.

If we take $\infn{\cdot}$ right side of second inequality of Lemma \ref{lem::Anc-VIC-1-1}, similarly, we have
\begin{align*}
 &\sum_{i=1}^k \abs{\beta_i-\beta_{i-1}(1-\beta_{i})}\para{\Pi^k_{j=i+1}(1-\beta_j)}\infn{\para{\Pi^i_{l=k}\gamma \hat{\cP}^{l}}(U^0-\hat{U}^{\star})}
    \\&\quad +\beta_k\infn{U^0-U^{\pi}}+\para{\Pi^k_{j=1}(1-\beta_j)} \infn{\para{\Pi^0_{l=k}\gamma\hat{\cP}^{l}}(U^0-\hat{U}^{\star})}\\&\le \para{\sum_{i=1}^k \gamma^{k-i+1}\abs{\beta_i-\beta_{i-1}(1-\beta_{i})}\para{\Pi^k_{j=i+1}(1-\beta_j)}+\beta_k+\gamma^{k+1}\Pi^k_{j=1}(1-\beta_j)}\\&= \frac{\para{\gamma^{-1}-\gamma}\para{1+2\gamma-\gamma^{k+1}}}{\para{\gamma^{k+1}}^{-1}-\gamma^{k+1}}\infn{U^0-\hat{U}^{\star}},
\end{align*}
where the first inequality comes from triangular inequality, second inequality is from 
from 
nonexpansiveness of $\hat{\cP}^l$, and last equality comes from calculations. Therefore, we conclude  
\begin{align*}
    \infn{T^{\star}U^k-U^k} &\le  \frac{\para{\gamma^{-1}-\gamma}\para{1+2\gamma-\gamma^{k+1}}}{\para{\gamma^{k+1}}^{-1}-\gamma^{k+1}}\max{\left\{\infn{U^0-U^{\star}},\infn{U^0-\hat{U}^\star}\right\}}.
\end{align*}
\end{proof}
     Next, for the second rate in Theorem \ref{thm::Anc-VIC}, we prove following lemmas by induction.
    \begin{lemma}\label{lem::Anc-VIC-2-1}
Let $0<\gamma\le1$. If $\gamma=1$, assume a fixed point $U^{\star}$ exists. For the iterates $\{U^k\}_{k=0,1,\dots}$ of \ref{eq:anc-vi}, if $T^{\star}U^0 \le U^{\star}$, there exist nonexpansive linear operators $\{\cP^l\}_{l=0,1,\dots,k}$ such that
\begin{align*}
    T^{\star}U^k-U^k &\le \sum_{i=1}^k \left[(\beta_i-\beta_{i-1}(1-\beta_{i}))\para{\Pi^k_{j=i+1}(1-\beta_j)}\para{\Pi^i_{l=k}\gamma\cP^l}(U^0-U^{\star})\right] -\beta_k(U^0-U^{\star}) 
\end{align*}
where $\Pi^k_{j=k+1}(1-\beta_j)=1$ and $\beta_0=1$. 
\end{lemma}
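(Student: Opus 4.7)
The plan is to prove the statement by induction on $k$, following the same pattern as the proof of Lemma~\ref{lem::Anc-VIC-1-1} but exploiting the hypothesis $T^{\star}U^0 \le U^{\star}$ to get a tighter base case that eliminates the extra summand $\para{\Pi^k_{j=1}(1-\beta_j)}\para{\Pi^0_{l=k}\gamma\cP^l}(U^0-U^{\star})$ present in Lemma~\ref{lem::Anc-VIC-1-1}. This missing term is precisely the difference between the two conclusions, and the hypothesis is exactly what is needed to drop it at the base.

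For the base case $k=0$, I would simply write $T^{\star}U^0 - U^0 = (T^{\star}U^0 - U^{\star}) - (U^0 - U^{\star})$ and apply the hypothesis $T^{\star}U^0 - U^{\star} \le 0$ to conclude $T^{\star}U^0 - U^0 \le -(U^0 - U^{\star}) = -\beta_0(U^0 - U^{\star})$, which matches the claim with the empty sum (since $\beta_0 = 1$ and the sum runs from $i=1$ to $0$). Any nonexpansive linear $\cP^0$ produced by Lemma~\ref{lem::Anc-VIC-1-1-H1} (applied with $\alpha=1$, $U=U^0$, $\bar{U}=U^0-U^{\star}$) can serve as a placeholder in the family $\{\cP^l\}$; the bound simply does not use it.

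For the inductive step, assuming the claim at $k-1$ with operators $\{\cP^l\}_{l=0,\dots,k-1}$, I would proceed exactly as in the proof of Lemma~\ref{lem::Anc-VIC-1-1}: start from the identity
\[
T^{\star}U^k - U^k = T^{\star}U^k - (1-\beta_k)T^{\star}U^{k-1} - \beta_k T^{\star}U^{\star} - \beta_k(U^0 - U^{\star}),
\]
and rewrite $U^k - (1-\beta_k)U^{k-1} - \beta_k U^{\star} = \beta_k(U^0 - U^{\star}) + (1-\beta_k)(T^{\star}U^{k-1} - U^{k-1})$. Substituting the induction hypothesis (which lacks the extra $\Pi^0_{l=k-1}\gamma\cP^l$ term) produces an explicit upper bound $\bar{U}$ on this expression. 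Then I would invoke Lemma~\ref{lem::Anc-VIC-1-1-H1} with $\alpha=\beta_k$, $U=U^k$, $\tilde{U}=U^{k-1}$, and this $\bar{U}$ to obtain a nonexpansive linear operator $\cP^k$ satisfying $T^{\star}U^k - (1-\beta_k)T^{\star}U^{k-1} - \beta_k T^{\star}U^{\star} \le \gamma\cP^k \bar{U}$. Expanding $\gamma\cP^k \bar{U}$ by linearity, subtracting $\beta_k(U^0-U^{\star})$, and collecting terms using the convention $\Pi^k_{j=k+1}(1-\beta_j) = 1$ absorbs the middle contribution as the $i=k$ term of the sum, yielding the desired inequality.

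There is no real obstacle: the only subtlety is tracking that the hypothesis $T^{\star}U^0 \le U^{\star}$ cleanly kills the rightmost term at the base case, which then never gets regenerated because each inductive step only multiplies previously present terms by $\gamma\cP^k$ and adds a new $i=k$ summand plus the fresh $-\beta_k(U^0-U^{\star})$ correction. The algebraic bookkeeping is identical to that in the proof of Lemma~\ref{lem::Anc-VIC-1-1} and requires no new ideas beyond the modified starting point.
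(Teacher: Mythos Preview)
Your proposal is correct and follows essentially the same approach as the paper's proof: induction on $k$, with the base case using $T^{\star}U^0 \le U^{\star}$ directly to obtain $T^{\star}U^0 - U^0 \le -(U^0-U^{\star})$, and the inductive step invoking Lemma~\ref{lem::Anc-VIC-1-1-H1} with $\alpha=\beta_k$, $U=U^k$, $\tilde{U}=U^{k-1}$ on the $\bar{U}$ coming from the induction hypothesis. The bookkeeping you describe matches the paper's line by line.
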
    
\begin{lemma}\label{lem::Anc-VIC-2-2}
Let $0<\gamma<1$. For the iterates $\{U^k\}_{k=0,1,\dots}$ of \ref{eq:anc-vi}, if $U^0 \ge T^{\star}U^0 $, there exist nonexpansive linear operators $\{\hat{\cP}^l\}_{l=0,1,\dots,k}$ such that
\begin{align*}
     T^{\star}U^k-U^k &  \ge \sum_{i=1}^k \left[(\beta_i-\beta_{i-1}(1-\beta_{i}))\para{\Pi^k_{j=i+1}(1-\beta_j)}\para{\Pi^i_{l=k}\gamma\hat{\cP}^l}(U^0-\hat{U}^{\star})\right] -\beta_k(U^0-\hat{U}^{\star}), 
\end{align*}
where $\Pi^k_{j=k+1}(1-\beta_j)=1$ and $\beta_0=1$.
\end{lemma}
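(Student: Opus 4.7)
The plan is to induct on $k$, following the same template as the proof of Lemma~\ref{lem::Anc-VIC-1-2} but exploiting the hypothesis $U^0 \ge T^{\star}U^0$ to dispense with the residual product term $\para{\Pi^k_{j=1}(1-\beta_j)} \para{\Pi^0_{l=k}\gamma\hat{\cP}^l}(U^0-\hat{U}^{\star})$ that appears there. As a preliminary order observation, note that $U^0 \ge T^{\star}U^0$ implies $U^0 \ge U^{\star}$ by Lemma~\ref{lem::Inital_cond}, and $U^{\star} \ge \hat{U}^{\star}$ by Lemma~\ref{lem::anti-fixed-point}, so $U^0 - \hat{U}^{\star} \ge 0$; moreover, monotonicity of $T^{\star}$ together with $T^{\star}\hat{U}^{\star} \ge \hat{T}^{\star}\hat{U}^{\star} = \hat{U}^{\star}$ yields $T^{\star}U^0 \ge \hat{U}^{\star}$, i.e., $T^{\star}U^0 - U^0 \ge -(U^0-\hat{U}^{\star})$.

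For the base case $k=0$, the sum from $i=1$ to $0$ is empty and the claim reduces exactly to the preliminary inequality above. For the inductive step, assuming the claim at index $k-1$, I use the identity
\[ U^k-(1-\beta_k)U^{k-1}-\beta_k\hat{U}^{\star} = \beta_k(U^0-\hat{U}^{\star})+(1-\beta_k)(T^{\star}U^{k-1}-U^{k-1}) \]
and substitute the inductive hypothesis into the last summand to produce a function $\bar{U}$ satisfying $\bar{U} \le U^k-(1-\beta_k)U^{k-1}-\beta_k\hat{U}^{\star}$. Applying Lemma~\ref{lem::Anc-VIC-1-1-H2} with $\alpha=\beta_k$, $U=U^k$, $\tilde{U}=U^{k-1}$, and this $\bar{U}$ furnishes a nonexpansive linear operator $\hat{\cP}^k$ with $\gamma\hat{\cP}^k \bar{U} \le T^{\star}U^k - (1-\beta_k)T^{\star}U^{k-1} - \beta_k \hat{T}^{\star}\hat{U}^{\star}$. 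Since $T^{\star}U^k - U^k$ equals $T^{\star}U^k - (1-\beta_k)T^{\star}U^{k-1} - \beta_k\hat{T}^{\star}\hat{U}^{\star} - \beta_k(U^0-\hat{U}^{\star})$, subtracting $\beta_k(U^0-\hat{U}^{\star})$ from both sides of the previous inequality gives $T^{\star}U^k - U^k \ge \gamma\hat{\cP}^k \bar{U} - \beta_k(U^0-\hat{U}^{\star})$. Distributing $\gamma\hat{\cP}^k$ by linearity, the $(1-\beta_k)$ prefactor is absorbed into each $\Pi^{k-1}_{j=i+1}(1-\beta_j)$ to form $\Pi^k_{j=i+1}(1-\beta_j)$, each chain $\Pi^i_{l=k-1}\gamma\hat{\cP}^l$ extends by $\hat{\cP}^k$ to $\Pi^i_{l=k}\gamma\hat{\cP}^l$, and the remaining $\para{\beta_k-(1-\beta_k)\beta_{k-1}}\gamma\hat{\cP}^k(U^0-\hat{U}^{\star})$ piece supplies the missing $i=k$ term, reassembling into the sum from $i=1$ to $k$ in the claimed form.

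The main obstacle is essentially bookkeeping. The real substantive content sits in the preliminary observation, which allows the base case to start without a $\gamma\hat{\cP}^0(U^0-\hat{U}^{\star})$ term; the inductive step then mirrors the proof of Lemma~\ref{lem::Anc-VIC-1-2} exactly and never re-introduces such a residual term. The chief risk is algebraic error when recombining the product and summation indices in the final step.
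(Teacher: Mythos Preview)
Your proposal is correct and follows essentially the same route as the paper: establish $T^{\star}U^0 - U^0 \ge -(U^0-\hat{U}^{\star})$ from the hypothesis $U^0 \ge T^{\star}U^0$ for the base case, then induct by substituting the hypothesis into $U^k-(1-\beta_k)U^{k-1}-\beta_k\hat{U}^{\star}$ and invoking Lemma~\ref{lem::Anc-VIC-1-1-H2} with $\alpha=\beta_k$. The only cosmetic difference is that the paper justifies $T^{\star}U^0 \ge \hat{U}^{\star}$ via $T^{\star}U^0 \ge U^{\star} \ge \hat{U}^{\star}$ (using Lemma~\ref{lem::Inital_cond} and Lemma~\ref{lem::anti-fixed-point}), whereas you go through $T^{\star}U^0 \ge T^{\star}\hat{U}^{\star} \ge \hat{U}^{\star}$; both are valid.
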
     
\begin{proof}[Proof of Lemma \ref{lem::Anc-VIC-2-1}]
    %Proof is basically same with proof of Lemma \ref{lem::Anc-VIC-1-1} except first step of induction.  
    If $k=0$, 
    \begin{align*}
    T^{\star}U^0-U^0&=T^{\star}U^0-U^{\star}-(U^0-U^{\star})
 \\& \le -(U^0-U^{\star})
\end{align*}
 where the second inequality is from the condition.
  
By induction, 
\begin{align*}
    &U^k-(1-\beta_k)U^{k-1}-\beta_k U^{\star}\\
    &\le  (1-\beta_k) \sum_{i=1}^{k-1} \left[(\beta_i-\beta_{i-1}(1-\beta_{i}))\para{\Pi^{k-1}_{j=i+1}(1-\beta_j)}\para{\Pi^{i}_{l=k-1}\gamma\cP^l}(U^0-U^{\star})\right]\\
    &\quad-(1-\beta_k)\beta_{k-1}(U^0-U^{\star})+\beta_k(U^0-U^{\star}),
\end{align*}
and let $\bar{U}$ be the entire right hand side of inequality. Then, we have
    \begin{align*}
    &T^{\star}U^k-U^k\\
    &=T^{\star}U^k-(1-\beta_k)T^{\star}U^{k-1}-\beta_kT^{\star}U^{\star}-\beta_k(U^0-U^{\star})\\
     &\le \gamma \cP^k \bigg( (1-\beta_k) \sum_{i=1}^{k-1} \left[(\beta_i-\beta_{i-1}(1-\beta_{i}))\para{\Pi^{k-1}_{j=i+1}(1-\beta_j)}\para{\Pi^{i}_{l=k-1}\gamma\cP^l}(U^0-U^{\star})\right]\\
    &\quad-(1-\beta_k)\beta_{k-1}(U^0-U^{\star})+\beta_k(U^0-U^{\star})\bigg)-\beta_k(U^0-U^{\star})\\
    % & = (1-\beta_k)\gamma\cP^k\sum_{i=1}^{k-1} \left[(\beta_i-\beta_{i-1}(1-\beta_{i}))\para{\Pi^{k-1}_{j=i+1}(1-\beta_j)}\para{\Pi^{i}_{l=k-1}\gamma\cP^l}(U^0-U^{\star})\right]
    % \\&\quad -\beta_{k-1}(1-\beta_k)\gamma\cP^k(U^0-U^{\star})+(1-\beta_k)\gamma\cP^k\para{\Pi^{k-1}_{j=1}(1-\beta_j)} \para{\Pi^{0}_{l=k-1}\gamma\cP^l}(U^0-U^{\star})\\
    % &\quad +\beta_k\gamma\cP^k\para{U^{0}-U^{\star}}-\beta_k(U^0-U^{\star})\\
    & = \sum_{i=1}^k \left[(\beta_i-\beta_{i-1}(1-\beta_{i}))\para{\Pi^k_{j=i+1}(1-\beta_j)}\para{\Pi^i_{l=k}\gamma\cP^l}(U^0-U^{\star})\right] -\beta_k(U^0-U^{\star}),
\end{align*}
where inequality comes from  first inequality in Lemma \ref{lem::Anc-VIC-1-1-H1} with $\alpha=\beta_k, U=U^k, \tilde{U}=U^{k-1}$, and previously defined $\bar{U}$. 
\end{proof}
\begin{proof}[Proof of Lemma \ref{lem::Anc-VIC-2-2}]
If $k=0$, 
    \begin{align*}
    T^{\star}U^0-U^0&=T^{\star}U^0-\hat{U}^{\star}-(U^0-\hat{U}^{\star})
    \\& \ge -(U^0-\hat{U}^{\star}).
\end{align*}
where the second inequality is from the fact that $U^0 \ge T^{\star} U^0$ implies $T^{\star}U^0 \ge U^\star$ by Lemma \ref{lem::Inital_cond} and $ U^\star\ge\hat{U}^{\star} $ by Lemma \ref{lem::anti-fixed-point}.

By induction,
\begin{align*}
    &U^k-(1-\beta_k)U^{k-1}-\beta_k \hat{U}^{\star}\\&\ge  (1-\beta_k) \sum_{i=1}^{k-1} \left[(\beta_i-\beta_{i-1}(1-\beta_{i}))\para{\Pi^{k-1}_{j=i+1}(1-\beta_j)}\para{\Pi^{i}_{l=k-1}\gamma\hat{\cP}^l}(U^0-\hat{U}^{\star})\right]\\
    &\quad-(1-\beta_k)\beta_{k-1}(U^0-\hat{U}^{\star})+\beta_k(U^0-\hat{U}^{\star}), 
\end{align*}
and let $\bar{U}$ be the entire right hand side of inequality. Then, we have

    \begin{align*}
    &T^{\star}U^k-U^k\\
    &=T^{\star}U^k-(1-\beta_k)T^{\star}U^{k-1}-\beta_k\hat{T}^{\star}\hat{U}^{\star}-\beta_k(U^0-\hat{U}^{\star})\\
     &\ge \gamma \hat{\cP}^k \bigg( (1-\beta_k) \sum_{i=1}^{k-1} \left[(\beta_i-\beta_{i-1}(1-\beta_{i}))\para{\Pi^{k-1}_{j=i+1}(1-\beta_j)}\para{\Pi^{i}_{l=k-1}\gamma\hat{\cP}^l}(U^0-\hat{U}^{\star})\right]\\
    &\quad-(1-\beta_k)\beta_{k-1}(U^0-\hat{U}^{\star})+\beta_k(U^0-\hat{U}^{\star})\bigg)-\beta_k(U^0-\hat{U}^{\star})\\
    % & = (1-\beta_k)\gamma\hat{\cP}^k\sum_{i=1}^{k-1} \left[(\beta_i-\beta_{i-1}(1-\beta_{i}))\para{\Pi^{k-1}_{j=i+1}(1-\beta_j)}\para{\Pi^{i}_{l=k-1}\gamma\hat{\cP}^l}(U^0-\hat{U}^{\star})\right]
    % \\&\quad -\beta_{k-1}(1-\beta_k)\gamma\hat{\cP}^k(U^0-\hat{U}^{\star})+(1-\beta_k)\gamma\hat{\cP}^k\para{\Pi^{k-1}_{j=1}(1-\beta_j)} \para{\Pi^{0}_{l=k-1}\gamma\hat{\cP}^l}(U^0-\hat{U}^{\star})\\
    % &\quad +\beta_k\gamma\hat{\cP}^k\para{U^{0}-\hat{U}^{\star}}-\beta_k(U^0-\hat{U}^{\star})\\
    & = \sum_{i=1}^k \left[(\beta_i-\beta_{i-1}(1-\beta_{i}))\para{\Pi^k_{j=i+1}(1-\beta_j)}\para{\Pi^i_{l=k}\gamma\hat{\cP}^l}(U^0-\hat{U}^{\star})\right]-\beta_k(U^0-\hat{U}^{\star}),
\end{align*}
where inequality comes from second inequality in  Lemma \ref{lem::Anc-VIC-1-1-H2} with $\alpha=\beta_k, U=U^k, \tilde{U}=U^{k-1}$, and previously defined $\bar{U}$.
\end{proof}
Now, we prove the second rates of Theorem \ref{thm::Anc-VIC}.

\begin{proof}[Proof of second rates in Theorem \ref{thm::Anc-VIC}] 
     Let $0<\gamma<1$. Then, if $U^0 \le T^{\star}U^0$, then $T^{\star}U^0 \le U^{\star}$ and $U^k \le T^{\star}U^k$ by Lemma \ref{lem::Inital_cond}. Hence, taking $\infn{\cdot}$-norm both sides of first inequality in Lemma \ref{lem::Anc-VIC-2-1}, we have
% \begin{align*}
%     TU^k-U^k &\le \sum_{i=1}^k \left[(\beta_i-\beta_{i-1}(1-\beta_{i}))\para{\Pi^k_{j=i+1}(1-\beta_j)}\para{\Pi^i_{l=k}\gamma\cP^{\pi_l}}(U^0-U^{\star})\right]
%     \\&\quad -\beta_k(U^0-U^{\star})+\para{\Pi^k_{j=1}(1-\beta_j)} \para{\Pi^0_{l=k}\gamma\cP^{\pi_l}}(U^0-U^{\star})\\
%     &\le \sum_{i=1}^k \left[(\beta_i-\beta_{i-1}(1-\beta_{i}))\para{\Pi^k_{j=i+1}(1-\beta_j)}\para{\Pi^i_{l=k}\gamma\cP^{\pi_l}}(U^0-U^{\star})\right] -\beta_k(U^0-U^{\star})
% \end{align*}
%  Taking $l_\infty$ both sides, we get 
\[\infn{T^{\star}U^k-U^k} \le  \frac{\para{\gamma^{-1}-\gamma}\para{1+\gamma-\gamma^{k+1}}}{\para{\gamma^{k+1}}^{-1}-\gamma^{k+1}}\infn{U^0-U^{\star}}.\]
Otherwise, if $U^0 \ge TU^0$, $U^k \ge TU^k$ by Lemma \ref{lem::Inital_cond}. taking $\infn{\cdot}$-norm both sides of second inequality in Lemma \ref{lem::Anc-VIC-2-2}, we have  
\[\infn{T^{\star}U^k-U^k} \le  \frac{\para{\gamma^{-1}-\gamma}\para{1+\gamma-\gamma^{k+1}}}{\para{\gamma^{k+1}}^{-1}-\gamma^{k+1}}\infn{U^0-\hat{U}^{\star}}.\]    
\end{proof}

\section{Omitted proofs in Section \ref{sec::nonexp} }\label{s::omitted-nonexp-proofs}
First, we present the following lemma.
\begin{lemma}\label{lem::boundedness_Anc-VI}
Let $\gamma =1$. Assume a fixed point $U^{\star}$ exists. For the iterates $\{U^k\}_{k=0,1,\dots}$ of \ref{eq:anc-vi}, $\infn{U^{k}-U^{\star}} \le \infn{U^{0}-U^{\star}} $.
\end{lemma}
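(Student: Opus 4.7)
The plan is to proceed by induction on $k$, using only the two facts that $T$ is nonexpansive in $\infn{\cdot}$ (which holds for Bellman consistency/optimality operators with $\gamma=1$) and that $\beta_k\in(0,1]$ (since with $\gamma=1$ we have $\beta_k = 1/(k+1)$).

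The base case $k=0$ is immediate: $\infn{U^0-U^\star}\le\infn{U^0-U^\star}$. For the inductive step, I would exploit the fact that $TU^\star=U^\star$ to rewrite
\begin{align*}
U^k - U^\star &= \beta_k U^0 + (1-\beta_k)TU^{k-1} - U^\star \\
&= \beta_k(U^0-U^\star) + (1-\beta_k)(TU^{k-1}-TU^\star).
\end{align*}
Taking $\infn{\cdot}$ and applying the triangle inequality (valid because $\beta_k\in[0,1]$), followed by nonexpansiveness of $T$ and the inductive hypothesis, gives
\begin{align*}
\infn{U^k-U^\star} &\le \beta_k\infn{U^0-U^\star} + (1-\beta_k)\infn{TU^{k-1}-TU^\star} \\
&\le \beta_k\infn{U^0-U^\star} + (1-\beta_k)\infn{U^{k-1}-U^\star} \\
&\le \beta_k\infn{U^0-U^\star} + (1-\beta_k)\infn{U^0-U^\star} \\
&= \infn{U^0-U^\star},
\end{align*}
closing the induction.

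There is no real obstacle here; the argument is the standard ``convex combination of two things that are close to the fixed point stays close'' reasoning. The only point to double-check is that $\beta_k\in[0,1]$ in the $\gamma=1$ regime, which is immediate from the definition $\beta_k=1/\sum_{i=0}^k\gamma^{-2i}$, and that nonexpansiveness of $T$ in $\infn{\cdot}$ is available without assuming $\gamma<1$, which is indeed the case for both Bellman consistency and optimality operators when $\gamma=1$.
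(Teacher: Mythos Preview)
Your proof is correct and essentially identical to the paper's own argument: induction on $k$, decomposing $U^k-U^\star$ as $\beta_k(U^0-U^\star)+(1-\beta_k)(TU^{k-1}-U^\star)$, then applying the triangle inequality, nonexpansiveness of $T$, and the inductive hypothesis.
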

\begin{proof}
If $k=0$, it is obvious. By induction,
    \begin{align*}
        \infn{U^{k}-U^{\star}} &=\infn{\beta_{k} U^{0}+(1-\beta_{k})TU^{k-1}-U^{\star}}\\&=\infn{(1-\beta_{k})(TU^{k-1}-U^{\star})+\beta_k\para{U^0-U^{\star}}}\\& \le (1-\beta_{k})\infn{TU^{k-1}-U^{\star}}+\beta_{k}\infn{U^0-U^{\star}}\\ &\le (1-\beta_{k})\infn{U^{k-1}-U^{\star}}+\beta_{k}\infn{U^0-U^{\star}}\\& =\infn{U^0-U^{\star}}
    \end{align*}
    where the second inequality comes form nonexpansiveness of $T$.
\end{proof}

Now, we present the proof of Theorem \ref{thm::nonexp_finite}.
\begin{proof}[Proof of Theorem \ref{thm::nonexp_finite}]
     %Under assumption of existence of fixed points ${U}^{\pi}$, we note that Lemma \ref{lem::Anc-VIE-1} holds for $\gamma=1$ with same argument in its proof. Then, by plugging $\gamma=1$ in first rate of Theorem \ref{thm::Anc-VIE}, we get \begin{align*}     \infn{T^{\pi}U^k-U^k} \le \frac{2}{k+1}\infn{U^0-U^{\pi}}. \end{align*} for any fixed point $U^{\pi}$. 

First, if $U^0 \le T U^0 $, with same argument in proof of Lemma \ref{lem::Inital_cond}, we can show that $U^{k-1} \le U^k \le TU^{k-1} \le TU^k$ for $k=1,2,\dots.$ 

Since fixed point $U^{\star}$ exists by assumption, Lemma \ref{lem::Anc-VIE-1} and \ref{lem::Anc-VIC-1-1} hold. Note that $\gamma=1$ implies $\beta_k=\frac{1}{k+1}$ and if we take $\infn{\cdot}$-norm both sides for those inequalities in lemmas, by simple calculation, we have 
     \begin{align*}
     \infn{TU^k-U^k} \le \frac{2}{k+1}\infn{U^0-U^{\star}}
 \end{align*} 
for any fixed point $U^{\star}$ (since $0 \le T^{\star}U^k-U^k$, we can get upper bound of $\infn{T^{\star}U^k-U^k}$ from Lemma \ref{lem::Anc-VIC-1-1}).

Suppose that there exist $\{k_j\}_{j=0,1,\dots}$ such that $U^{k_j}$ converges to some $\tilde{U}^{\star}$. Then, $\lim_{j\rightarrow \infty} (T-I)U^{k_j}=(T-I)\tilde{U}^{\star}=0$ since $T-I$ is continuous. This implies that $\tilde{U}^{\star}$ is a fixed point. By Lemma \ref{lem::boundedness_Anc-VI} and previous argument, $U^k$ is increasing and bounded sequence in $\real^n$. Thus, $U^k$ has single limit point, some fixed point $\tilde{U}^{\star}$. Furthermore, the fact that $U^{0} \le TU^{0} \le\tilde{U}^{\star}$ implies that Lemma \ref{lem::Anc-VIE-2-1} and \ref{lem::Anc-VIC-2-1} hold. Therefore, we have
     \begin{align*}
     \infn{TU^k-U^k} \le \frac{1}{k+1}\infn{U^0-\tilde{U}^{\star}}.
 \end{align*}  
\end{proof}

Next, we prove the Theorem \ref{thm::nonexp_infinite}.
\begin{proof}[Proof of Theorem \ref{thm::nonexp_infinite}]
By same argument in the proof of Theorem \ref{thm::nonexp_finite}, if $U^0 \le T U^0 $, we can show that $U^{k-1} \le U^k \le TU^{k-1} \le TU^k$ for $k=1,2,\dots.$, and 
     \begin{align*}
     \infn{T U^k-U^k} \le \frac{2}{k+1}\infn{U^0-U^{\star}}
 \end{align*} 
 for any fixed point $U^{\star}$. Since $U^k$ is increasing and bounded by Lemma \ref{lem::boundedness_Anc-VI} and previous argument, $U^k$ converges pointwise to some $\tilde{U}^{\star}$ in general action-state space. We now show that $TU^k$ also converges pointwise to $T\tilde{U}^{\star}$. First, let $T$ be Bellman consistency operator and $U=V, \tilde{U}^{\star}=\tilde{V}^{\pi}$. By monontone convergence theorem, 
  \begin{align*}
      \lim_{k\rightarrow \infty} T^{\pi}V^k(s)&=\lim_{k\rightarrow \infty}\mathbb{E}_{a \sim \pi(\cdot\,|\,s) }\left[\mathbb{E}_{ s'\sim P(\cdot\,|\,s,a) }\left[r(s,a)+\gamma V^k(s')\right]\right]\\
       &=\mathbb{E}_{a \sim \pi(\cdot\,|\,s) }\left[\lim_{k\rightarrow \infty}\mathbb{E}_{ s'\sim P(\cdot\,|\,s,a) }\left[r(s,a)+\gamma  V^k(s')\right]\right]\\
      &=\mathbb{E}_{a \sim \pi(\cdot\,|\,s) }\left[\mathbb{E}_{ s'\sim P(\cdot\,|\,s,a) }\left[r(s,a)+\gamma \lim_{k\rightarrow \infty} V^k(s')\right]\right]\\
      &=T^{\pi}\tilde{V}^{\pi}(s)
  \end{align*} 
    for any fixed $s\in \cS$. With same argument, case $U=Q$ also holds. If $T$ is Bellman optimality operator, we use following lemma. 
    \begin{lemma}
    Let $W, W^k \in \cF(\cX)$ for $k=0,1,\dots$. If $W^k(x) \le W^{k+1}(x)$ for all $x \in \cX$, and $\{W^k\}_{k=0,1,\dots,}$ converge pointwise to  $W$, then $\lim_{k\rightarrow \infty} \{\sup_x W^k(x)\}=\sup_x W(x)$.
    \end{lemma}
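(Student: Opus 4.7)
The plan is to establish the equality by showing both inequalities. The monotonicity hypothesis $W^k \le W^{k+1}$ immediately gives two useful facts: first, the sequence of real numbers $\{\sup_x W^k(x)\}_{k\ge 0}$ is nondecreasing in $k$, so its limit exists in $(-\infty,\infty]$; second, for each fixed $x$, $W^k(x) \le W(x)$, since $W(x)=\lim_k W^k(x)=\sup_k W^k(x)$ by monotonicity. Taking the sup over $x$ on the latter yields $\sup_x W^k(x)\le \sup_x W(x)$, and hence $\lim_k \sup_x W^k(x) \le \sup_x W(x)$. Since $W \in \cF(\cX)$ is bounded, the right side is finite, so the limit on the left is finite as well.

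For the reverse inequality I would argue by an $\epsilon$-approximation. Fix an arbitrary $\epsilon>0$ and pick $x_\epsilon\in \cX$ with $W(x_\epsilon)>\sup_x W(x)-\epsilon$; such an $x_\epsilon$ exists by the definition of supremum. By pointwise convergence $W^k(x_\epsilon)\to W(x_\epsilon)$, so there is $K$ with $W^k(x_\epsilon)>W(x_\epsilon)-\epsilon$ for all $k\ge K$. Then for those $k$,
\begin{align*}
\sup_{x} W^k(x) \;\ge\; W^k(x_\epsilon) \;>\; W(x_\epsilon)-\epsilon \;>\; \sup_x W(x) - 2\epsilon .
\end{align*}
Letting $k\to \infty$ gives $\lim_k \sup_x W^k(x) \ge \sup_x W(x) - 2\epsilon$. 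Since $\epsilon>0$ is arbitrary, $\lim_k \sup_x W^k(x) \ge \sup_x W(x)$. Combining the two directions yields the claimed equality.

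I do not anticipate a serious obstacle: the monotonicity makes the first (``$\le$'') direction automatic, and the second (``$\ge$'') direction is a standard swap-of-limits argument using only the definition of supremum and pointwise convergence. The only subtlety worth flagging in the write-up is that measurability and boundedness of $W^k,W\in \cF(\cX)$ are only used to ensure the suprema are real numbers (so the limits are taken in $\mathbb{R}$), and no $\sigma$-additivity or integration theorem is required; in particular, the argument does not need $\cX$ to be finite or even separable.
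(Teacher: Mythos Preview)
Your proof is correct and follows essentially the same approach as the paper: use monotonicity to get $W^k\le W$ and hence $\sup_x W^k(x)\le \sup_x W(x)$, then use the definition of the supremum together with pointwise convergence for the reverse inequality. Your write-up is in fact more careful than the paper's (you note the monotonicity of $\sup_x W^k(x)$ in $k$ and the role of boundedness), but the argument is the same.
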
 
    \begin{proof}
        $W^k(x) \le W(x)$ implies that $\sup_x W^k(x)\le \sup_x W(x)$. If $\sup_x W(x)=a$, there exist $x$ which satisfying $a-W(x)< \frac{\epsilon}{2}$, and by definition of $W$, there exist $W^k$ such that $a-W^k(x)< \epsilon$ for any $\epsilon>0$.
    \end{proof}
    If $U=V$ and $\tilde{U}^{\star}=\tilde{V}^{\star}$, by previous lemma and monotone convergence theorem, we have 
  \begin{align*}
      \lim_{k\rightarrow \infty} T^{\star}V^k(s)&=\lim_{k\rightarrow \infty}\sup_a\left\{\mathbb{E}_{ s'\sim P(\cdot\,|\,s,a) }\left[r(s,a)+\gamma V^k(s')\right]\right\}\\
      &=\sup_a\left\{\lim_{k\rightarrow \infty}\mathbb{E}_{ s'\sim P(\cdot\,|\,s,a) }\left[r(s,a)+\gamma  V^k(s')\right]\right\}\\
      &=\sup_a\left\{\mathbb{E}_{ s'\sim P(\cdot\,|\,s,a) }\left[r(s,a)+\gamma \lim_{k\rightarrow \infty} V^k(s')\right]\right\}\\
      &=T^{\star}\tilde{U}^{\star}(s)
  \end{align*} 
    for any fixed $s\in \cS$. With similar argument, case $U=Q$ also holds. 
    
    Since $TU^k \rightarrow T\tilde{U}^{\star}$ and $U^k \rightarrow \tilde{U}^{\star}$ pointwisely, $TU^k-U^k $ converges pointwise to $ T\tilde{U}^{\star}-\tilde{U}^{\star}=0$. Thus, $\tilde{U}^{\star}$ is indeed fixed point of $T$. Furthermore, the fact that $U^{0} \le TU^{0} \le \tilde{U}^{\star}$ implies that  Lemma \ref{lem::Anc-VIE-2-1} and \ref{lem::Anc-VIC-2-1} hold. Therefore, we have
     \begin{align*}
     \infn{TU^k-U^k} \le \frac{1}{k+1}\infn{U^0-\tilde{U}^{\star}}.
 \end{align*}  
\end{proof}

\section{Omitted proofs in Section \ref{sec::complexity} }\label{s::omitted-complexity-proofs}
We present the proof of Theorem \ref{thm::complexity_contractive}.
\begin{proof}[Proof of Theorem \ref{thm::complexity_contractive}]
First, we prove the case $U^0=0$ for $n\ge k+2$. Consider the MDP $(\cS, \cA, P, r, \gamma)$ such that  
\begin{align*}
        \cS=\{s_1,\dots, s_{n}\}, \quad \cA=\{a_1\}, \quad P(s_i\,|\,s_j,a_1)=\mathds{1}_{\{i=j=1, \,j=i+1\}}, \quad r(s_i, a_1)=\mathds{1}_{\{i=2\}}.
\end{align*}
Then, $T= \gamma\cP^{\pi}U+[0,1,0, \dots, 0]^{\intercal}, U^{\star}=[0,1,\gamma,\dots, \gamma^{n-2}]^{\intercal}$, and $\infn{U^0-U^{\star}}=1$. Under the span condition, we can show that $\para{U^k}_{1}=\para{U^k}_{l}=0$ for $ k+2 \le l  \le n$ by following lemma. 
\begin{lemma}
Let $T \colon\real^{n} \rightarrow \real^{n}$ be defined as before. Then, under span condition, $\para{U^i}_{1}=0$ for $0\le i \le k $, and $\para{U^i}_{j}=0$ for $ 0\le i \le k$ and $i+2 \le j \le n$.
\end{lemma}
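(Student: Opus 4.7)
The plan is a straightforward induction on $i$, exploiting the very sparse structure of the MDP: the only transitions are $s_1\to s_1$ and $s_j\to s_{j-1}$, and the only nonzero reward is at $s_2$. Explicitly, $(TU)_1=\gamma U_1$, $(TU)_2=1+\gamma U_1$, and $(TU)_j=\gamma U_{j-1}$ for $j\ge 3$. I will prove the slightly stronger statement that, under the span condition, for every $0\le i\le k$ the vector $U^i$ is supported on the coordinate block $\{2,3,\dots,i+1\}$, i.e.\ $(U^i)_1=0$ and $(U^i)_j=0$ for all $j\ge i+2$. Specializing this to $j\le n$ yields exactly the two conclusions of the lemma.

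The base case $i=0$ is immediate from $U^0=0$. For the inductive step, assume the claim for all $m$ with $0\le m\le i-1$, and examine $TU^m-U^m$ coordinate by coordinate. At coordinate $1$, $(TU^m)_1=\gamma (U^m)_1=0$ by induction and $(U^m)_1=0$, so $(TU^m-U^m)_1=0$. For any $j\ge m+3$ we have $j-1\ge m+2$, hence $(U^m)_{j-1}=0$ by induction and also $(U^m)_j=0$, giving $(TU^m-U^m)_j=\gamma(U^m)_{j-1}-(U^m)_j=0$. Thus each residual $TU^m-U^m$ for $0\le m\le i-1$ is supported on coordinates $\{2,\dots,m+2\}\subseteq\{2,\dots,i+1\}$, and the only possibly nonzero entry at the ``edge'' coordinate $m+2$ arises from $\gamma(U^m)_{m+1}$.

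Since $U^0=0$ and, by the span condition,
\[
U^i \in U^0+\mathrm{span}\{TU^0-U^0,\,TU^1-U^1,\,\dots,\,TU^{i-1}-U^{i-1}\},
\]
it follows that $U^i$ is a linear combination of vectors each supported in $\{2,\dots,i+1\}$, so $U^i$ is supported in $\{2,\dots,i+1\}$ as well. In particular $(U^i)_1=0$ and $(U^i)_j=0$ for all $i+2\le j\le n$, completing the induction. The only thing to be careful about is cleanly maintaining the support bound through the recurrence; there is no real obstacle, since the shift-like action of $\cP^\pi$ moves mass only from coordinate $j-1$ into coordinate $j$, which is exactly why the support of the residual grows by at most one index per iteration.
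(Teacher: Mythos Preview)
Your proof is correct and follows essentially the same induction as the paper's own argument: both show that the residuals $TU^m-U^m$ have zero first coordinate and vanish beyond index $m+2$, then use the span condition to conclude the same for $U^i$. Your presentation is slightly cleaner in packaging the two coordinate claims into a single ``support on $\{2,\dots,i+1\}$'' statement, but the substance is identical.
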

\begin{proof}
     Case $k=0$ is obvious. By induction, $\para{U^l}_1=0$ for $0 \le l \le i-1$. Then $\para{TU^l}_1=0$ for $0 \le l \le i-1$.  This implies that  $\para{TU^l-U^l}_1=0$ for $0 \le l \le i-1$. Hence $\para{U^i}_1=\para{U^0}_1=0$. Again, by induction, $\para{U^l}_j=0$ for $0 \le l \le i-1$, $l+2 \le j \le n$. Then $\para{TU^l}_j=0$ for $0 \le l \le i-1$, $l+3 \le j \le n$ and this implies that 
    $\para{TU^l-U^l}_j=0$ for $0 \le l \le i-1$, $l+3 \le j \le n$. Therefore, $\para{U^i}_{j}=0$ for $ i+2 \le j \le n$. 
\end{proof}
Then, we get
\begin{align*}
    TU^k-U^k =\Big(0, 1-\para{U^k}_2,\gamma\para{U^k}_2 -\para{U^k}_3, \dots, \gamma\para{U^k}_{k}-\para{U^k}_{k+1}, \gamma\para{U^k}_{k+1}, \underbrace{0, \dots, 0}_{n-k-2}\Big),
\end{align*}
and this implies 
\[\para{TU^k-U^k}_2+\gamma^{-1}\para{TU^k-U^k}_3+\cdots+ \gamma^{-k}\para{TU^k-U^k}_{k+2}=1.\] 
Taking the absolute value on both sides,
\begin{align*}
     (1+\cdots+\gamma^{-k})\max_{1\le i\le n} {\{|TU^k-U^k|_i\}} \ge 1.
\end{align*}
Therefore, we conclude
\begin{align*}\|TU^{k}-U^{k}\|_{\infty} \ge \frac{\gamma^k}{\sum_{i=0}^k \gamma^{i}} \infn{U^0-U^{\star}}.
\end{align*}

Now, we show that for any initial point $U^0 \in \real^n$, there exists an MDP which exhibits same lower bound with the case $U^0=0$. Denote by MDP($0$) and $T_0$ the worst-case MDP and Bellman consistency or opitmality operator constructed for $U^0=0$. Define an MDP($U^0$) $(\cS, \cA, P, r, \gamma)$ for $U^0\neq 0$ as
\begin{align*}
        \cS=\{s_1,\dots, s_{n}\}, \,\, \cA=\{a_1\}, \,\, P(s_i\,|\,s_j,a_1)=\mathds{1}_{\{i=j=1, \,j=i+1\}}, \,\, r(s_i, a_1)=\para{U^0-\cP^{\pi}U^0}_i+\mathds{1}_{\{i=2\}}.
\end{align*}
Then, Bellman consistency or optimality operator $T$ satisfies 
\[TU=T_0(U-U^0)+U^0.\]
Let $\tilde{U}^{\star}$ be fixed point of $T_0$. Then, if $U^{\star}=\tilde{U}^\star+U^0$, $U^{\star}$ is fixed point of $T$. Furthermore, if $\{U^i\}^k_{i=0}$ satisfies span condition 
\[U^i \in U^0+span\{TU^0-U^0, TU^1-U^1, \dots, TU^{i-1}-U^{i-1} \}, \qquad i=1,\dots, k, \]
$\tilde{U^i}=U^i-U^0$ is a sequence satisfying 
\[\tilde{U}^i \in \underbrace{\,\tilde{U}^0}_{=0}+span\{T_0\tilde{U}^0-\tilde{U}^0, T_0\tilde{U}^1-\tilde{U}^1, \dots, T_0\tilde{U}^{i-1}-\tilde{U}^{i-1} \}, \qquad i=1,\dots, k, \]
which is the same span condition in Theorem \ref{thm::complexity_contractive} with respect to $T_0$. This is because
\[TU^i-U^i=T_0(U^i-U^0)-(U^i-U^0)=T\tilde{U}^i-\tilde{U}^i\]
for $i=0,\dots,k$.
Thus, $\{\tilde{U}^i\}^k_{i=0}$
is a sequence starting from $0$ and satisfy the span condition for $T_0$. This implies that
\begin{align*}
     \infn{TU^k-U^k}&=\infn{T\tilde{U}^k-\tilde{U}^k}\\&
     \ge \frac{\gamma^k}{\sum^k_{i=0}\gamma^{i}}\infn{\tilde{U}^0-\tilde{U}^{\star}}\\&=\frac{\gamma^k}{\sum^k_{i=0}\gamma^{i}}\infn{U^0-U^{\star}}.
\end{align*}
Hence, MDP($U^0$) is indeed our desired worst-case instance. Lastly, the fact that $U^0-U^{\star}=\tilde{U}^0-\tilde{U}^{\star}=-(0,1,\gamma,\dots,\gamma^{n-2})$ implies $U^0 \le U^{\star}$.

\end{proof}

% \begin{theorem}\label{thm::complexity_nonexpansive_condition}
% Let $k\ge 0$, $n \ge k+2$, $\gamma=1$, and $U^0 \in \real^n$. Then there exists an MDP with $|\mathcal{S}|=n$ and $|\mathcal{A}|=1$ (which implies the Bellman consistency and optimality operator for V and Q all coincide as $T\colon \real^n \rightarrow \real^n$) such that $T$ has a fixed point and 
% \[\infn{TU^k-U^k} \ge \frac{1}{k+1}\infn{U^0-U^{\star}}\]
% for any fixed point $U_0 \le U^{\star}$ and iterates $\{U^i\}^{k}_{i=0}$ satisfying 
% \[U^i \in U^0+\mathrm{span}\{TU^0-U^0, TU^1-U^1, \dots, TU^{i-1}-U^{i-1} \} \qquad\text{ for }i=1,\dots,k.\]
% \end{theorem}
% \begin{proof}
%     XXX
% \end{proof}

\section{Omitted proofs in Section \ref{sec::Apx-Anc-VI} }
First, we prove following key lemma.
\begin{lemma}\label{lem::Apx-Anc-VI}
Let $0<\gamma<1$. For the iterates $\{U^k\}_{k=0,1,\dots}$ of \hyperlink{Anc-VI}{Anc-VI}, there exist nonexpansive linear operators $\{\cP^l\}_{l=0,1,\dots,k}$ and $\{\hat{\cP}^l\}_{l=0,1,\dots,k}$ such that
\begin{align*}
    T^{\star}U^k-U^k &\le \sum_{i=1}^k \left[(\beta_i-\beta_{i-1}(1-\beta_{i}))\para{\Pi^k_{j=i+1}(1-\beta_j)}\para{\Pi^i_{l=k}\gamma\cP^l}(U^0-U^{\star})\right]-\beta_k(U^0-U^{\star})
    \\&\quad +\Pi^k_{j=1}(1-\beta_j) \Pi^0_{l=k}\gamma\cP^l(U^0-U^{\star})- \sum_{i=1}^k\Pi^k_{j=i}(1-\beta_j)\Pi^{i+1}_{l=k}\gamma\cP^{l}\para{I-\gamma\cP^{{i}}}\epsilon^{i-1},\\
     T^{\star}U^k-U^k &  \ge \sum_{i=1}^k \left[(\beta_i-\beta_{i-1}(1-\beta_{i}))\para{\Pi^k_{j=i+1}(1-\beta_j)}\para{\Pi^i_{l=k}\gamma\hat{\cP}^l}(U^0-\hat{U}^{\star})\right]-\beta_k(U^0-\hat{U}^{\star})
    \\&\quad +\Pi^k_{j=1}(1-\beta_j) \Pi^0_{l=k}\gamma\hat{\cP}^l(U^0-\hat{U}^{\star})- \sum_{i=1}^k\Pi^k_{j=i}(1-\beta_j)\Pi^{i+1}_{l=k}\gamma\hat{\cP}^{l}\para{I-\gamma\hat{\cP}^{{i}}}\epsilon^{i-1}, 
\end{align*}
for $1 \le k$, where  $\Pi^k_{j=k+1}(1-\beta_j)=1$, $\Pi^{k+1}_{l=k}\gamma\cP^{l}=\Pi^{k+1}_{l=k}\gamma\hat{\cP}^{l}=I$, and $\beta_0=1$.
\end{lemma}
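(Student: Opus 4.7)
The plan is to prove this by induction on $k$, following the structure of Lemmas~\ref{lem::Anc-VIC-1-1} and \ref{lem::Anc-VIC-1-2} for exact \ref{eq:anc-vi}, but carefully tracking how the errors $\epsilon^{i-1}$ introduced by \ref{eq:Apx-Anc-VI} propagate through the iteration. I would focus on the upper bound first, since the lower bound is entirely symmetric: replace $T^\star U^\star = U^\star$ with $\hat{T}^\star\hat{U}^\star = \hat{U}^\star$, swap Lemma~\ref{lem::Anc-VIC-1-1-H1} for Lemma~\ref{lem::Anc-VIC-1-1-H2}, and reverse all inequalities.

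For the inductive step, I would start from the \ref{eq:Apx-Anc-VI} update $U^k = \beta_k U^0 + (1-\beta_k)(T^\star U^{k-1} + \epsilon^{k-1})$ and add and subtract $\beta_k T^\star U^\star$ to obtain
\begin{align*}
T^\star U^k - U^k
&= T^\star U^k - (1-\beta_k)T^\star U^{k-1} - \beta_k T^\star U^\star - \beta_k(U^0 - U^\star) - (1-\beta_k)\epsilon^{k-1}.
\end{align*}
The key identity
\begin{align*}
U^k - (1-\beta_k)U^{k-1} - \beta_k U^\star
= \beta_k(U^0 - U^\star) + (1-\beta_k)\bigl(T^\star U^{k-1} - U^{k-1}\bigr) + (1-\beta_k)\epsilon^{k-1}
\end{align*}
then lets me invoke Lemma~\ref{lem::Anc-VIC-1-1-H1} with $\alpha = \beta_k$, $U = U^k$, $\tilde U = U^{k-1}$, and $\bar U$ equal to the right-hand side above (plugging in the inductive bound for $T^\star U^{k-1} - U^{k-1}$), producing a nonexpansive linear operator $\cP^k$ such that $T^\star U^k - (1-\beta_k)T^\star U^{k-1} - \beta_k T^\star U^\star \le \gamma\cP^k \bar U$.

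Substituting and regrouping, the ``exact'' part of $\bar U$ reproduces exactly the summation and boundary terms of Lemma~\ref{lem::Anc-VIC-1-1} after the usual telescoping. For the new error contribution, the net effect at step $k$ is $(1-\beta_k)\gamma\cP^k\epsilon^{k-1} - (1-\beta_k)\epsilon^{k-1} = -(1-\beta_k)(I - \gamma\cP^k)\epsilon^{k-1}$, which matches the $i = k$ summand under the convention $\Pi^{k+1}_{l=k}\gamma\cP^l = I$ and $\Pi^k_{j=k}(1-\beta_j) = 1-\beta_k$. Each previously accumulated error term $\epsilon^{i-1}$ for $i<k$ appears in the induction hypothesis with coefficient $-\Pi^{k-1}_{j=i}(1-\beta_j)\Pi^{i+1}_{l=k-1}\gamma\cP^l(I - \gamma\cP^i)$, and the outer multiplication by $(1-\beta_k)\gamma\cP^k$ promotes these exactly to the claimed $-\Pi^k_{j=i}(1-\beta_j)\Pi^{i+1}_{l=k}\gamma\cP^l(I - \gamma\cP^i)$. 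The base case $k=1$ is a direct calculation in which only the $i=k=1$ error term is present.

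The main obstacle is the bookkeeping: one must verify that a single outer factor $(1-\beta_k)\gamma\cP^k$ per induction step causes the products $\Pi^k_{j=i}(1-\beta_j)$ and $\Pi^{i+1}_{l=k}\gamma\cP^l$ to grow in lockstep for every $i \le k$, and that the freshly introduced $\epsilon^{k-1}$ slots cleanly into the boundary index $i=k$ under the stated conventions. Once this accounting is verified for the upper bound, the lower bound follows line-by-line by substituting $\hat{T}^\star$, $\hat U^\star$, $\hat{\cP}^l$, and Lemma~\ref{lem::Anc-VIC-1-1-H2} in place of their optimality counterparts, using that $T^\star U^{k-1} - \hat T^\star\hat U^\star$ and Lemma~\ref{lem::Anc-VIC-1-1-H2} jointly produce the reversed inequality with the same combinatorial shape.
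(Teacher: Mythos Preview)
Your proposal is correct and follows essentially the same approach as the paper's proof: induction on $k$, the same key identity for $U^k-(1-\beta_k)U^{k-1}-\beta_k U^{\star}$, application of Lemma~\ref{lem::Anc-VIC-1-1-H1} (resp.\ Lemma~\ref{lem::Anc-VIC-1-1-H2}) at each step, and the same accounting showing that the outer factor $(1-\beta_k)\gamma\cP^k$ promotes each error term while the fresh $\epsilon^{k-1}$ contributes $-(1-\beta_k)(I-\gamma\cP^k)\epsilon^{k-1}$. The only minor point not made explicit in your outline is that the base case $k=1$ itself requires two invocations of Lemma~\ref{lem::Anc-VIC-1-1-H1} (once with $\alpha=1$ to bound $T^{\star}U^0-U^0$, then once with $\alpha=\beta_1$), but this is straightforward.
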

\begin{proof}[Proof of Lemma \ref{lem::Apx-Anc-VI}] 
 First, we prove the first inequality in Lemma \ref{lem::Apx-Anc-VI} by induction. 
 
 If $k=1$, 
\begin{align*}
    U^1-(1-\beta_1)U^0-\beta_1 U^{\star}&=(1-\beta_1)\epsilon^0+ \beta_1(U^0-U^{\star})+(1-\beta_1)(T^{\star}U^0-U^0)\\& \le (1-\beta_1)\epsilon^0+(1-\beta_1)\gamma\cP^{0}(U^0-U^{\star})+(2\beta_1-1)(U^0-U^{\star}),
\end{align*} 
 where inequality comes from Lemma \ref{lem::Anc-VIC-1-1-H1} with $\alpha=1, U=U^0, \bar{U}=U^0-U^{\star}$, and let $\bar{U}$ be the entire right hand side of inequality. Then, we have 
    \begin{align*}
    T^{\star}U^1-U^1&=T^{\star}U^1-(1-\beta_1)T^{\star}U^0-\beta_1 U^\star-\beta_1 (U^0-U^\star)-(1-\beta_1)\epsilon^0
    \\& \le \gamma\cP^{1}((1-\beta_1)\epsilon^0+(1-\beta_1)\gamma\cP^{0}(U^0-U^{\star})+(2\beta_1-1)(U^0-U^{\star}))-\beta_1(U^0-U^{\star})\\&\quad-(1-\beta_1)\epsilon^0\\&=(1-\beta_1)\gamma\cP^{1}\gamma\cP^{0}(U^0-U^{\star})+\gamma\cP^{1}(2\beta_1-1)(U^0-U^{\star})-\beta_1(U^0-U^{\star})\\&\quad-(I-\gamma\cP^1)(1-\beta_1)\epsilon^0.
\end{align*}
where inequality comes from Lemma \ref{lem::Anc-VIC-1-1-H1} with $\alpha=\beta_1, U=U^1,\tilde{U}=U^0$, and previously defined $\bar{U}$. 

By induction, 
\begin{align*}
    &U^k-(1-\beta_k)U^{k-1}-\beta_k U^{\star}\\&
    =\beta_k \para{U^{0}-U^{\star}}+(1-\beta_k)(T^{\star}U^{k-1}-U^{k-1})+(1-\beta_k)\epsilon^{k-1} \\
    &\le  (1-\beta_k) \sum_{i=1}^{k-1} \left[(\beta_i-\beta_{i-1}(1-\beta_{i}))\para{\Pi^{k-1}_{j=i+1}(1-\beta_j)}\para{\Pi^{i}_{l=k-1}\gamma\cP^l}(U^0-U^{\star})\right]\\
    &\quad-(1-\beta_k)\beta_{k-1}(U^0-U^{\star})+(1-\beta_k)\para{\Pi^{k-1}_{j=1}(1-\beta_j)} \para{\Pi^{0}_{l=k-1}\gamma\cP^l}(U^0-U^{\star})\\& \quad +\beta_k(U^0-U^{\star})-(1-\beta_k)\sum_{i=1}^{k-1}\Pi^{k-1}_{j=i}(1-\beta_j)\Pi^{i+1}_{l=k-1}\gamma\cP^{l}\para{I-\gamma\cP^{{i}}}\epsilon^{i-1}+(1-\beta_k)\epsilon^{k-1},
\end{align*}
and let $\bar{U}$ be the entire right hand side of inequality. Then, we have
\begin{align*}
    &T^{\star}U^k-U^k\\
    &=T^{\star}U^k-(1-\beta_k)T^{\star}U^{k-1}-\beta_k U^0-(1-\beta_k)\epsilon^{k-1}\\
    &=T^{\star}U^k-(1-\beta_k)T^{\star}U^{k-1}-\beta_kT^{\star}U^{\star}-\beta_k(U^0-U^{\star})-(1-\beta_k)\epsilon^{k-1}\\
    &\le \gamma \cP^k \bigg( (1-\beta_k) \sum_{i=1}^{k-1} \left[(\beta_i-\beta_{i-1}(1-\beta_{i}))\para{\Pi^{k-1}_{j=i+1}(1-\beta_j)}\para{\Pi^{i}_{l=k-1}\gamma\cP^l}(U^0-U^{\star})\right]\\
    &\quad-(1-\beta_k)\beta_{k-1}(U^0-U^{\star})+(1-\beta_k)\para{\Pi^{k-1}_{j=1}(1-\beta_j)} \para{\Pi^{0}_{l=k-1}\gamma\cP^l}(U^0-U^{\star})\\& \quad +\beta_k(U^0-U^{\star})-(1-\beta_k)\sum_{i=1}^{k-1}\Pi^{k-1}_{j=i}(1-\beta_j)\Pi^{i+1}_{l=k-1}\gamma\cP^{l}\para{I-\gamma\cP^{{i}}}\epsilon^{i-1}+(1-\beta_k)\epsilon^{k-1}\bigg)\\&\quad-\beta_k(U^0-U^{\star})-(1-\beta_k)\epsilon^{k-1}\\
    % & = (1-\beta_k)\gamma\cP^k\sum_{i=1}^{k-1} \left[(\beta_i-\beta_{i-1}(1-\beta_{i}))\para{\Pi^{k-1}_{j=i+1}(1-\beta_j)}\para{\Pi^{i}_{l=k-1}\gamma\cP^l}(U^0-U^{\star})\right]
    % \\&\quad -\beta_{k-1}(1-\beta_k)\gamma\cP^k(U^0-U^{\star})+(1-\beta_k)\gamma\cP^k\para{\Pi^{k-1}_{j=1}(1-\beta_j)} \para{\Pi^{0}_{l=k-1}\gamma\cP^l}(U^0-U^{\star})\\
    % &\quad +\beta_k\gamma\cP^k\para{U^{0}-U^{\star}}-\beta_k(U^0-U^{\star})\\
    & =  \sum_{i=1}^k \left[(\beta_i-\beta_{i-1}(1-\beta_{i}))\para{\Pi^k_{j=i+1}(1-\beta_j)}\para{\Pi^i_{l=k}\gamma\cP^l}(U^0-U^{\star})\right]-\beta_k(U^0-U^{\star})
    \\&\quad +\Pi^k_{j=1}(1-\beta_j) \Pi^0_{l=k}\gamma\cP^l(U^0-U^{\star})- \sum_{i=1}^k\Pi^k_{j=i}(1-\beta_j)\Pi^{i+1}_{l=k}\gamma\cP^{l}\para{I-\gamma\cP^{{i}}}\epsilon^{i-1},
\end{align*}
where inequality comes from Lemma \ref{lem::Anc-VIC-1-1-H1} with $\alpha=\beta_k, U=U^k, \tilde{U}=U^{k-1}$, and previously defined $\bar{U}$. 

  Now, we prove second inequality in Lemma \ref{lem::Apx-Anc-VI} by induction.

  If $k=1$,
\begin{align*}
    U^1-(1-\beta_1)U^0-\beta_1 \hat{U}^{\star}&=(1-\beta_1)\epsilon^0+ \beta_1(U^0-\hat{U}^{\star})+(1-\beta_1)(T^{\star}U^0-U^0)\\& \ge (1-\beta_1)\epsilon^0+(1-\beta_1)\gamma\hat{\cP}^{0}(U^0-\hat{U}^{\star})+(2\beta_1-1)(U^0-\hat{U}^{\star}),
\end{align*} 
 where inequality comes from Lemma \ref{lem::Anc-VIC-1-1-H2} with $\alpha=1, U=U^0, \bar{U}=U^0-\hat{U}^{\star}$, and let $\bar{U}$ be the entire right hand side of inequality. Then, we have 
    \begin{align*}
    T^{\star}U^1-U^1&=T^{\star}U^1-(1-\beta_1)T^{\star}U^0-\beta_1 \hat{U}^{\star}-\beta_1 (U^0-\hat{U}^{\star})-(1-\beta_1)\epsilon^0
    \\& \ge \gamma\hat{\cP}^{1}((1-\beta_1)\epsilon^0+(1-\beta_1)\gamma\hat{\cP}^{0}(U^0-\hat{U}^{\star})+(2\beta_1-1)(U^0-\hat{U}^{\star}))-\beta_1(U^0-\hat{U}^{\star})\\&\quad-(1-\beta_1)\epsilon^0\\&=(1-\beta_1)\gamma\hat{\cP}^{1}\gamma\hat{\cP}^{0}(U^0-\hat{U}^{\star})+\gamma\hat{\cP}^{1}(2\beta_1-1)(U^0-\hat{U}^{\star})-\beta_1(U^0-\hat{U}^{\star})\\&\quad-(I-\gamma\hat{\cP}^1)(1-\beta_1)\epsilon^0.
\end{align*}
where inequality comes from Lemma \ref{lem::Anc-VIC-1-1-H2} with $\alpha=\beta_1, U=U^1,\tilde{U}=U^0$, and previously defined $\bar{U}$. 

By induction, 
\begin{align*}
    &U^k-(1-\beta_k)U^{k-1}-\beta_k \hat{U}^{\star}\\&
    =\beta_k \para{U^{0}-\hat{U}^{\star}}+(1-\beta_k)(T^{\star}U^{k-1}-U^{k-1})+(1-\beta_k)\epsilon^{k-1} \\
    &\ge  (1-\beta_k) \sum_{i=1}^{k-1} \left[(\beta_i-\beta_{i-1}(1-\beta_{i}))\para{\Pi^{k-1}_{j=i+1}(1-\beta_j)}\para{\Pi^{i}_{l=k-1}\gamma\hat{\cP}^l}(U^0-\hat{U}^{\star})\right]\\
    &\quad-(1-\beta_k)\beta_{k-1}(U^0-\hat{U}^{\star})+(1-\beta_k)\para{\Pi^{k-1}_{j=1}(1-\beta_j)} \para{\Pi^{0}_{l=k-1}\gamma\hat{\cP}^l}(U^0-\hat{U}^{\star})\\& \quad +\beta_k(U^0-\hat{U}^{\star})-(1-\beta_k)\sum_{i=1}^{k-1}\Pi^{k-1}_{j=i}(1-\beta_j)\Pi^{i+1}_{l=k-1}\gamma\hat{\cP}^{l}\para{I-\gamma\hat{\cP}^{{i}}}\epsilon^{i-1}+(1-\beta_k)\epsilon^{k-1},
\end{align*}
and let $\bar{U}$ be the entire right hand side of inequality. Then, we have
\begin{align*}
    &T^{\star}U^k-U^k\\
    &=T^{\star}U^k-(1-\beta_k)T^{\star}U^{k-1}-\beta_k U^0-(1-\beta_k)\epsilon^{k-1}\\
    &=T^{\star}U^k-(1-\beta_k)T^{\star}U^{k-1}-\beta_kT^{\star}\hat{U}^{\star}-\beta_k(U^0-\hat{U}^{\star})-(1-\beta_k)\epsilon^{k-1}\\
    &\ge \gamma \hat{\cP}^k \bigg( (1-\beta_k) \sum_{i=1}^{k-1} \left[(\beta_i-\beta_{i-1}(1-\beta_{i}))\para{\Pi^{k-1}_{j=i+1}(1-\beta_j)}\para{\Pi^{i}_{l=k-1}\gamma\hat{\cP}^l}(U^0-\hat{U}^{\star})\right]\\
    &\quad-(1-\beta_k)\beta_{k-1}(U^0-\hat{U}^{\star})+(1-\beta_k)\para{\Pi^{k-1}_{j=1}(1-\beta_j)} \para{\Pi^{0}_{l=k-1}\gamma\hat{\cP}^l}(U^0-\hat{U}^{\star})\\& \quad +\beta_k(U^0-\hat{U}^{\star})-(1-\beta_k)\sum_{i=1}^{k-1}\Pi^{k-1}_{j=i}(1-\beta_j)\Pi^{i+1}_{l=k-1}\gamma\hat{\cP}^{l}\para{I-\gamma\hat{\cP}^{{i}}}\epsilon^{i-1}+(1-\beta_k)\epsilon^{k-1}\bigg)\\&\quad-\beta_k(U^0-\hat{U}^{\star})-(1-\beta_k)\epsilon^{k-1}\\
    % & = (1-\beta_k)\gamma\hat{\cP}^k\sum_{i=1}^{k-1} \left[(\beta_i-\beta_{i-1}(1-\beta_{i}))\para{\Pi^{k-1}_{j=i+1}(1-\beta_j)}\para{\Pi^{i}_{l=k-1}\gamma\hat{\cP}^l}(U^0-\hat{U}^{\star})\right]
    % \\&\quad -\beta_{k-1}(1-\beta_k)\gamma\hat{\cP}^k(U^0-\hat{U}^{\star})+(1-\beta_k)\gamma\hat{\cP}^k\para{\Pi^{k-1}_{j=1}(1-\beta_j)} \para{\Pi^{0}_{l=k-1}\gamma\hat{\cP}^l}(U^0-\hat{U}^{\star})\\
    % &\quad +\beta_k\gamma\hat{\cP}^k\para{U^{0}-\hat{U}^{\star}}-\beta_k(U^0-\hat{U}^{\star})\\
    & =  \sum_{i=1}^k \left[(\beta_i-\beta_{i-1}(1-\beta_{i}))\para{\Pi^k_{j=i+1}(1-\beta_j)}\para{\Pi^i_{l=k}\gamma\hat{\cP}^l}(U^0-\hat{U}^{\star})\right]-\beta_k(U^0-\hat{U}^{\star})
    \\&\quad +\Pi^k_{j=1}(1-\beta_j) \Pi^0_{l=k}\gamma\hat{\cP}^l(U^0-\hat{U}^{\star})- \sum_{i=1}^k\Pi^k_{j=i}(1-\beta_j)\Pi^{i+1}_{l=k}\gamma\hat{\cP}^{l}\para{I-\gamma\hat{\cP}^{{i}}}\epsilon^{i-1},
\end{align*}
where inequality comes from Lemma \ref{lem::Anc-VIC-1-1-H2} with $\alpha=\beta_k, U=U^k, \tilde{U}=U^{k-1}$, and previously defined $\bar{U}$
\end{proof}

Now, we prove the first rate in Theorem \ref{thm::Apx-Anc-VI}. 
\begin{proof}[Proof of first rate in Theorem \ref{thm::Apx-Anc-VI}] 
Since $B_1 \le A \le B_2$ implies $ \infn{A} \le \sup\{\infn{B_1}, \infn{B_2}\}$ for $A,B \in \cF(\cX)$, if we take $\infn{\cdot}$ right side of first inequality in Lemma \ref{lem::Apx-Anc-VI}, we have 
\begin{align*}
    & \frac{\para{\gamma^{-1}-\gamma}\para{1+2\gamma-\gamma^{k+1}}}{\para{\gamma^{k+1}}^{-1}-\gamma^{k+1}}\infn{U^0-U^{\star}} +(1+\gamma)\sum_{i=1}^k\para{\Pi^k_{j=i}(1-\beta_j)}\gamma^{k-i}\infn{\epsilon^{i-1}}\\
    &\le  \frac{\para{\gamma^{-1}-\gamma}\para{1+2\gamma-\gamma^{k+1}}}{\para{\gamma^{k+1}}^{-1}-\gamma^{k+1}}\infn{U^0-U^{\star}}+\frac{1+\gamma}{1+\gamma^{k+1}}\frac{1-\gamma^k}{1-\gamma}\max_{0\le i\le k-1 } \infn{\epsilon^i}.
\end{align*}
If we apply second inequality of Lemma \ref{lem::Apx-Anc-VI} and take $\infn{\cdot}$-norm right side, we have
\begin{align*}
      \frac{\para{\gamma^{-1}-\gamma}\para{1+2\gamma-\gamma^{k+1}}}{\para{\gamma^{k+1}}^{-1}-\gamma^{k+1}}\infn{U^0-\hat{U}^{\star}}+\frac{1+\gamma}{1+\gamma^{k+1}}\frac{1-\gamma^k}{1-\gamma}\max_{0\le i\le k-1 } \infn{\epsilon^i}.
\end{align*}
Therefore, we get  
\begin{align*}
    \infn{T^{\star}U^k-U^k} &\le \frac{\para{\gamma^{-1}-\gamma}\para{1+2\gamma-\gamma^{k+1}}}{\para{\gamma^{k+1}}^{-1}-\gamma^{k+1}}\max{\left\{\infn{U^0-U^{\star}},\infn{U^0-\hat{U}^\star}\right\}}\\&\quad +\frac{1+\gamma}{1+\gamma^{k+1}}\frac{1-\gamma^k}{1-\gamma}\max_{0\le i\le k-1 } \infn{\epsilon^i}.
\end{align*}
\end{proof}
Now,  for the second rate in Theorem \ref{thm::Apx-Anc-VI}, we present following key lemma.
 \begin{lemma}\label{lem::Apx-Anc-VI-2}
Let $0<\gamma<1$. For the iterates $\{U^k\}_{k=0,1,\dots}$ of \hyperlink{Anc-VI}{Anc-VI}, if $U^0 \ge T^{\star}U^0 $, there exist nonexpansive linear operators $\{\cP^l\}_{l=0,1,\dots,k}$ and $\{\hat{\cP}^l\}_{l=0,1,\dots,k}$ such that
\begin{align*}
    T^{\star}U^k-U^k &\le \Pi^k_{j=1}(1-\beta_j) \Pi^0_{l=k}\gamma\cP^l(U^0-U^{\star})- \sum_{i=1}^k\Pi^k_{j=i}(1-\beta_j)\Pi^{i+1}_{l=k}\gamma\cP^{l}\para{I-\gamma\cP^{{i}}}\epsilon^{i-1} \\&\quad-\beta^k(U^0-U^{\star}),\\
     T^{\star}U^k-U^k &  \ge \sum_{i=1}^k \left[(\beta_i-\beta_{i-1}(1-\beta_{i}))\para{\Pi^k_{j=i+1}(1-\beta_j)}\para{\Pi^i_{l=k}\gamma\hat{\cP}^l}(U^0-\hat{U}^{\star})\right] -\beta_k(U^0-\hat{U}^{\star})\\&\quad - \sum_{i=1}^k\Pi^k_{j=i}(1-\beta_j)\Pi^{i+1}_{l=k}\gamma\hat{\cP}^{l}\para{I-\gamma\hat{\cP}^{{i}}}\epsilon^{i-1},
\end{align*}
for $1\le k$, where $\Pi^k_{j=k+1}(1-\beta_j)=1$, $\Pi^{k+1}_{l=k}\gamma\cP^{l}=\Pi^{k+1}_{l=k}\gamma\hat{\cP}^{l}=I$, and $\beta_0=1$.
\end{lemma}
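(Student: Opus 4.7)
The plan is to piggy-back on Lemma \ref{lem::Apx-Anc-VI}: each bound in Lemma \ref{lem::Apx-Anc-VI-2} is obtained from the corresponding bound of Lemma \ref{lem::Apx-Anc-VI} by dropping terms that become sign-definite under the extra assumption $U^0 \ge T^{\star}U^0$.

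I would first record two ingredients. Iterating $U^0 \ge T^{\star}U^0$ with Lemma \ref{lem::monotonicity} gives $U^0 \ge (T^{\star})^m U^0$ for every $m$, so $U^0 \ge U^{\star}$ in the limit; combined with Lemma \ref{lem::anti-fixed-point}, also $U^0 \ge \hat{U}^{\star}$. A direct calculation from the closed form $\beta_i = \gamma^{2i}(1-\gamma^2)/(1-\gamma^{2(i+1)})$ shows
\[
\beta_i - \beta_{i-1}(1-\beta_i) = -\frac{\gamma^{2(i-1)}(1-\gamma^2)^2}{1-\gamma^{2(i+1)}} \le 0.
\]

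Next I would verify that the nonexpansive linear operators $\cP^l$ and $\hat{\cP}^l$ produced in Lemmas \ref{lem::Anc-VIC-1-1-H1}--\ref{lem::Anc-VIC-1-1-H2} can additionally be taken to be positive, i.e.\ $\cP^l W \ge 0$ and $\hat{\cP}^l W \ge 0$ whenever $W \ge 0$. In the finite action case this is automatic, since they are the probability kernels $\cP^{\pi_l}$. In the infinite action case, one performs the Hahn--Banach step with the one-sided sublinear dominating functional $p(W) = \sup_s W(s)$ in place of the symmetric $\|W\|_\infty$: both defining linear maps $c\bar{V} \mapsto c\sup_s \bar{V}(s)$ and $c\bar{V} \mapsto c\inf_s \bar{V}(s)$ satisfy $\cP(c\bar{V}) \le p(c\bar{V})$ on their one-dimensional domain, so Hahn--Banach yields extensions $\cP_h$ with $\cP_h(W) \le \sup_s W(s)$; applying this inequality to $-W$ gives $\cP_h(W) \ge \inf_s W(s)$, which simultaneously supplies nonexpansiveness and positivity.

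With these pieces, the first inequality of Lemma \ref{lem::Apx-Anc-VI-2} follows at once from that of Lemma \ref{lem::Apx-Anc-VI}: each summand in $\sum_{i=1}^k (\beta_i - \beta_{i-1}(1-\beta_i))\para{\Pi^k_{j=i+1}(1-\beta_j)}\para{\Pi^i_{l=k}\gamma\cP^l}(U^0-U^{\star})$ is a non-positive scalar times a non-negative vector, so the whole sum is $\le 0$ and may be dropped from the upper bound. For the second inequality, $\Pi^k_{j=1}(1-\beta_j)\Pi^0_{l=k}\gamma\hat{\cP}^l(U^0-\hat{U}^{\star}) \ge 0$ by the same reasoning, and dropping a non-negative term from a lower bound only weakens it. The main obstacle is arranging simultaneous nonexpansiveness and positivity of the Hahn--Banach extension in the infinite action case; the standard norm-preserving construction need not be positive, but the one-sided sublinear functional trick above resolves this at no extra cost.
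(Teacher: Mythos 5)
Your proposal is correct, but it takes a genuinely different route from the paper. The paper does not deduce Lemma~\ref{lem::Apx-Anc-VI-2} from Lemma~\ref{lem::Apx-Anc-VI}; it re-runs the same induction, and at the base case and at each inductive step it uses $U^0\ge T^{\star}U^0$ (hence $U^0\ge U^{\star}\ge \hat{U}^{\star}$, via Lemmas~\ref{lem::monotonicity} and \ref{lem::anti-fixed-point}) to discard the sign-definite scalar terms, e.g.\ $(2\beta_1-1)(U^0-U^{\star})\le 0$ and $(\beta_k-(1-\beta_k)\beta_{k-1})(U^0-U^{\star})\le 0$, \emph{before} invoking the comparison Lemmas~\ref{lem::Anc-VIC-1-1-H1}/\ref{lem::Anc-VIC-1-1-H2}; because the dropped terms are absorbed into the choice of $\bar{U}$ at the hypothesis stage, the paper never needs any order-preserving property of the Hahn--Banach operators $\cP_H$, $\hat{\cP}_H$. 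You instead treat Lemma~\ref{lem::Apx-Anc-VI} as the starting point and drop terms \emph{after} the operators are fixed, which is only legitimate if the operators are positivity-preserving; you correctly identify that the paper's norm-based Hahn--Banach extension does not guarantee this, and your fix --- extending under the sublinear dominating functional $p(W)=\sup_s W(s)$, so that $\inf_s W(s)\le \cP_h(W)\le\sup_s W(s)$ gives both nonexpansiveness and positivity while still agreeing with $\sup$ (resp.\ $\inf$) on the span of $\bar{U}$ --- is sound, as is your sign computation $\beta_i-\beta_{i-1}(1-\beta_i)=-\gamma^{2(i-1)}(1-\gamma^2)^2/(1-\gamma^{2(i+1)})\le 0$ and the derivation $U^0\ge U^{\star}\ge\hat{U}^{\star}$. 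To be fully rigorous you should state explicitly that the proof of Lemma~\ref{lem::Apx-Anc-VI} goes through verbatim with the strengthened (positive) versions of Lemmas~\ref{lem::Anc-VIC-1-1-H1}/\ref{lem::Anc-VIC-1-1-H2}, since Lemma~\ref{lem::Apx-Anc-VI} as stated only asserts existence of nonexpansive operators and your argument needs the positive ones produced in its proof. What each approach buys: yours yields the lemma as a short corollary and isolates a reusable structural fact (the comparison operators can be taken monotone), at the cost of modifying the auxiliary construction; the paper's redone induction is longer but needs nothing beyond the lemmas already proved.
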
     
\begin{proof}[Proof of Lemma \ref{lem::Apx-Anc-VI-2}]
If $U^0 \ge T^{\star}U^0$, $U^0 \ge \lim_{m\rightarrow \infty}(T^{\star})^mU^0= U^{\star}$ by Lemma \ref{lem::monotonicity}. By Lemma \ref{lem::anti-fixed-point}, this also implies $U^0 \ge \hat{U}^{\star}$.

 First, we prove first inequality in Lemma \ref{lem::Apx-Anc-VI-2} by induction.
  If $k=1$, 
\begin{align*}
    U^1-(1-\beta_1)U^0-\beta_1 U^{\star}&=(1-\beta_1)\epsilon^0+ \beta_1(U^0-U^{\star})+(1-\beta_1)(T^{\star}U^0-U^0)\\& \le (1-\beta_1)\epsilon^0+(1-\beta_1)\gamma\cP^{0}(U^0-U^{\star})+(2\beta_1-1)(U^0-U^{\star})\\& \le (1-\beta_1)\epsilon^0+(1-\beta_1)\gamma\cP^{0}(U^0-U^{\star}),
\end{align*} 
 where the second inequality is from the $ (2\beta_1-1)(U^0-U^{\star})\le 0$, and first inequality comes from Lemma \ref{lem::Anc-VIC-1-1-H1} with $\alpha=1, U=U^0, \bar{U}=U^0-U^{\star}$, and let $\bar{U}$ be the entire right hand side of inequality. Then, we have 
    \begin{align*}
    T^{\star}U^1-U^1&=T^{\star}U^1-(1-\beta_1)T^{\star}U^0-\beta_1 U^\star-\beta_1 (U^0-U^\star)-(1-\beta_1)\epsilon^0
    \\& \le \gamma\cP^{1}((1-\beta_1)\epsilon^0+(1-\beta_1)\gamma\cP^{0}(U^0-U^{\star}))-\beta_1(U^0-U^{\star})-(1-\beta_1)\epsilon^0\\&=(1-\beta_1)\gamma\cP^{1}\gamma\cP^{0}(U^0-U^{\star})-\beta_1(U^0-U^{\star})-(I-\gamma\cP^1)(1-\beta_1)\epsilon^0.
\end{align*}
where inequality comes from Lemma \ref{lem::Anc-VIC-1-1-H1} with $\alpha=\beta_1, U=U^1,\tilde{U}=U^0$, and previously defined $\bar{U}$. 

By induction, 
\begin{align*}
    &U^k-(1-\beta_k)U^{k-1}-\beta_k U^{\star}\\&
    =\beta_k \para{U^{0}-U^{\star}}+(1-\beta_k)(T^{\star}U^{k-1}-U^{k-1})+(1-\beta_k)\epsilon^{k-1} \\
    &\le \beta_k(U^0-U^{\star})-(1-\beta_k)\beta_{k-1}(U^0-U^{\star})+(1-\beta_k)\para{\Pi^{k-1}_{j=1}(1-\beta_j)} \para{\Pi^{0}_{l=k-1}\gamma\cP^l}(U^0-U^{\star})\\& \quad+(1-\beta_k)\epsilon^{k-1} -(1-\beta_k)\sum_{i=1}^{k-1}\Pi^{k-1}_{j=i}(1-\beta_j)\Pi^{i+1}_{l=k-1}\gamma\cP^{l}\para{I-\gamma\cP^{{i}}}\epsilon^{i-1}\\&\le (1-\beta_k)\para{\Pi^{k-1}_{j=1}(1-\beta_j)} \para{\Pi^{0}_{l=k-1}\gamma\cP^l}(U^0-U^{\star})+(1-\beta_k)\epsilon^{k-1}\\&-(1-\beta_k)\sum_{i=1}^{k-1}\Pi^{k-1}_{j=i}(1-\beta_j)\Pi^{i+1}_{l=k-1}\gamma\cP^{l}\para{I-\gamma\cP^{{i}}}\epsilon^{i-1},
\end{align*}
where the second inequality is from $\beta_k-(1-\beta_k)\beta_{k-1} \le 0$ and let $\bar{U}$ be the entire right hand side of inequality. Then, we have
\begin{align*}
    &T^{\star}U^k-U^k\\
    &=T^{\star}U^k-(1-\beta_k)T^{\star}U^{k-1}-\beta_kT^{\star}U^{\star}-\beta_k(U^0-U^{\star})-(1-\beta_k)\epsilon^{k-1}\\
    &\le \gamma \cP^k \bigg((1-\beta_k)\para{\Pi^{k-1}_{j=1}(1-\beta_j)} \para{\Pi^{0}_{l=k-1}\gamma\cP^l}(U^0-U^{\star})+(1-\beta_k)\epsilon^{k-1}\\&-(1-\beta_k)\sum_{i=1}^{k-1}\Pi^{k-1}_{j=i}(1-\beta_j)\Pi^{i+1}_{l=k-1}\gamma\cP^{l}\para{I-\gamma\cP^{{i}}}\epsilon^{i-1}\bigg)-\beta_k(U^0-U^{\star})-(1-\beta_k)\epsilon^{k-1}
    % & = (1-\beta_k)\gamma\cP^k\sum_{i=1}^{k-1} \left[(\beta_i-\beta_{i-1}(1-\beta_{i}))\para{\Pi^{k-1}_{j=i+1}(1-\beta_j)}\para{\Pi^{i}_{l=k-1}\gamma\cP^l}(U^0-U^{\star})\right]
    % \\&\quad -\beta_{k-1}(1-\beta_k)\gamma\cP^k(U^0-U^{\star})+(1-\beta_k)\gamma\cP^k\para{\Pi^{k-1}_{j=1}(1-\beta_j)} \para{\Pi^{0}_{l=k-1}\gamma\cP^l}(U^0-U^{\star})\\
    % &\quad +\beta_k\gamma\cP^k\para{U^{0}-U^{\star}}-\beta_k(U^0-U^{\star})\\
    \\&\quad =   \Pi^k_{j=1}(1-\beta_j) \Pi^0_{l=k}\gamma\cP^l(U^0-U^{\star})- \sum_{i=1}^k\Pi^k_{j=i}(1-\beta_j)\Pi^{i+1}_{l=k}\gamma\cP^{l}\para{I-\gamma\cP^{{i}}}\epsilon^{i-1} \\&\quad-\beta^k(U^0-U^{\star}),
\end{align*}
where the first inequality comes from Lemma \ref{lem::Anc-VIC-1-1-H1} with $\alpha=\beta_k, U=U^k, \tilde{U}=U^{k-1}$, and previously defined $\bar{U}$.

For the second inequality in Lemma \ref{lem::Apx-Anc-VI-2}, if $k=1$, 
\begin{align*}
    U^1-(1-\beta_1)U^0-\beta_1 \hat{U}^{\star}&=(1-\beta_1)\epsilon^0+ \beta_1(U^0-\hat{U}^{\star})+(1-\beta_1)(T^{\star}U^0-U^0)\\& =(1-\beta_1)\epsilon^0+ \beta_1(U^0-\hat{U}^{\star})+(1-\beta_1)(T^{\star}U^0-\hat{U}^{\star}-(U^0-\hat{U}^{\star}))
    \\& \ge (1-\beta_1)\epsilon^0+ \beta_1(U^0-\hat{U}^{\star})-(1-\beta_1)(U^0-\hat{U}^{\star})
\end{align*} 
 where the second inequality is from $U^0 \ge T^{\star} U^0\ge\hat{U}^{\star}$, and let $\bar{U}$ be the entire right hand side of inequality. Then, we have 
     \begin{align*}
    T^{\star}U^1-U^1&=T^{\star}U^1-(1-\beta_1)T^{\star}U^0-\beta_1 U^\star-\beta_1 (U^0-\hat{U}^\star)-(1-\beta_1)\epsilon^0
    \\& \ge \gamma\cP^{1}( (1-\beta_1)\epsilon^0+\beta_1(U^0-\hat{U}^{\star})-(1-\beta_1)(U^0-\hat{U}^{\star}))-\beta_1(U^0-\hat{U}^{\star})-(1-\beta_1)\epsilon^0\\&=(2\beta_1-1)\gamma\cP^{1}(U^0-\hat{U}^{\star})-\beta_1(U^0-\hat{U}^{\star})-(I-\gamma\cP^1)(1-\beta_1)\epsilon^0.
\end{align*}
where inequality comes from Lemma \ref{lem::Anc-VIC-1-1-H2} with $\alpha=\beta_1, U=U^1,\tilde{U}=U^0$, and previously defined $\bar{U}$. 

By induction,
\begin{align*}
    &U^k-(1-\beta_k)U^{k-1}-\beta_k \hat{U}^{\star}\\&\ge  (1-\beta_k) \sum_{i=1}^{k-1} \left[(\beta_i-\beta_{i-1}(1-\beta_{i}))\para{\Pi^{k-1}_{j=i+1}(1-\beta_j)}\para{\Pi^{i}_{l=k-1}\gamma\hat{\cP}^l}(U^0-\hat{U}^{\star})\right]\\
    &\quad+(\beta_k-(1-\beta_k)\beta_{k-1})(U^0-\hat{U}^{\star})+(1-\beta_k)\epsilon^{k-1}\\&\quad-(1-\beta_k)\sum_{i=1}^{k-1}\Pi^{k-1}_{j=i}(1-\beta_j)\Pi^{i+1}_{l=k-1}\gamma\hat{\cP}^{l}\para{I-\gamma\hat{\cP}^{{i}}}\epsilon^{i-1}, 
\end{align*}
and let $\bar{U}$ be the entire right hand side of inequality. Then, we have

    \begin{align*}
    &T^{\star}U^k-U^k\\
    &=T^{\star}U^k-(1-\beta_k)T^{\star}U^{k-1}-\beta_k\hat{T}^{\star}\hat{U}^{\star}-\beta_k(U^0-\hat{U}^{\star})-(1-\beta_k)\epsilon^{k-1}\\
     &\ge \gamma \hat{\cP}^k \bigg( (1-\beta_k) \sum_{i=1}^{k-1} \left[(\beta_i-\beta_{i-1}(1-\beta_{i}))\para{\Pi^{k-1}_{j=i+1}(1-\beta_j)}\para{\Pi^{i}_{l=k-1}\gamma\hat{\cP}^l}(U^0-\hat{U}^{\star})\right]\\
    &\quad+(\beta_k-(1-\beta_k)\beta_{k-1})(U^0-\hat{U}^{\star})-(1-\beta_k)\sum_{i=1}^{k-1}\Pi^{k-1}_{j=i}(1-\beta_j)\Pi^{i+1}_{l=k-1}\gamma\hat{\cP}^{l}\para{I-\gamma\hat{\cP}^{{i}}}\epsilon^{i-1}\\&\quad+(1-\beta_k)\epsilon^{k-1}\bigg)-(1-\beta_k)\epsilon^{k-1}-\beta_k(U^0-\hat{U}^{\star})\\
    % & = (1-\beta_k)\gamma\hat{\cP}^k\sum_{i=1}^{k-1} \left[(\beta_i-\beta_{i-1}(1-\beta_{i}))\para{\Pi^{k-1}_{j=i+1}(1-\beta_j)}\para{\Pi^{i}_{l=k-1}\gamma\hat{\cP}^l}(U^0-\hat{U}^{\star})\right]
    % \\&\quad -\beta_{k-1}(1-\beta_k)\gamma\hat{\cP}^k(U^0-\hat{U}^{\star})+(1-\beta_k)\gamma\hat{\cP}^k\para{\Pi^{k-1}_{j=1}(1-\beta_j)} \para{\Pi^{0}_{l=k-1}\gamma\hat{\cP}^l}(U^0-\hat{U}^{\star})\\
    % &\quad +\beta_k\gamma\hat{\cP}^k\para{U^{0}-\hat{U}^{\star}}-\beta_k(U^0-\hat{U}^{\star})\\
    & = \sum_{i=1}^k \left[(\beta_i-\beta_{i-1}(1-\beta_{i}))\para{\Pi^k_{j=i+1}(1-\beta_j)}\para{\Pi^i_{l=k}\gamma\hat{\cP}^l}(U^0-\hat{U}^{\star})\right]-\beta_k(U^0-\hat{U}^{\star})\\&
     \quad- \sum_{i=1}^k\Pi^k_{j=i}(1-\beta_j)\Pi^{i+1}_{l=k}\gamma\hat{\cP}^{l}\para{I-\gamma\hat{\cP}^{{i}}}\epsilon^{i-1},
\end{align*}
where inequality comes from Lemma \ref{lem::Anc-VIC-1-1-H2} with $\alpha=\beta_k, U=U^k, \tilde{U}=U^{k-1}$, and previously defined $\bar{U}$.
\end{proof}
Now, we prove the second rate in Theorem \ref{thm::Apx-Anc-VI}. 
\begin{proof}[Proof of second rate in Theorem \ref{thm::Apx-Anc-VI}] 
If we take $\infn{\cdot}$ right side of first inequality in Lemma \ref{lem::Apx-Anc-VI-2}, we have 
\begin{align*}
   \frac{\para{\gamma^{-1}-\gamma}\gamma}{\para{\gamma^{k+1}}^{-1}-\gamma^{k+1}}\infn{U^0-U^{\star}}
    +\frac{1+\gamma}{1+\gamma^{k+1}}\frac{1-\gamma^k}{1-\gamma}\max_{0\le i\le k-1 } \infn{\epsilon^i}.
\end{align*}
If we apply second inequality of Lemma \ref{lem::Apx-Anc-VI-2} and take $\infn{\cdot}$-norm right side, we have
\begin{align*}  \frac{\para{\gamma^{-1}-\gamma}\para{1+\gamma-\gamma^{k+1}}}{\para{\gamma^{k+1}}^{-1}-\gamma^{k+1}}\infn{U^0-\hat{U}^{\star}} +\frac{1+\gamma}{1+\gamma^{k+1}}\frac{1-\gamma^k}{1-\gamma}\max_{0\le i\le k-1 } \infn{\epsilon^i}.
\end{align*}
Therefore, we get  
\begin{align*}
    \infn{T^{\star}U^k-U^k} &\le \frac{\para{\gamma^{-1}-\gamma}\para{1+\gamma-\gamma^{k+1}}}{\para{\gamma^{k+1}}^{-1}-\gamma^{k+1}}\infn{U^0-\hat{U}^\star}+\frac{1+\gamma}{1+\gamma^{k+1}}\frac{1-\gamma^k}{1-\gamma}\max_{0\le i\le k-1 } \infn{\epsilon^i},
\end{align*}
since $\hat{U}^{\star} \le U^{\star}\le U^0 $ implies that
\[\frac{\para{\gamma^{-1}-\gamma}\gamma}{\para{\gamma^{k+1}}^{-1}-\gamma^{k+1}}\infn{U^0-U^{\star}} \le\frac{\para{\gamma^{-1}-\gamma}\para{1+\gamma-\gamma^{k+1}}}{\para{\gamma^{k+1}}^{-1}-\gamma^{k+1}}\infn{U^0-\hat{U}^\star}. \]
\end{proof}

\section{Omitted proofs in Section \ref{sec::GS-Anc-VI}}
For the analyses, we first define $\hat{T}_{GS}^{\star}\colon \reals^n\rightarrow \reals^n$ as
\[\hat{T}_{GS}^{\star}=\hat{T}^{\star}_{n}\cdots \hat{T}^{\star}_{2}\hat{T}^{\star}_{1},\]
where $\hat{T}_j^{\star}: \reals^n \rightarrow \reals^n$ is defined as
\begin{align*}
\hat{T}^{\star}_j(U)=(U_1,\dots,U_{j-1},\para{\hat{T}^{\star}(U)}_j,U_{j+1},\dots,U_n)
    \end{align*}
for $j=1,\dots,n$, where $\hat{T}^{\star}$ is Bellman anti-optimality operator. 

\begin{fact}\label{lem::GS}[Classical result, \protect{\cite[Proposition~1.3.2]{bertsekas2015dynamic}}]
$\hat{T}_{GS}^{\star}$ is a $\gamma$-contractive operator and has the same fixed point as $\hat{T}^{\star}$.
\end{fact}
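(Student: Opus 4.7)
The plan is to establish the two parts of the statement, namely the $\gamma$-contractivity of $\hat{T}_{GS}^{\star}$ in $\|\cdot\|_{\infty}$ and the coincidence of its fixed point with that of $\hat{T}^{\star}$, essentially mirroring the argument sketched for the corresponding Bellman optimality result (the earlier Fact~\ref{lem::GS}). The prerequisite that $\hat{T}^{\star}$ itself is $\gamma$-contractive in $\|\cdot\|_{\infty}$ follows by the same standard MDP contraction argument that works for $T^{\star}$, with the $\sup$ replaced by an $\inf$; I would record this briefly and then move on.

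For the shared fixed point, let $\hat{U}^{\star}$ denote the (unique, by Banach) fixed point of $\hat{T}^{\star}$. Then $(\hat{T}^{\star}\hat{U}^{\star})_j = \hat{U}^{\star}_j$ for every coordinate $j$, so $\hat{T}^{\star}_j\hat{U}^{\star} = \hat{U}^{\star}$ for each $j$, and composing over $j=1,\dots,n$ yields $\hat{T}_{GS}^{\star}\hat{U}^{\star} = \hat{U}^{\star}$. Once contractivity of $\hat{T}_{GS}^{\star}$ is in hand, Banach's theorem gives $\hat{T}_{GS}^{\star}$ a unique fixed point, and this forces $\hat{U}^{\star}$ to be that unique fixed point of $\hat{T}_{GS}^{\star}$ as well.

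For contractivity, fix $U,\tilde{U}\in\reals^n$ and define $W_0=U$, $W_j=\hat{T}_j^{\star}W_{j-1}$, $\tilde{W}_0=\tilde{U}$, $\tilde{W}_j=\hat{T}_j^{\star}\tilde{W}_{j-1}$, so that $W_n=\hat{T}_{GS}^{\star}U$ and $\tilde{W}_n=\hat{T}_{GS}^{\star}\tilde{U}$. I would prove by induction on $j$ the coordinatewise invariant
\[
|W_j(k)-\tilde{W}_j(k)|\le \begin{cases}\gamma\,\infn{U-\tilde{U}} & \text{if }k\le j,\\ \infn{U-\tilde{U}} & \text{if }k>j.\end{cases}
\]
The inductive step uses that $\hat{T}_j^{\star}$ only modifies the $j$-th coordinate (so the $k>j$ bound and the already-contracted $k<j$ bounds are inherited from stage $j-1$), while the newly updated $j$-th coordinate obeys
\[
|W_j(j)-\tilde{W}_j(j)|=|(\hat{T}^{\star}W_{j-1})_j-(\hat{T}^{\star}\tilde{W}_{j-1})_j|\le \gamma\,\infn{W_{j-1}-\tilde{W}_{j-1}}\le \gamma\,\infn{U-\tilde{U}},
\]
by $\gamma$-contractivity of $\hat{T}^{\star}$ together with the inductive hypothesis (which, in particular, yields $\infn{W_{j-1}-\tilde{W}_{j-1}}\le\infn{U-\tilde{U}}$). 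Specializing to $j=n$ delivers $\infn{\hat{T}_{GS}^{\star}U-\hat{T}_{GS}^{\star}\tilde{U}}\le\gamma\,\infn{U-\tilde{U}}$.

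There is no real obstacle here; the argument is the direct anti-optimality analogue of the classical Gauss--Seidel contraction proof, and the only subtlety is the bookkeeping of which coordinates have been updated at stage $j$, which is exactly what the above invariant encodes. The replacement of $\sup$ by $\inf$ in the definition of $\hat{T}^{\star}$ plays no role in the contraction estimate, since that estimate uses only $|\inf_a f(a)-\inf_a g(a)|\le \sup_a|f(a)-g(a)|$, a symmetric analogue of the corresponding $\sup$ inequality.
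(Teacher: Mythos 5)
Your proof is correct; the paper itself gives no argument for this Fact, simply citing it as a classical result from Bertsekas, and your coordinatewise induction is precisely the standard Gauss--Seidel contraction proof (the $\inf$ versus $\sup$ distinction indeed plays no role, since the contraction estimate only needs $|\inf_a f(a)-\inf_a g(a)|\le\sup_a|f(a)-g(a)|$). The fixed-point coincidence via $\hat{T}^{\star}_j\hat{U}^{\star}=\hat{U}^{\star}$ for each $j$ followed by Banach uniqueness is also exactly the intended argument.
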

%Note that $\hat{T}_{GS}^{\star}\colon \reals^n\rightarrow \reals^n$ is $\gamma$-contractive and has the same fixed point as $\hat{T}^{\star}$.

Now, we introduce the following lemmas.
 \begin{lemma}\label{lem::GS-Anc-VIC-H1} Let $0<\gamma<1$. If $ 0\le \alpha \le 1$, then there exist $\gamma$-contractive nonnegative matrix $\cP_{GS}$ such that 
\begin{align*}
    T_{GS}^{\star}U-(1-\alpha) T_{GS}^{\star}\tilde{U}-\alpha T_{GS}^{\star}U^{\star} &\le \cP_{GS}(U-(1-\alpha) \tilde{U}-\alpha U^{\star}).
\end{align*}  
\end{lemma}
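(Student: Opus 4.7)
The plan is to reduce the Gauss--Seidel statement to repeated application of the coordinate-wise version of Lemma~\ref{lem::Anc-VIC-1-1-H1}. Introduce the intermediate iterates $U^{(0)}=U$, $\tilde U^{(0)}=\tilde U$, $U^{\star(0)}=U^{\star}$ and, for $j=1,\dots,n$, $U^{(j)}=T^{\star}_jU^{(j-1)}$ and similarly for the other two sequences. Let $\pi_j$ be a policy that is greedy for $U^{(j-1)}$ at state $s_j$ (this exists because the action space is finite), and define the $n\times n$ matrix
\[
M_j \;=\; I-e_je_j^{\top}+\gamma\,e_j\,(\cP^{\pi_j})_{j,:},
\]
i.e., $M_j$ is the identity except that row $j$ is replaced by $\gamma(\cP^{\pi_j})_{j,:}$. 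Each $M_j$ is nonnegative; its row $j$ sums to $\gamma$ and every other row sums to $1$.

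The inductive ingredient is that for every $j$,
\[
U^{(j)} - (1-\alpha)\tilde U^{(j)} - \alpha U^{\star(j)} \;\le\; M_j\bigl(U^{(j-1)} - (1-\alpha)\tilde U^{(j-1)} - \alpha U^{\star(j-1)}\bigr).
\]
For coordinates $i\neq j$ the two sides are literally equal, and for coordinate $j$ the inequality is exactly the finite-action case of Lemma~\ref{lem::Anc-VIC-1-1-H1}: using $T^{\pi_j}\tilde U^{(j-1)}\le T^{\star}\tilde U^{(j-1)}$, $T^{\pi_j}U^{\star(j-1)}\le T^{\star}U^{\star(j-1)}$, the affine identity $T^{\pi_j}X-T^{\pi_j}Y=\gamma\cP^{\pi_j}(X-Y)$, and $\alpha,1-\alpha\ge 0$. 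Because every $M_j$ is nonnegative, the $n$ inequalities can be chained by left-multiplying with the nonnegative matrices $M_n\cdots M_{j+1}$, yielding the required estimate with
\[
\cP_{GS} \;:=\; M_nM_{n-1}\cdots M_1.
\]

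The one remaining step, and the main obstacle, is to verify $\|\cP_{GS}\|_{\infty}\le\gamma$. Writing $P_j=M_jM_{j-1}\cdots M_1$, I claim by induction on $j$ that the first $j$ rows of $P_j$ each have row sum at most $\gamma$ while rows $j+1,\dots,n$ have row sum exactly $1$. For the inductive step, rows $i\neq j+1$ of $P_{j+1}=M_{j+1}P_j$ coincide with the corresponding rows of $P_j$ (since $M_{j+1}$ acts as the identity on those rows), so their row sums are preserved; row $j+1$ of $P_{j+1}$ equals $\gamma(\cP^{\pi_{j+1}})_{j+1,:}\,P_j$, whose row sum is
\[
\gamma\sum_{k}(\cP^{\pi_{j+1}})_{j+1,k}\,s_k \;\le\; \gamma\sum_{k}(\cP^{\pi_{j+1}})_{j+1,k} \;=\; \gamma,
\]
where $s_k\le 1$ is the row sum of row $k$ of $P_j$. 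Taking $j=n$ gives $\|\cP_{GS}\|_{\infty}\le\gamma$, i.e., $\cP_{GS}$ is a $\gamma$-contractive nonnegative matrix. The bookkeeping above is the only delicate piece; the rest is a direct coordinate-wise transcription of Lemma~\ref{lem::Anc-VIC-1-1-H1}.
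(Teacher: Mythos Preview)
Your proof is correct and follows essentially the same approach as the paper: you introduce the coordinate-wise matrices $M_j$ (the paper's $\cP_i$), use the greedy policy at the intermediate iterate to obtain the coordinate-$j$ inequality, chain via nonnegativity, and bound the row sums of $\cP_{GS}=M_n\cdots M_1$ by induction. The only cosmetic difference is that you track $U^{\star(j)}$ explicitly (in fact $U^{\star(j)}\equiv U^{\star}$ since $T^{\star}_jU^{\star}=U^{\star}$), and your row-sum induction is stated a bit more carefully than the paper's one-line version.
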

\begin{lemma}\label{lem::GS-Anc-VIC-H2} Let $0<\gamma<1$. If $ 0\le \alpha \le 1$, then there exist $\gamma$-contractive nonnegative matrix $\hat{\cP}_{GS}$ such that 
\begin{align*}
    \hat{\cP}_{GS}(U-(1-\alpha) \tilde{U}-\alpha \hat{U}^{\star}) \le T_{GS}^{\star}U-(1-\alpha) T_{GS}^{\star}\tilde{U}-\alpha \hat{T}_{GS}^{\star}\hat{U}^{\star}.
\end{align*}
\end{lemma}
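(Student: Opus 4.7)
The plan is to prove the inequality coordinate by coordinate along the Gauss--Seidel sweep. Define the intermediate iterates $U^{(0)}=U$ and $U^{(j)}=T^{\star}_{j}U^{(j-1)}$, and similarly $\tilde U^{(j)}=T^{\star}_j\tilde U^{(j-1)}$, so that $U^{(n)}=T^{\star}_{GS}U$ and $\tilde U^{(n)}=T^{\star}_{GS}\tilde U$. Because $\hat U^{\star}$ is the fixed point of $\hat T^{\star}$, each $\hat T^{\star}_j$ leaves it unchanged and $\hat T^{\star}_{GS}\hat U^{\star}=\hat U^{\star}$. Setting $W^{(j)}:=U^{(j)}-(1-\alpha)\tilde U^{(j)}-\alpha\hat U^{\star}$, the target inequality reduces to producing a suitable $\hat\cP_{GS}$ with $W^{(n)}\ge\hat\cP_{GS}W^{(0)}$.

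For each $j$, coordinates $i\neq j$ are untouched by $T^{\star}_j$, so $W^{(j)}_i=W^{(j-1)}_i$. On coordinate $j$ I use the one-state version of the argument inside Lemma~\ref{lem::Anc-VIC-1-1-H2}: apply $\sup F_1-(1-\alpha)\sup F_2\ge\inf(F_1-(1-\alpha)F_2)$ and $-\alpha\inf F_3=-\inf\alpha F_3$, followed by $\inf A-\inf B\ge\inf(A-B)$, to extract a single action $\hat a_j\in\cA$ (the arg-min is attained by finiteness of $\cA$). The reward terms cancel because $1-(1-\alpha)-\alpha=0$, leaving
\[
W^{(j)}_j \;\ge\; \gamma\sum_{s'}P(s'\,|\,s_j,\hat a_j)\,W^{(j-1)}_{s'}.
\]
This gives $W^{(j)}\ge M_j W^{(j-1)}$, where $M_j$ is the nonnegative matrix with standard-basis rows $i\neq j$ and row $j$ equal to $\gamma P(\cdot\,|\,s_j,\hat a_j)$. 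Iterating, and using nonnegativity of each $M_j$ to preserve componentwise inequalities, $W^{(n)}\ge\hat\cP_{GS}W^{(0)}$ with $\hat\cP_{GS}:=M_n\cdots M_1$, which is itself nonnegative.

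The main obstacle is verifying $\|\hat\cP_{GS}\|_\infty\le\gamma$, equivalent for a nonnegative matrix to every row sum being at most $\gamma$. I prove by induction on $j$ that $N_j:=M_j\cdots M_1$ has row sums $\le\gamma$ on rows $1,\dots,j$ and row sums exactly $1$ on rows $j+1,\dots,n$. Rows $i\neq j$ of $N_j$ coincide with rows $i$ of $N_{j-1}$ since $M_j$ acts as identity there, leaving only the $j$-th row to estimate:
\[
\sum_k(N_j)_{j,k}=\sum_l\gamma P(s_l\,|\,s_j,\hat a_j)\sum_k(N_{j-1})_{l,k}\;\le\;\gamma\bigl[\gamma p+(1-p)\bigr]=\gamma\bigl[1-(1-\gamma)p\bigr]\le\gamma,
\]
where $p:=\sum_{l<j}P(s_l\,|\,s_j,\hat a_j)\in[0,1]$, using the inductive bounds $\sum_k(N_{j-1})_{l,k}\le\gamma$ for $l<j$ and $=1$ for $l\ge j$. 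Taking $j=n$ yields the required contraction. The $Q$-value case is identical after indexing coordinates by state-action pairs.
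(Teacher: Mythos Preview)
Your proof is correct and follows essentially the same approach as the paper: construct a per-coordinate nonnegative matrix $M_j$ (the paper's $\hat\cP_j$) via an arg-min action at each Gauss--Seidel step, chain the one-step inequalities using nonnegativity, and bound the $\infty$-norm of the product by a row-sum argument. Your treatment is in fact more carefully written than the paper's---you make the dependence of $\hat a_j$ on the intermediate iterate $W^{(j-1)}$ explicit and spell out the row-sum induction for $N_j=M_j\cdots M_1$, whereas the paper's sketch defines $\hat\pi_i$ using the original inputs and defers the contraction bound to ``same argument as Lemma~\ref{lem::GS-Anc-VIC-H1}''.
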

\begin{proof}[Proof of Lemma \ref{lem::GS-Anc-VIC-H1}]
First let $U=V, \tilde{U}=\tilde{V}, U^{\star}=V^{\star}$. For $1 \le i \le n$, we have
\begin{align*}
    T_i^{\star}V(s_i)-(1-\alpha)T_i^{\star}\tilde{V}(s_i)-\alpha T_i^{\star}V^{\star}(s_i) &\le T_i^{\pi_i}V(s_i)-(1-\alpha)T_i^{\pi_i}\tilde{V}(s_i)-\alpha T_i^{\pi_i}V^{\star}(s_i)\\
    &= \gamma\cP^{\pi_i}\para{V-(1-\alpha) \tilde{V}-\alpha V^{\star} }(s_i), 
\end{align*}
where $\pi_i$ is the greedy policy satisfying $T^{\pi_i}V=T^{\star}V$ and first inequality is from $T^{\pi_i}\tilde{V} \le T^{\star}\tilde{V}$ and $T^{\pi_i}V^{\star} \le T^{\star}V^{\star}$.
% \begin{align*}
%     &T_i^{\star}V(s_i)-(1-\alpha)T_i^{\star}\tilde{V}(s_i)-\alpha T_i^{\star}V^{\star}(s_i)\\& = \sup_{a \in \cA} \bigg\{r(s_i,a)+\gamma \mathbb{E}_{s'\sim P(\cdot\,|\,s_i,a) }\left[V(s')\right]\bigg\}- \sup_{a \in \cA} \bigg\{(1-\alpha) r(s_i,a)+(1-\alpha) \gamma\mathbb{E}_{s'\sim P(\cdot\,|\,s_i,a) }\left[\tilde{V}(s')\right]\bigg\}\\
%     &\quad - \sup_{a \in \cA} \bigg\{\alpha r(s_i,a)+\alpha  \gamma\mathbb{E}_{s'\sim P(\cdot\,|\,s_i,a) }\left[V^{\star}(s')\right]\bigg\}\\
%     & \le \gamma \sup_{a \in \cA} \bigg\{ \mathbb{E}_{s'\sim P(\cdot\,|\,s_i,a) }\left[V(s')-(1-\alpha) \tilde{V}(s')-\alpha V^{\star}(s')\right]\bigg\}.
% \end{align*}
% Let $\pi_i(\cdot\,|\,s)=\argmax_{a\in \cA} \mathbb{E}_{s'\sim P(\cdot\,|\,s,a) }\left[V(s')-(1-\alpha) \tilde{V}(s')-\alpha V^{\star}(s')\right]$ and 
Then, define matrix $\cP_i$ as
\begin{align*}
\cP_i(V)=(V_1,\dots,V_{i-1},\para{\gamma\cP^{\pi_i}(V)}_i,V_{i+1},\dots,V_n)
    \end{align*}
for $i=1,\dots,n$. Note that $\cP_i$ is nonnegative matrix since $\cP^{\pi_i}$ is nonnegative matrix. Then, we have
\[T_i^{\star}V-(1-\alpha)T_i^{\star}\tilde{V}-\alpha T_i^{\star}V^{\star}\le \cP_i(V-(1-\alpha) \tilde{V}-\alpha V^{\star}).\] 
By induction, there exist a sequence of matrices $\{\cP_i\}_{i=1,\dots,n}$ satisfying 
\[T_{GS}^{\star}V-(1-\alpha)T_{GS}^{\star}\tilde{V}-\alpha T_{GS}^{\star}V^{\star} \le  \cP_n\cdots\cP_1(V-(1-\alpha) \tilde{V}-\alpha V^{\star})\]
since $T_i^{\star} V^{\star}=V^{\star}$ for all $i$. Denote $P_{GS}$ as $\cP_n \cdots \cP_1$. Then, $P_{GS}$
is $\gamma$-contractive nonnegative matrix since
\[\sum^n_{j=1}\para{P_{GS}}_{ij}=\sum^n_{j=1}\para{\cP_i\cdots\cP_1}_{ij} \le \sum^n_{j=1}\para{\cP_i}_{ij}=\gamma\]
for $1\le i\le n$, where first equality is from definition of $\cP_l$ for $i+1\le l\le n$, inequality comes from definition of $\cP_l$ for $1\le l\le i-1$, and last equality is induced by definition of $\cP_i$. Therefore, this implies that $\infn{P_{GS}} \le \gamma$. 

If $U=Q$, with similar argument of case $U=V$, let $\pi_i$ be the greedy policy, define matrix $\cP_i$ as
\begin{align*}
\cP_i(Q)=(Q_1,\dots,Q_{i-1},\para{\gamma\cP^{\pi_i}(Q)}_i,Q_{i+1},\dots,Q_n),
\end{align*}
and denote $P_{GS}$ as $\cP_n \cdots \cP_1$. Then, $P_{GS}$
is $\gamma$-contractive nonnegative matrix satisfying
\[T_{GS}^{\star}Q-(1-\alpha)T_{GS}^{\star}\tilde{Q}-\alpha T_{GS}^{\star}Q^{\star} \le  \cP_{GS}(Q-(1-\alpha) \tilde{Q}-\alpha Q^{\star}).\]
\end{proof}

\begin{proof}[Proof of Lemma \ref{lem::GS-Anc-VIC-H2}]
    First let $U=V, \tilde{U}=\tilde{V}, \hat{U}^{\star}=\hat{V}^{\star}$. For $1 \le i \le n$, we have
\begin{align*}
    &T^{\star}_iV(s_i)-(1-\alpha) T_i^{\star}\tilde{V}(s_i)-\alpha \hat{T}_i^{\star}\hat{V}^{\star}(s_i)\\& = \sup_{a \in \cA} \bigg\{r(s_i,a)+\gamma \mathbb{E}_{s'\sim P(\cdot\,|\,s_i,a) }\left[V(s')\right]\bigg\} - \sup_{a \in \cA} \bigg\{(1-\alpha) r(s_i,a)+(1-\alpha) \gamma\mathbb{E}_{s'\sim P(\cdot\,|\,s_i,a) }\left[\tilde{V}(s')\right]\bigg\}\\
    &\quad - \inf_{a \in \cA} \bigg\{\alpha r(s_i,a)+\alpha \gamma\mathbb{E}_{s'\sim P(\cdot\,|\,s_i,a) }\left[\hat{V}^{\star}(s')\right]\bigg\}\\
    & \ge \gamma\inf_{a \in \cA} \bigg\{ \mathbb{E}_{s'\sim P(\cdot\,|\,s_i,a) }\left[V(s')-(1-\alpha) \tilde{V}(s')-\alpha \hat{V}^{\star}(s')\right]\bigg\}.
\end{align*}
Let $\hat{\pi}_i(\cdot\,|\,s)=\argmin_{a\in \cA} \mathbb{E}_{s'\sim P(\cdot\,|\,s,a) }\left[V(s')-(1-\alpha) \tilde{V}(s')-\alpha \hat{V}^{\star}(s')\right]$ and define matrix $\hat{\cP}_i$ as
\begin{align*}
\hat{\cP}_i(V)=(V_1,\dots,V_{i-1},\para{\gamma\cP^{\hat{\pi}_i}(V)}_i,V_{i+1},\dots,V_n)
    \end{align*}
for $i=1,\dots,n$. Note that $\hat{\cP}_i$ is nonnegative matrix since $\cP^{\hat{\pi}_i}$ is nonnegative matrix. Then, we have
\[\hat{\cP}_i(V-(1-\alpha) \tilde{V}-\alpha \hat{V}^{\star}) \le T_i^{\star}V-(1-\alpha)T_i^{\star}\tilde{V}-\alpha T_i^{\star}\hat{V}^{\star}.\]
By induction, there exist a sequence of matrices $\{\hat{\cP}_i\}_{i=1,\dots,n}$ satisfying 
\[\hat{\cP}_n\cdots\hat{\cP}_1(V-(1-\alpha) \tilde{V}-\alpha \hat{V}^{\star})\le T_{GS}^{\star}V-(1-\alpha)T_{GS}^{\star}\tilde{V}-\alpha \hat{T}_{GS}^{\star}\hat{V}^{\star},\]
and denote $\hat{P}_{GS}$ as $\hat{\cP}_n \cdots \hat{\cP}_1$. With same argument in proof of Lemma \ref{lem::GS-Anc-VIC-H1}, $\hat{P}_{GS}$
is $\gamma$-contractive nonnegative matrix.

If $U=Q$, with similar argument, let $\hat{\pi}_{i}(\cdot\,|\,s)=\argmin_{a\in \cA}\{ Q(s,a)-(1-\alpha)\tilde{Q}(s,a)-\alpha \hat{Q}^{\star}(s,a)\}$ and define matrix $\hat{\cP}_i$ as
\begin{align*}
\cP_i(Q)=(U_1,\dots,Q_{i-1},\para{\gamma\cP^{\hat{\pi}_i}(Q)}_i,Q_{i+1},\dots,Q_n).
\end{align*}
Denote $\hat{P}_{GS}$ as $\hat{\cP}_n \cdots \hat{\cP}_1$. Then, with same argument in proof of Lemma \ref{lem::GS-Anc-VIC-H1}, $\hat{P}_{GS}$
is $\gamma$-contractive nonnegative matrix satisfying 
\[\hat{\cP}_{GS}(Q-(1-\alpha) \tilde{Q}-\alpha \hat{Q}^{\star})\le T_{GS}^{\star}Q-(1-\alpha)T_{GS}^{\star}\tilde{Q}-\alpha \hat{T}_{GS}^{\star}\hat{Q}^{\star}.\]
\end{proof}
Next, we prove following key lemma.
\begin{lemma}\label{lem::GS-Anc-VIC}
Let $0<\gamma<1$. For the iterates $\{U^k\}_{k=0,1,\dots}$ of  \eqref{eq:GS-Anc-VI}, there exist $\gamma$-contractive nonnegative matrices $\{\cP_{GS}^l\}_{l=0,1,\dots,k}$ and $\{\hat{\cP}_{GS}^l\}_{l=0,1,\dots,k}$ such that
\begin{align*}
    T_{GS}^{\star}U^k-U^k &\le \sum_{i=1}^k \left[(\beta_i-\beta_{i-1}(1-\beta_{i}))\para{\Pi^k_{j=i+1}(1-\beta_j)}\para{\Pi^i_{l=k}\cP_{GS}^l}(U^0-U^{\star})\right]
    \\&\quad -\beta_k(U^0-U^{\star})+\para{\Pi^k_{j=1}(1-\beta_j)} \para{\Pi^0_{l=k}\cP_{GS}^l}(U^0-U^{\star}),\\
     T_{GS}^{\star}U^k-U^k &  \ge \sum_{i=1}^k \left[(\beta_i-\beta_{i-1}(1-\beta_{i}))\para{\Pi^k_{j=i+1}(1-\beta_j)}\para{\Pi^i_{l=k}\hat{\cP}_{GS}^l}(U^0-\hat{U}^{\star})\right]
    \\&\quad -\beta_k(U^0-\hat{U}^{\star})+\para{\Pi^k_{j=1}(1-\beta_j)} \para{\Pi^0_{l=k}\hat{\cP}_{GS}^l}(U^0-\hat{U}^{\star}), 
\end{align*}
where $\Pi^k_{j=k+1}(1-\beta_j)=1$ and $\beta_0=1$.
\end{lemma}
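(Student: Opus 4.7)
The plan is to mimic the inductive arguments used for Lemmas \ref{lem::Anc-VIC-1-1} and \ref{lem::Anc-VIC-1-2}, with the only substantive change being that the role of the ``greedy-policy'' bounds from Lemma \ref{lem::Anc-VIC-1-1-H1} and Lemma \ref{lem::Anc-VIC-1-1-H2} is now played by the Gauss--Seidel bounds of Lemma \ref{lem::GS-Anc-VIC-H1} and Lemma \ref{lem::GS-Anc-VIC-H2}. Since $T_{GS}^{\star}$ and $\hat{T}_{GS}^{\star}$ are $\gamma$-contractive and share fixed points $U^{\star}$ and $\hat{U}^{\star}$ with $T^{\star}$ and $\hat{T}^{\star}$ (Fact~\ref{lem::GS}), the algebraic identities underpinning the previous induction remain available.

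First I would prove the upper bound by induction on $k$. For the base case $k=0$, write $T_{GS}^{\star}U^0 - U^0 = T_{GS}^{\star}U^0 - T_{GS}^{\star}U^{\star} - (U^0 - U^{\star})$ and apply Lemma~\ref{lem::GS-Anc-VIC-H1} with $\alpha = 1$, $U = U^0$ to obtain a $\gamma$-contractive nonnegative matrix $\cP_{GS}^{0}$ with
\[
T_{GS}^{\star}U^0 - U^0 \le \cP_{GS}^{0}(U^0 - U^{\star}) - (U^0 - U^{\star}),
\]
matching the claim at $k=0$. For the inductive step, use the defining identity
\[
U^k - (1-\beta_k)U^{k-1} - \beta_k U^{\star} = \beta_k(U^0 - U^{\star}) + (1-\beta_k)\bigl(T_{GS}^{\star}U^{k-1} - U^{k-1}\bigr),
\]
plug in the inductive bound for $T_{GS}^{\star}U^{k-1} - U^{k-1}$, and call the resulting expression $\bar{U}$. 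Then invoke Lemma~\ref{lem::GS-Anc-VIC-H1} with $\alpha = \beta_k$, $U = U^k$, $\tilde{U} = U^{k-1}$ to extract a new $\gamma$-contractive nonnegative matrix $\cP_{GS}^{k}$ with
\[
T_{GS}^{\star}U^k - (1-\beta_k)T_{GS}^{\star}U^{k-1} - \beta_k T_{GS}^{\star}U^{\star} \le \cP_{GS}^{k}\,\bar{U},
\]
and rearrange using $T_{GS}^{\star}U^{k} - U^{k} = T_{GS}^{\star}U^k - (1-\beta_k)T_{GS}^{\star}U^{k-1} - \beta_k T_{GS}^{\star}U^{\star} - \beta_k(U^0 - U^{\star})$. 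The indices $i = 1,\dots,k-1$ in the inductive hypothesis pick up one extra factor $(1-\beta_k)$ and one extra matrix $\cP_{GS}^{k}$, collapsing into $\Pi^{k}_{j=i+1}(1-\beta_j)$ and $\Pi^{i}_{l=k}\cP_{GS}^{l}$; the cross terms involving $\beta_k - \beta_{k-1}(1-\beta_k)$ and $\Pi^{k}_{j=1}(1-\beta_j)\Pi^{0}_{l=k}\cP_{GS}^{l}$ appear exactly as in the proof of Lemma \ref{lem::Anc-VIC-1-1}. This yields the claimed upper bound.

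Next I would prove the lower bound by the same induction, replacing $U^{\star}$ with $\hat{U}^{\star}$ throughout and replacing Lemma~\ref{lem::GS-Anc-VIC-H1} with Lemma~\ref{lem::GS-Anc-VIC-H2}. The base case uses $T_{GS}^{\star}U^0 - U^0 = T_{GS}^{\star}U^0 - \hat{T}_{GS}^{\star}\hat{U}^{\star} - (U^0 - \hat{U}^{\star})$ together with Lemma~\ref{lem::GS-Anc-VIC-H2} at $\alpha=1$ to produce $\hat{\cP}_{GS}^{0}$. The inductive step parallels the argument above, using the identity
\[
T_{GS}^{\star}U^{k} - U^{k} = T_{GS}^{\star}U^k - (1-\beta_k)T_{GS}^{\star}U^{k-1} - \beta_k \hat{T}_{GS}^{\star}\hat{U}^{\star} - \beta_k(U^0 - \hat{U}^{\star})
\]
and applying Lemma~\ref{lem::GS-Anc-VIC-H2} at $\alpha = \beta_k$, $U = U^k$, $\tilde U = U^{k-1}$, with the fact that $\hat{\cP}_{GS}^{k}$ is nonnegative so that the inductive lower bound is preserved under multiplication.

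I do not anticipate a significant obstacle: the helper Lemmas~\ref{lem::GS-Anc-VIC-H1} and \ref{lem::GS-Anc-VIC-H2} already encapsulate the only nonlinearity-related difficulty, namely constructing $\gamma$-contractive nonnegative matrices that dominate the error $T_{GS}^{\star}U - (1-\alpha)T_{GS}^{\star}\tilde U - \alpha T_{GS}^{\star}U^{\star}$ from above (and analogously from below for $\hat{T}_{GS}^{\star}$). Once those are in hand, the proof is a line-by-line transcription of the analyses behind Lemma~\ref{lem::Anc-VIC-1-1} and Lemma~\ref{lem::Anc-VIC-1-2}, with $\gamma\cP^{l}$ and $\gamma\hat{\cP}^{l}$ replaced by $\cP_{GS}^{l}$ and $\hat{\cP}_{GS}^{l}$. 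The monotone/nonnegativity bookkeeping is automatic because each $\cP_{GS}^{l}$ and $\hat{\cP}_{GS}^{l}$ is nonnegative, so products and convex combinations preserve the inequality directions.
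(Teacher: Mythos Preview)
Your proposal is correct and follows essentially the same route as the paper: induction on $k$, with the base case handled by Lemma~\ref{lem::GS-Anc-VIC-H1} (respectively Lemma~\ref{lem::GS-Anc-VIC-H2}) at $\alpha=1$, and the inductive step handled by the same lemma at $\alpha=\beta_k$ combined with the nonnegativity of $\cP_{GS}^{k}$ (respectively $\hat{\cP}_{GS}^{k}$) to push the inductive bound through. One small remark: Lemma~\ref{lem::GS-Anc-VIC-H1} does not take an upper bound $\bar U$ as input the way Lemma~\ref{lem::Anc-VIC-1-1-H1} does; it returns $\cP_{GS}\bigl(U^k-(1-\beta_k)U^{k-1}-\beta_kU^{\star}\bigr)$ directly, and the passage to $\cP_{GS}^{k}\bar U$ is then a separate monotonicity step---but you already flag exactly this nonnegativity bookkeeping, so the argument goes through as written.
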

\begin{proof}[Proof of Lemma \ref{lem::GS-Anc-VIC}]
First, we prove first inequality in Lemma \ref{lem::GS-Anc-VIC} by induction.
 
    If $k=0$, 
    \begin{align*}
    T_{GS}^{\star}U^0-U^0&=T_{GS}^{\star}U^0-U^{\star}-(U^0-U^{\star})
    \\&=T_{GS}^{\star}U^0-T_{GS}^{\star}U^{\star}-(U^0-U^{\star})
    \\& \le \cP_{GS}^{0}(U^0-U^{\star})-(U^0-U^{\star}).
\end{align*}
where inequality comes from Lemma \ref{lem::GS-Anc-VIC-H1} with $\alpha=1, U=U^0$. 

By induction,
 \begin{align*}
    &T_{GS}^{\star}U^k-U^k\\
    &=T_{GS}^{\star}U^k-(1-\beta_k)T_{GS}^{\star}U^{k-1}-\beta_kT_{GS}^{\star}U^{\star}-\beta_k(U^0-U^{\star})\\
    &\le \cP_{GS}^k(U^k-(1-\beta_k)U^{k-1}-\beta_kU^{\star})-\beta_k(U^0-U^{\star})\\
    &= \cP_{GS}^k(\beta_k(U^0-U^{\star})+(1-\beta_k)(T_{GS}^{\star}U^{k-1}-U^{k-1}))-\beta_k(U^0-U^{\star})\\
     &\le (1-\beta_k)\cP_{GS}^k \bigg(  \sum_{i=1}^{k-1} \left[(\beta_i-\beta_{i-1}(1-\beta_{i}))\para{\Pi^{k-1}_{j=i+1}(1-\beta_j)}\para{\Pi^i_{l=k-1}\cP_{GS}^l}(U^0-U^{\star})\right]
    \\&\quad -\beta_{k-1}(U^0-U^{\star})+\para{\Pi^{k-1}_{j=1}(1-\beta_j)} \para{\Pi^0_{l=k-1}\cP_{GS}^l}(U^0-U^{\star})\bigg)\\& \quad +\beta_{k}\cP_{GS}^k(U^0-U^{\star})-\beta_{k}(U^0-U^{\star})\\
    % & = (1-\beta_k)\cP_{GS}^k\sum_{i=1}^{k-1} \left[(\beta_i-\beta_{i-1}(1-\beta_{i}))\para{\Pi^{k-1}_{j=i+1}(1-\beta_j)}\para{\Pi^{i}_{l=k-1}\cP_{GS}^l}(U^0-U^{\star})\right]
    % \\&\quad -\beta_{k-1}(1-\beta_k)\cP_{GS}^k(U^0-U^{\star})+(1-\beta_k)\cP_{GS}^k\para{\Pi^{k-1}_{j=1}(1-\beta_j)} \para{\Pi^{0}_{l=k-1}\cP_{GS}^l}(U^0-U^{\star})\\
    % &\quad +\beta_k\cP_{GS}^k\para{U^{0}-U^{\star}}-\beta_k(U^0-U^{\star})\\
    & = \sum_{i=1}^k \left[(\beta_i-\beta_{i-1}(1-\beta_{i}))\para{\Pi^k_{j=i+1}(1-\beta_j)}\para{\Pi^i_{l=k}\cP_{GS}^l}(U^0-U^{\star})\right]
    \\&\quad -\beta_k(U^0-U^{\star})+\para{\Pi^k_{j=1}(1-\beta_j)} \para{\Pi^0_{l=k}\cP_{GS}^l}(U^0-U^{\star})
\end{align*}
where the first inequality comes from Lemma \ref{lem::GS-Anc-VIC-H1} with $\alpha=\beta_k, U=U^k, \tilde{U}=U^{k-1}$, and second inequality comes from nonnegativeness of $\cP_{GS}^k$.

First, we prove second inequality in Lemma \ref{lem::GS-Anc-VIC} by induction.

If $k=0$, 
    \begin{align*}
    T_{GS}^{\star}U^0-U^0&=T_{GS}^{\star}U^0-\hat{U}^{\star}-(U^0-\hat{U}^{\star})
    \\&=T_{GS}^{\star}U^0-\hat{T}_{GS}^{\star}\hat{U}^{\star}-(U^0-\hat{U}^{\star})
    \\& \ge \hat{\cP}_{GS}^{0}(U^0-\hat{U}^{\star})-(U^0-\hat{U}^{\star}),
\end{align*}
where inequality comes from Lemma \ref{lem::GS-Anc-VIC-H2} with $\alpha=1, U=U^0$. 

By induction,
 \begin{align*}
    &T_{GS}^{\star}U^k-U^k\\
    &=T_{GS}^{\star}U^k-(1-\beta_k)T_{GS}^{\star}U^{k-1}-\beta_k\hat{T}_{GS}^{\star}\hat{U}^{\star}-\beta_k(U^0-\hat{U}^{\star})\\
    &\ge \hat{\cP}_{GS}^k(U^k-(1-\beta_k)U^{k-1}-\beta_k\hat{U}^{\star})-\beta_k(U^0-\hat{U}^{\star})\\
    &= \hat{\cP}_{GS}^k(\beta_k(U^0-\hat{U}^{\star})+(1-\beta_k)(T_{GS}^{\star}U^{k-1}-U^{k-1}))-\beta_k(U^0-\hat{U}^{\star})\\
     &\ge (1-\beta_k)\hat{\cP}_{GS}^k \bigg(  \sum_{i=1}^{k-1} \left[(\beta_i-\beta_{i-1}(1-\beta_{i}))\para{\Pi^{k-1}_{j=i+1}(1-\beta_j)}\para{\Pi^i_{l=k-1}\hat{\cP}_{GS}^l}(U^0-\hat{U}^{\star})\right]
    \\&\quad -\beta_{k-1}(U^0-\hat{U}^{\star})+\para{\Pi^{k-1}_{j=1}(1-\beta_j)} \para{\Pi^0_{l=k-1}\hat{\cP}_{GS}^l}(U^0-\hat{U}^{\star})\bigg)\\& \quad +\beta_{k}\hat{\cP}_{GS}^k(U^0-\hat{U}^{\star})-\beta_{k}(U^0-\hat{U}^{\star})\\
    % & = (1-\beta_k)\hat{\cP}_{GS}^k\sum_{i=1}^{k-1} \left[(\beta_i-\beta_{i-1}(1-\beta_{i}))\para{\Pi^{k-1}_{j=i+1}(1-\beta_j)}\para{\Pi^{i}_{l=k-1}\hat{\cP}_{GS}^l}(U^0-\hat{U}^{\star})\right]
    % \\&\quad -\beta_{k-1}(1-\beta_k)\hat{\cP}_{GS}^k(U^0-\hat{U}^{\star})+(1-\beta_k)\hat{\cP}_{GS}^k\para{\Pi^{k-1}_{j=1}(1-\beta_j)} \para{\Pi^{0}_{l=k-1}\hat{\cP}_{GS}^l}(U^0-\hat{U}^{\star})\\
    % &\quad +\beta_k\hat{\cP}_{GS}^k\para{U^{0}-\hat{U}^{\star}}-\beta_k(U^0-\hat{U}^{\star})\\
    & = \sum_{i=1}^k \left[(\beta_i-\beta_{i-1}(1-\beta_{i}))\para{\Pi^k_{j=i+1}(1-\beta_j)}\para{\Pi^i_{l=k}\hat{\cP}_{GS}^l}(U^0-\hat{U}^{\star})\right]
    \\&\quad -\beta_k(U^0-\hat{U}^{\star})+\para{\Pi^k_{j=1}(1-\beta_j)} \para{\Pi^0_{l=k}\hat{\cP}_{GS}^l}(U^0-\hat{U}^{\star})
\end{align*}
where the first inequality comes from Lemma \ref{lem::GS-Anc-VIC-H2} with $\alpha=\beta_k, U=U^k, \tilde{U}=U^{k-1}$, and nonnegativeness of $\hat{\cP}_{GS}^k$.
\end{proof}

Now, we prove the first rate in Theorem \ref{thm::GS-Anc-VI}.
\begin{proof}[Proof of first rate in Theorem \ref{thm::GS-Anc-VI}]
Since $B_1 \le A \le B_2$ implies $ \infn{A} \le \sup\{\infn{B_1}, \infn{B_2}\}$ for $A,B \in \cF(\cX)$, if we take $\infn{\cdot}$ right side of first inequality in Lemma \ref{lem::GS-Anc-VIC}, we have 
\begin{align*}
    &\sum_{i=1}^k \abs{\beta_i-\beta_{i-1}(1-\beta_{i})}\para{\Pi^k_{j=i+1}(1-\beta_j)}\infn{\para{\Pi^i_{l=k}\cP_{GS}^{l}}(U^0-U^{\star})}
    \\&\quad +\beta_k\infn{U^0-U^{\pi}}+\para{\Pi^k_{j=1}(1-\beta_j)} \infn{\para{\Pi^0_{l=k}\cP_{GS}^{l}}(U^0-U^{\star})}\\&\le \para{\sum_{i=1}^k \gamma^{k-i+1}\abs{\beta_i-\beta_{i-1}(1-\beta_{i})}\para{\Pi^k_{j=i+1}(1-\beta_j)}+\beta_k+\gamma^{k+1}\Pi^k_{j=1}(1-\beta_j)}\\&\quad\infn{U^0-U^{\star}}\\&=\frac{\para{\gamma^{-1}-\gamma}\para{1+2\gamma-\gamma^{k+1}}}{\para{\gamma^{k+1}}^{-1}-\gamma^{k+1}}\infn{U^0-U^{\star}},
\end{align*}
where the first inequality comes from triangular inequality, second inequality is from 
$\gamma$-contraction of $\cP_{GS}^l$, and last equality comes from calculations. If we take $\infn{\cdot}$ right side of second inequality in Lemma \ref{lem::GS-Anc-VIC}, we have
\begin{align*}
    &\sum_{i=1}^k \abs{\beta_i-\beta_{i-1}(1-\beta_{i})}\para{\Pi^k_{j=i+1}(1-\beta_j)}\infn{\para{\Pi^i_{l=k} \hat{\cP}_{GS}^{l}}(U^0-\hat{U}^{\star})}
    \\&\quad +\beta_k\infn{U^0-U^{\pi}}+\para{\Pi^k_{j=1}(1-\beta_j)} \infn{\para{\Pi^0_{l=k}\hat{\cP}_{GS}^{l}}(U^0-\hat{U}^{\star})}\\&\le \para{\sum_{i=1}^k \gamma^{k-i+1}\abs{\beta_i-\beta_{i-1}(1-\beta_{i})}\para{\Pi^k_{j=i+1}(1-\beta_j)}+\beta_k+\gamma^{k+1}\Pi^k_{j=1}(1-\beta_j)}\\&=\frac{\para{\gamma^{-1}-\gamma}\para{1+2\gamma-\gamma^{k+1}}}{\para{\gamma^{k+1}}^{-1}-\gamma^{k+1}}\infn{U^0-\hat{U}^{\star}},
\end{align*}
where the first inequality comes from triangular inequality, second inequality is from 
from 
$\gamma$-contraction of $\hat{\cP}_{GS}^l$, and last equality comes from calculations. Therefore, we conclude  
\begin{align*}
    \infn{T_{GS}^{\star}U^k-U^k} &\le \frac{\para{\gamma^{-1}-\gamma}\para{1+2\gamma-\gamma^{k+1}}}{\para{\gamma^{k+1}}^{-1}-\gamma^{k+1}}\max{\left\{\infn{U^0-U^{\star}},\infn{U^0-\hat{U}^\star}\right\}}.
\end{align*}
\end{proof}

For the second rates of Theorem \ref{thm::GS-Anc-VI}, we introduce following lemma.
\begin{lemma}\label{lem::Inital_cond_2}
Let $0<\gamma<1$. For the iterates $\{U^k\}_{k=0,1,\dots}$ of \eqref{eq:GS-Anc-VI}, if $U^0 \le T_{GS}^{\star}U^0$,  then $U^{k-1} \le U^k \le T^{\star}_{GS}U^{k-1} \le T^{\star}_{GS}U^k \le U^{\star}$ for $1 \le k$. Also, if $U^0 \ge T^{\star}_{GS}U^0$, then $U^{k-1} \ge U^k \ge T^{\star}_{GS}U^{k-1} \ge T^{\star}_{GS}U^k \ge U^{\star}$ for $1 \le k$.
\end{lemma}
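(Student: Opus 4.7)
The plan is to mirror the proof of Lemma~\ref{lem::Inital_cond} almost verbatim, with $T$ replaced by $T_{GS}^{\star}$. Inspecting that earlier proof shows that the only properties of $T$ actually used are (i) monotonicity in the sense of Lemma~\ref{lem::monotonicity}, and (ii) $\gamma$-contractiveness (to pass to the fixed point in the limit $m \to \infty$). Property~(ii) for $T_{GS}^{\star}$ is provided by Fact~\ref{lem::GS}, so I only need to establish (i) for $T_{GS}^{\star}$ before reusing the rest of the argument.

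For monotonicity, suppose $U \le \tilde U$. Since $T^{\star}$ is monotone by Lemma~\ref{lem::monotonicity}, $(T^{\star}U)_j \le (T^{\star}\tilde U)_j$ for every coordinate $j$, so each single-coordinate update $T_j^{\star}$ preserves the ordering (the updated coordinate satisfies this bound; the remaining coordinates are $U_i$ and $\tilde U_i$, which already satisfy it). Composing, $T_{GS}^{\star} = T_n^{\star} \cdots T_1^{\star}$ is monotone as well.

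For the first claim, I run induction on $k$. The base case $k = 1$ follows because $U^1 = \beta_1 U^0 + (1-\beta_1) T_{GS}^{\star} U^0$ is a convex combination of $U^0$ and $T_{GS}^{\star} U^0$, so the assumption $U^0 \le T_{GS}^{\star} U^0$ sandwiches $U^1$ between them, and monotonicity of $T_{GS}^{\star}$ then yields $T_{GS}^{\star} U^0 \le T_{GS}^{\star} U^1$. For the inductive step, the inequality $U^k \le T_{GS}^{\star} U^{k-1}$ is immediate from $U^0 \le T_{GS}^{\star} U^{k-1}$ (inherited from the induction hypothesis) and the convex-combination form of $U^k$. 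The reverse $U^{k-1} \le U^k$ is the step that genuinely uses $\beta_k \le \beta_{k-1}$: rewriting
\begin{align*}
U^k - U^{k-1} &= (\beta_k - \beta_{k-1})(U^0 - T_{GS}^{\star} U^{k-1}) + (1-\beta_{k-1})(T_{GS}^{\star} U^{k-1} - T_{GS}^{\star} U^{k-2}),
\end{align*}
both terms are nonnegative (the first because both factors are $\le 0$; the second by the induction hypothesis $U^{k-2} \le U^{k-1}$ combined with monotonicity). Monotonicity then gives $T_{GS}^{\star} U^{k-1} \le T_{GS}^{\star} U^k$, and $\gamma$-contractiveness together with $U^k \le T_{GS}^{\star} U^k$ yields $U^k \le \lim_{m\to\infty} (T_{GS}^{\star})^m U^k = U^{\star}$.

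The second claim, under $U^0 \ge T_{GS}^{\star} U^0$, is entirely symmetric: every inequality flips, monotonicity of $T_{GS}^{\star}$ is used in the same places, and the final bound $U^k \ge U^{\star}$ again follows by iterating $T_{GS}^{\star}$ to its fixed point. I do not anticipate any technical obstacle, since once monotonicity of $T_{GS}^{\star}$ is in hand the argument is a routine transcription of Lemma~\ref{lem::Inital_cond}; the Gauss--Seidel structure enters only through the one-line monotonicity check above.
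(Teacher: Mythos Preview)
Your proposal is correct and follows essentially the same approach as the paper: establish monotonicity of $T_{GS}^{\star}$ via monotonicity of each coordinate update $T_j^{\star}$, invoke Fact~\ref{lem::GS} for $\gamma$-contractiveness, and then reuse the induction from Lemma~\ref{lem::Inital_cond}. The paper simply refers back to that earlier argument, while you spell out the inductive step (with a slightly different but equivalent algebraic rearrangement for $U^{k-1}\le U^k$); otherwise there is no substantive difference.
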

\begin{proof}
By Fact \ref{lem::GS}, $\lim_{m\rightarrow\infty} T_{GS}^{\star}U = U^{\star}$. By definition, if $U \le \tilde{U}$, $T^{\star}_iU \le T_i^{\star}\tilde{U}$ for any $1\le i \le n$ and this implies that if $U \le \tilde{U}$, then $T_{GS}^{\star}U \le T_{GS}^{\star}\tilde{U}$. Hence, with same argument in proof of Lemma \ref{lem::Inital_cond}, we can obtain desired results.   
    \end{proof}

Now, we prove the second rates in Theorem \ref{thm::GS-Anc-VI}.

\begin{proof}[Proof of second rates in Theorem \ref{thm::GS-Anc-VI}]
If $U^0 \le T_{GS}^{\star}U^0$, then $U^0-U^{\star} \le 0$ and $U^k \le T_{GS}^{\star}U^k$ by Lemma \ref{lem::Inital_cond_2}. Hence, by Lemma \ref{lem::GS-Anc-VIC}, we get 
    \begin{align*}
        0 &\le T_{GS}^{\star}U^k-U^k\\&=\sum_{i=1}^k\left[\para{\beta_i-\beta_{i-1}(1-\beta_{i})}\para{\Pi^k_{j=i+1}(1-\beta_j)}\para{\Pi^i_{l=k}\cP_{GS}^l}(U^0-U^{\star})\right]\\
    &\quad -\beta_k(U^0-U^{\pi})+\para{\Pi^k_{j=1}(1-\beta_j)} \para{\Pi^0_{l=k}\cP_{GS}^{l}}(U^0-U^{\star})\\
    &\le \sum_{i=1}^k\left[\para{\beta_i-\beta_{i-1}(1-\beta_{i})}\para{\Pi^k_{j=i+1}(1-\beta_j)}\para{\Pi^i_{l=k}\cP_{GS}^l}(U^0-U^{\star})\right] -\beta_k(U^0-U^{\star}),  
    \end{align*}
where the second inequality follows from $\para{\Pi^k_{j=1}(1-\beta_j)} \para{\Pi^i_{l=k}\cP_{GS}^l}(U^0-U^{\star})\le 0$. Taking $\infn{\cdot}$-norm both sides, we have 
\[\infn{T_{GS}^{\star}U^k-U^k} \le\frac{\para{\gamma^{-1}-\gamma}\para{1+\gamma-\gamma^{k+1}}}{\para{\gamma^{k+1}}^{-1}-\gamma^{k+1}}\infn{U^0-U^{\star}}.\]
Otherwise, if $U^0 \ge T_{GS}^{\star}U^0$, $U^k \ge T_{GS}^{\star}U^k$ and $U^0 \ge U^{\star} \ge \hat{U}^{\star}$  by Lemma \ref{lem::Inital_cond_2} and \ref{lem::anti-fixed-point}. Thus, by Lemma \ref{lem::GS-Anc-VIC}, we get 
    \begin{align*}
        0 &\ge T_{GS}^{\star}U^k-U^k \\&\ge\sum_{i=1}^k\left[\para{\beta_i-\beta_{i-1}(1-\beta_{i})}\para{\Pi^k_{j=i+1}(1-\beta_j)}\para{\Pi^i_{l=k}\hat{\cP}_{GS}^l}(U^0-\hat{U}^{\star})\right]-\beta_k(U^0-\hat{U}^{\star}),
    \end{align*}
where the second inequality follows from $0 \le \para{\Pi^k_{j=1}(1-\beta_j)} \para{\Pi^0_{l=k}\hat{\cP}_{GS}^l}(U^0-\hat{U}^{\star})$. Taking $\infn{\cdot}$-norm both sides, we have
\[\infn{T_{GS}^{\star}U^k-U^k} \le \frac{\para{\gamma^{-1}-\gamma}\para{1+\gamma-\gamma^{k+1}}}{\para{\gamma^{k+1}}^{-1}-\gamma^{k+1}}\infn{U^0-\hat{U}^{\star}}.\]
\end{proof}

% \vspace{0.2in}

% \newpage

\section{Broader Impacts}

Our work focuses on the theoretical aspects of reinforcement learning. There are no negative social impacts that we anticipate from our theoretical results.

\section{Limitations}
Our analysis concerns value iteration. While value iteration is of theoretical interest, the analysis of value iteration is not sufficient to understand modern deep reinforcement learning practices.

\end{document}